\renewcommand*{\backref}[1]{}
\renewcommand*{\backrefalt}[4]{%
    \ifcase #1 (Not cited.)%
    \or        (Cited on page~#2.)%
    \else      (Cited on pages~#2.)%
    \fi}
\newcommand{\mat}[1]{\mathbf{#1}}
\renewcommand{\vec}[1]{\boldsymbol{#1}}
\theoremstyle{plain}
\newtheorem{theorem}{Theorem}
\newtheorem{proposition}[theorem]{Proposition}
\newtheorem{example}[theorem]{Example}
\newtheorem{definition}[theorem]{Definition}
\newtheorem{remark}[theorem]{Remark}
\icmltitlerunning{Gaussian Processes on Cellular Complexes}
\begin{document}

\twocolumn[
\icmltitle{Gaussian Processes on Cellular Complexes}



\icmlsetsymbol{equal}{*}

\begin{icmlauthorlist}
\icmlauthor{Mathieu Alain}{equal,yyy}
\icmlauthor{So Takao}{equal,yyy,xxx}
\icmlauthor{Brooks Paige}{yyy}
\icmlauthor{Marc Peter Deisenroth}{yyy}
\end{icmlauthorlist}

\icmlaffiliation{yyy}{Centre for Artificial Intelligence, University College London, London, UK.}
\icmlaffiliation{xxx}{Department of Computing and Mathematical Sciences, California Institute of Technology, Pasadena, CA, USA}

\icmlcorrespondingauthor{Mathieu Alain}{mathieu.alain.21@ucl.ac.uk}

\icmlkeywords{Machine Learning, ICML}

\vskip 0.3in
]



\printAffiliationsAndNotice{\icmlEqualContribution} 

\begin{abstract}
In recent years, there has been considerable interest in developing machine learning models on graphs to account for topological inductive biases.
In particular, recent attention has been given to Gaussian processes on such structures since they can additionally account for uncertainty.
However, graphs are limited to modelling relations between two vertices. In this paper, we go beyond this dyadic setting and consider polyadic relations that include interactions between vertices, edges and one of their generalisations, known as cells.
Specifically, we propose Gaussian processes on cellular complexes, a generalisation of graphs that captures interactions between these higher-order cells. One of our key contributions is the derivation of two novel kernels, one that generalises the graph Matérn kernel and one that additionally mixes information of different cell types. 
\end{abstract}

\section{Introduction}
The abundance of graph-structured problems in science and engineering stimulates the development of machine learning (ML) models, such as graph neural networks (GNNs) \citep{Scarselli2008} and graph kernel machines \citep{smola2003kernels}.
The former has achieved great success in a broad range of tasks, from molecular docking \citep{Corso2023} to text summarisation \citep{Fernandes2019}. However, GNNs do not provide predictive uncertainty, which is an essential feature in decision-making applications. Recent work on Gaussian processes (GPs) defined on graphs \citep{borovitskiy2020matern, Nikitin2022, Opolka2022, Zhi2023} takes the latter approach, which naturally accounts for uncertainty quantification, but may lack the expressibility of GNNs.

Although graphs are an invaluable data structure, one of their main limitations is that they cannot represent interactions beyond the dyadic setting (i.e., between two vertices). However, these interactions do exist and have important applications, such as group interactions in social networks \citep{Alvarez-Rodriguez2021}, neuronal dynamics in cortexes \citep{Yu2011}, and trigenic interactions in gene networks \citep{Kuzminz2018}. Cellular complexes are a generalisation of graphs that have the ability to model such `polyadic' interactions \citep{Hatcher2001} (see Figure \ref{fig:illustrations_graph_cc}). For this reason, they are gradually being used in ML \citep{Hajij2020, Bodnar2021} and signal processing \citep{Barbarossa2020, Roddenberry2022}. 
Although these models have helped to expand the variety of problems one can tackle with ML, as with GNNs, they typically do not quantify uncertainty.

\begin{figure}[t]
    \centering
    \begin{subfigure}[t]{0.38\hsize}
        \centering  \includegraphics[height=1.8cm]{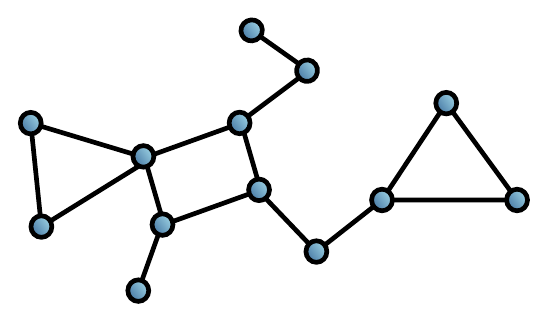}
        \caption[Network2]%
        {{Graph}}    
        \label{fig:graph}
    \end{subfigure}
    \hfill
    \begin{subfigure}[t]{0.6\hsize} 
        \centering 
\includegraphics[height=1.2cm]{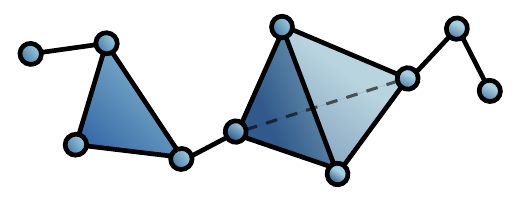}
        \caption[]%
        {{Simplicial complex}}    
        \label{fig:sc}
    \end{subfigure}
    \\
    \begin{subfigure}[t]{0.5\hsize} 
        \centering 
\includegraphics[height=1.7cm]{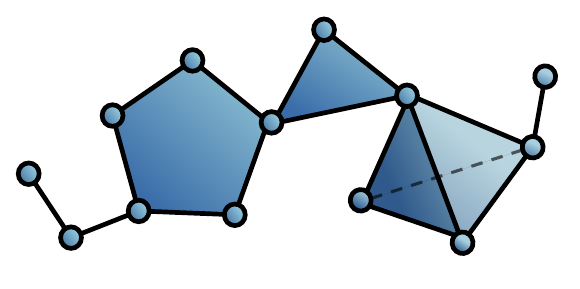}
        \caption[]%
        {{Cellular complex}}    
        \label{fig:cc}
    \end{subfigure}
    \caption
    {Graph, simplicial complex, and cellular complex (specifically: polyhedral complex). A simplicial complex cannot represent arbitrary polygons like the pentagon in (\subref{fig:cc}). }
    \label{fig:illustrations_graph_cc}
\end{figure}

In this paper, we fill this gap by proposing GPs defined on cellular complexes, which enables predictions on different types of cells, such as vertices \emph{and} edges, while equipping them with uncertainty that is consistent with prior assumptions about the model and data. In particular, we propose the cellular Matérn kernel, a generalisation of the graph Matérn kernel \citep{borovitskiy2020matern}, which enables modelling of signals on arbitrary cell types. By fixing an orientation on each cell (i.e., a reference `direction'), this produces predictions on cells that are {\em directed}, allowing us to consider signals that have an associated direction.

Furthermore, prompted by settings with strong correlations between data supported on different cell types, we also develop a new kernel, the reaction-diffusion kernel, which leverages the Dirac operator \citep{Bianconi2021, calmon2023dirac} to mix information between different types of cells.
This enables us to model various types of signal jointly so that inference on one type of cell can help the other, and vice versa. Interestingly, the Dirac operator has also been used in higher-order neural networks \citep{Battiloro2023}. 

Our main contribution is to define GPs on cellular complexes, which allows us to extend the modelling capability of graph GPs in the following novel ways:
\begin{itemize}
    \item Model quantities supported on the edges, and on higher-order `cells' such as volumes.
    \item Handles orientation on cells naturally. In particular, we can make directed predictions.
    \item Allows for joint modelling of data supported on vertices, edges and higher-order cells.
\end{itemize}

\section{Gaussian Processes}\label{sec:GPs}
A {\em Gaussian process} (GP) on a set $X$ is a random function $f : X \rightarrow \mathbb{R}$, such that for any finite collection of points $x_1, \ldots, x_N \in X$, the random vector $(f(x_1), \ldots, f(x_N)) \in \mathbb{R}^N$ is jointly Gaussian.
GPs are defined by a mean function $\mu : X \rightarrow \mathbb{R}$ and a kernel $\kappa : X \times X \rightarrow \mathbb{R}$, which satisfy $\mu(x) = \mathbb{E}[f(x)]$ and $\kappa(x, x') = \mathrm{Cov}[f(x), f(x')]$, respectively, for any $ x, x' \in X$.

Let $(\vec x, \vec y):=\{(x_i, y_i)\}_{i=1}^N$ be training data, where $y_i = f(x_i) + \epsilon_i$, $\epsilon_i \sim \mathcal{N}(0, \sigma^2)$. We can make inference about $\vec{f}_* := f(\vec{x}_*)$ at arbitrary test points $\vec{x}_*$ by computing the posterior predictive mean and covariance 
\begin{align}
    &\vec{\mu}_{\mat{f}_*|\mat{y}} = \vec{\mu}_{\mat{f}_*} + \mat{K}_{\mat{f}_*\mat{f}}(\mat{K}_{\mat{f}\mat{f}} + \sigma^2 \mat{I})^{-1}(\vec{y} - \vec{\mu}_{\mat{f}}) \label{eq:posterior-mean} \\
    &\mat{\Sigma}_{\mat{f}_*|\mat{y}} = \mat{K}_{\mat{f}_*\mat{f}_*} - \mat{K}_{\mat{f}_*\mat{f}}(\mat{K}_{\mat{f}\mat{f}} + \sigma^2 \mat{I})^{-1}\mat{K}_{\mat{f}\mat{f}_*}, \label{eq:posterior-cov}
\end{align}
respectively, where $\vec{\mu}_{\mat{f}} = \mu(\vec{x}), \vec{\mu}_{\mat{f}_*} = \mu(\vec{x}_*)$ denotes the mean and $\mat{K}_{\mat{f}\mat{f}} = \kappa(\vec{x}, \vec{x})$, $\mat{K}_{\mat{f}_*\mat{f}_*} = \kappa(\vec{x}_*, \vec{x}_*)$, $\mat{K}_{\mat{f}\mat{f}_*} = \kappa(\vec{x}, \vec{x}_*)$ denotes the covariance and cross-covariance of $f$ at $\vec x$ and $\vec x_*$.

\subsection{Gaussian processes on graphs}
While most existing GPs are defined on continuous domains, in this work, we are interested in graphs $\mathcal{G} = (V, E)$, where we take the input set $X$ to be the set of vertices $V$, and the edges $E$ to model the `proximity' between two vertices. As a mathematical object, a GP in this sense is identical to a multivariate Gaussian $\vec{f} \in \mathbb{R}^{|V|}$, whose indices are the vertices of the graph. However, the extra information provided by the edges allows one to impose a more rigid correlation structure, whereby two vertices are more strongly correlated if they are `closer' in the graph. Thus, this enables $\vec{f}$ to take on the characteristics of a continuous GP while being discrete.

A common way of encoding graph structures into multivariate Gaussians is by imposing specific sparsity patterns in the precision matrix, as seen in Gaussian Markov random fields \cite{rue2005gaussian}. For example, the Matérn GP on graphs \cite{Borovitskiy2021} is formally defined as $\vec{f} \sim \mathcal{N}(\vec{0}, (\frac{2\nu}{\ell^2} + \mat{\Delta})^{-\nu})$, where $\mat{\Delta}$ is the graph Laplacian matrix, a discrete analogue of the standard Laplacian operator on $\mathbb{R}^n$. If we observe the sparsity pattern of the corresponding precision matrix $(\frac{2\nu}{\ell^2} + \mat{\Delta})^{\nu}$ for $\nu \in \mathbb{N}$, we see that each vertex $x$ is connected to vertices that are within a radius $\nu$ of $x$ in the graph, controlling the `smoothness' of the process. More generally, we can impose a graph structure by penalising frequencies in the spectral domain of the kernel \cite{smola2003kernels}. However, graphs are limited in the data they can support. If the data contains polyadic interactions, then a more general structure than graphs is needed. 

\section{Modelling with Cellular Complexes}\label{sec:modelling-with-CCs}
Cellular complexes generalise graphs by incorporating higher-order interactions via `cells', extending the dyadic relation modelled by edges on a graph.
Concretely, a $k$-{\em cell} is a topological space that is homeomorphic to the unit disk $D^k := \{x \in \mathbb{R}^k : \|x\| < 1\}$. 
We will employ the standard notation $\partial$ to refers to the boundary of a topological space. A finite cellular complex $X$ of dimension $n < \infty$ is constructed iteratively, as follows \citep{Hatcher2001}:

{\bf (Step   $0$)} Start with a collection $X^0 = \{e^0_\alpha\}_{\alpha=1}^{N_0}$ of $0$-cells (i.e., points), called the $0$-skeleton.
\\[2mm]
{\bf (Steps $k=1, \ldots, n$)} Take a collection $\{e^k_\alpha\}_{\alpha=1}^{N_k}$ of $k$-cells and glue their boundaries to points in $X^{k-1}$ via a continuous {\em attaching map} $\phi_\alpha^k : \partial e^k_\alpha \rightarrow X^{k-1}$. The resulting space $X^k$ is the $k$-skeleton (see Figure \ref{fig:cell_complex_construction}).
\\[2mm]
{\bf (Step $n+1$)} Define the cellular complex to be the topological space $X := \cup_{k=0}^n X^k$.

\begin{figure}[t]
\centering
\includegraphics[width=0.32\textwidth]{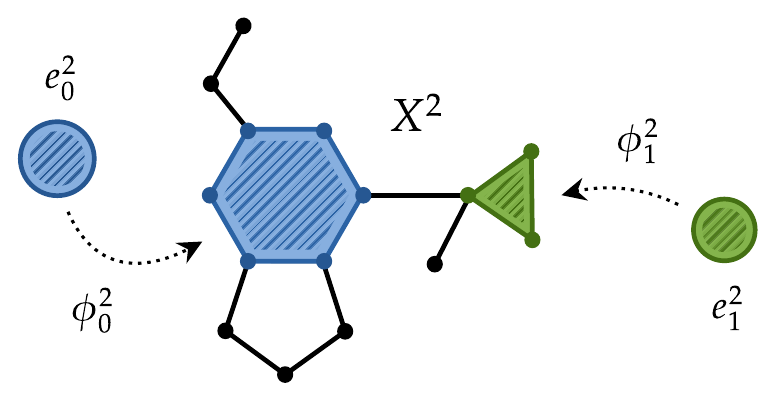}
\caption{ A cellular complex is constructed by attaching boundaries of $k$-cells $e_\alpha^k$ to the $(k-1)$-skeleton $X^{k-1}$ via a continuous map $\phi^k_\alpha$.}
\label{fig:cell_complex_construction}
\end{figure}

In the special case where the attaching maps $\phi_\alpha^k: \partial e^k_\alpha \rightarrow X^{k-1}$ are embeddings, we say that the cellular complex is \emph{regular}. This omits pathological cases, such as when boundaries of a $k$-cell collapse to a point or fold into itself. Hereafter, when we refer to cellular complexes, we will always assume that they are regular and finite. Further, when we fix an orientation on each cell (viewed as a topological manifold), we say that $X$ is \emph{oriented}, which we will also assume hereafter.

\begin{example}
A directed/undirected graph (see Figure \ref{fig:graph}) is a one-dimensional oriented/non-oriented cellular complex, where the vertices are the $0$-cells, the edges are the $1$-cells, and the attaching maps associate the two endpoints of an edge to a pair of vertices.
\end{example}

\begin{example}
Another important class are the simplicial complexes (see Figure \ref{fig:sc}), where the $k$-cells are taken to be the $k$-simplices and the attaching maps glue the boundaries 
of two $k$-simplices homeomorphically. Simplices are cells that contain all their sub-cells. 
\end{example}

\subsection{Chains and Cochains}
Chains and cochains are key concepts on cellular complexes that formalise the notion of paths and functions over $k$-cells, respectively. Given an $n$-dimensional cellular complex $X$, we define a {\em $k$-chain} $c_k$ for $0 \leq k \leq n$ as a formal sum of $k$-cells
\begin{align}
c_k = \sum_{\alpha=1}^{N_k} n_\alpha e^k_\alpha, \quad n_\alpha \in \mathbb{Z}.
\end{align}
Intuitively, this generalises the notion of directed paths on a graph, as we show in Figure \ref{fig:chains}. We denote the set of all $k$-chains on $X$ by $C_k(X)$, which has the algebraic structure of a free Abelian group\footnote{One may interpret this as a vector space with coefficients restricted to the integers.} with basis $\{e^k_\alpha\}_{\alpha=1}^{N_k}$.

\begin{figure}[t]
\centering
\includegraphics[width=0.3\textwidth]{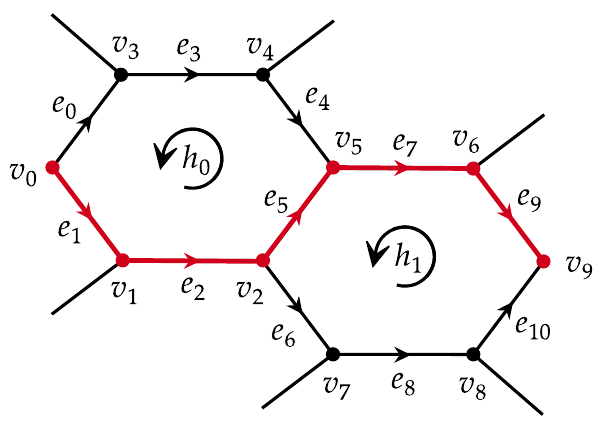}
\caption{ The path going from vertex $v_0$ to $v_9$ (in \textcolor{red}{red}) can be expressed as a $1$-chain $c = e_1 + e_2 + e_5 + e_7 + e_9$, while $-c$ represents the reverse path going from $v_9$ to $v_0$. Boundaries of cells can be expressed as chains, whose direction is consistent with the cell orientation. For example, $\partial h_0 = e_1 + e_2 + e_5 - e_4 - e_3 - e_0$. }
\label{fig:chains}
\end{figure}

For any cellular complex $X$, there is a canonical operation defined on chains called the {\em boundary operator} $\partial_k: C_k(X) \rightarrow C_{k-1}(X)$, associating the boundary of a $k$-chain to a $(k-1)$-chain. This is defined as a linear map\footnote{\label{homomorphism} A group homomorphism to be more precise.}
\begin{align}
    \partial_k \left(\sum_{\alpha=1}^{N_k} n_a e^k_\alpha\right) = \sum_{\alpha=1}^{N_k} n_a \partial e^k_\alpha,
\end{align}
where $\partial e^k_\alpha$ is the boundary of the $k$-cell $e^k_\alpha$, expressed as a $(k-1)$-chain. The cycle direction of $\partial e^k_\alpha$ must agree with the orientation of $e^k_\alpha$, as illustrated in Figure \ref{fig:chains} (see Appendix \ref{app:orientation} for more details).

Parallel to chains, there exist dual objects known as {\em cochains} on a cellular complex $X$, defined as follows.

\begin{definition}
A $k$-cochain on $X$ is a linear map$^2$ $f : C_k(X) \rightarrow \mathbb{R}$ assigning real numbers to $k$-chains, i.e.,
\begin{align} \label{eq:cochain-def}
    f\Big(\sum_{\alpha=1}^{N_k} n_\alpha e^k_\alpha\Big) = \sum_{\alpha=1}^{N_k} n_\alpha f(e_\alpha^k),
\end{align}
where $f(e_\alpha^k) \in \mathbb{R}$ is the value of $f$ at cell $e_\alpha^k$.
\end{definition}

The space of $k$-cochains, denoted $C^k(X)$ (with a superscript), forms a real vector space with dual basis $\{(e_\alpha^k)^*\}_{\alpha=1}^{N_k}$, satisfying $(e_\alpha^k)^*e_\beta^k = \delta_{\alpha\beta}$. The boundary operators on chains naturally induce analogous operators on the space of cochains, referred to as the {\em coboundary operators}, defined as follows.
\begin{definition}
    For $0 \leq k < n$, the coboundary operator $d_k : C^k(X) \rightarrow C^{k+1}(X)$ is defined via the relation
    \begin{align} \label{eq:coboundary-def}
        d_k f(c) = f(\partial_{k+1} c)
    \end{align}
    for all $f \in C^k(X)$ and $c \in C_{k+1}(X)$.
    To simplify our presentation, we also define $d_k f \equiv 0$ for $k \in \{-1, n\}$.
\end{definition}

\subsection{Hodge Laplacian and Dirac Operator}
We introduce a generalisation of the graph Laplacian to cellular complexes, which will be used in our construction of kernels.
For each $k$, let $\{w_\alpha^k\}_{\alpha=1}^{N_k}$ be a set of real, positive weights. Then, for any $f, g \in C^k(X)$, we define the weighted $L^2$-inner product as
\begin{align} \label{eq:k-cochain-inner-product}
    \left<f, g\right>_{L^2(\vec{w}^k)} := \sum_{\alpha=1}^{N_k} w_\alpha^k \,f(e^k_\alpha) \,g(e^k_\alpha).
\end{align}
The inner product induces an adjoint of the coboundary operator, denoted $d_k^* : C^{k+1}(X) \rightarrow C^{k}(X)$, i.e.,
\begin{align} \label{eq:codifferential-def}
    \left<d_k^* f, g\right>_{L^2(\vec{w}^{k})} = \left<f, d_{k}g\right>_{L^2(\vec{w}^{k+1})}
\end{align}
for any $f \in C^{k+1}(X)$ and $g \in C^{k}(X)$.
This leads to the following definition.
\begin{definition}\label{def:hodge-laplacian}
    The Hodge Laplacian $\Delta_k : C^k(X) \rightarrow C^k(X)$ on the space of $k$-cochains is defined as
    \begin{align}\label{eq:hodge-laplacian}
        \Delta_k := d_{k-1} \circ d_{k-1}^* + d_{k}^* \circ d_{k}.
    \end{align}
\end{definition}
We defer to Section \ref{sec:GPs-on-CCs} to see that this generalises the graph Laplacian by considering its numerical representation.

The {\em Dirac operator}
\begin{align} \label{eq:dirac-operator}
    \delta_k := d_{k-1}^* \oplus d_k : C^k(X) \rightarrow C^{k-1} \oplus C^{k+1}(X)
\end{align}
maps a $k$-cochain onto a direct sum of $(k-1)$ and $(k+1)$-cochains. One can verify that the relation
$
    \Delta_k = \delta_k^* \delta_k
$
holds. Hence, the Dirac operator is understood as a formal `square root' of the Hodge Laplacian.

\subsection{Numerical Representation}\label{sec:numerical-representation}
\begin{table}[t]
\begin{adjustbox}{width=\columnwidth,center}
\begin{tabular}{ c  c } 
     \toprule
     Object & Representation \\
     \midrule
     Chain $c = \sum_{\alpha=1}^{N_k} n_\alpha e^k_\alpha$ & $\vec{c} = (n_1, \ldots, n_{N_k})^\top$ \\
     & \\[-2mm]
     Cochain $f = \sum_{\alpha=1}^{N_k} f_\alpha (e^k_\alpha)^*$ & $\vec{f} = (f_1, \ldots, f_{N_k})^\top$ \\ & \\[-2mm]
     Boundary operator $\partial_k$ & $\mat{B}_k \in \mathbb{Z}^{N_{k-1} \times N_k}$ \\ & \\[-2mm]
     Coboundary operator $d_k$ & $\mat{B}_{k+1}^\top \in \mathbb{Z}^{N_{k+1} \times N_{k}}$ \\ & \\[-2mm]
     Inner-product $\left<f, g\right>_{L^2(\vec{w}^k)}$ & $\vec{f}^\top \mat{W}_k \vec{g}$ \\ & \\[-2mm]
     \multirow{2}{*}{Hodge Laplacian $\Delta_k$} & $\mathbf{\Delta}_k := \mathbf{B}_{k}^\top \mathbf{W}_{k-1}^{-1} \mathbf{B}_{k} \mathbf{W}_{k}$ \\ & $+ \mathbf{W}_{k}^{-1} \mathbf{B}_{k+1} \mathbf{W}_{k+1} \mathbf{B}_{k+1}^\top$ \\
     \bottomrule
\end{tabular}
\end{adjustbox}
\caption{ Numerical representation of key objects and operations defined on cellular complexes. Here, $\mathbf{W}_k = \mathtt{diag}(w_1^k, \ldots, w_{N_k}^k)$ is the matrix of cell-weights and $\mat{B}_k$ is the order-$k$ incidence matrix, whose $j$-th column corresponds to the vector representation of the cell boundary $\partial e^k_j$, viewed as a $(k-1)$-chain.}
\label{table:numerical-representation}
\end{table}
In Table \ref{table:numerical-representation}, we display how the objects considered above can be represented as matrices and vectors in order to make explicit computations with them. We refer to Appendix \ref{app:numerical-representation} for the full derivation.
In the case $k=0$ and taking $\mat{W}_0 = \mat{I}$ (i.e., no vertex weights), we see that the expression for the Hodge Laplacian reduces to the expression for the weighted graph Laplacian $\mathbf{\Delta}_0 = \mathbf{B}_{1} \mathbf{W}_1 \mathbf{B}_{1}^\top$. Furthermore, taking $\mat{W}_1 = \mat{I}$ (no edge weights), we obtain the expression for the standard graph Laplacian $\mathbf{\Delta}_0 = \mathbf{B}_{1} \mathbf{B}_{1}^\top$.

\section{Gaussian Processes on Cellular Complexes}\label{sec:GPs-on-CCs}
In this section, we establish our notion of Gaussian processes on cellular complexes, which we take to be a direct sum of {Gaussian random cochains}. Given a sample space $\Omega$, an event space $\mathcal{F}$ and a probability measure $\mathbb{P}$, we denote by $(\Omega, \mathcal{F}, \mathbb{P})$ the underlying probability triple for which our random variables will be defined over. 

\subsection{Gaussian Random Cochains}
\begin{definition}
    A random variable $f : \Omega \rightarrow C^k(X)$ is called a Gaussian random cochain if for any $k$-chain $c \in C_k(X)$, the random variable $f (c): \Omega \rightarrow \mathbb{R}$ is Gaussian. 
\end{definition}

To facilitate computations, we characterise them using a mean function and a kernel. Defining a mean is straightforward---this is just a fixed cochain. For the kernel, we consider the following definition.

\begin{definition}\label{def:kernel-on-chains}
    A kernel on $C_k(X)$ is a symmetric bilinear form\footnote{\label{note3}A group bi-homomorphism to be more precise.} $\kappa : C_k(X) \times C_k(X) \rightarrow \mathbb{R}$ such that for any $c_1, \ldots, c_m \in C_k(X)$, we have
    \begin{align}
        \sum_{i,j = 1}^m \kappa(c_i, c_j) \geq 0.
    \end{align}
\end{definition}
This is an appropriate notion of the kernel:
\begin{theorem}\label{eq:GRC-characterisation}
    A Gaussian random cochain $f : \Omega \rightarrow C^k(X)$ is fully characterised by a mean $\mu \in C^k(X)$ and a kernel $\kappa : C_k(X) \times C_k(X) \rightarrow \mathbb{R}$.
    \\[2mm]
    Proof: Appendix \ref{app:grc-characterisation}.
\end{theorem}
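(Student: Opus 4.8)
The plan is to reduce the statement to the familiar finite-dimensional fact that a Gaussian random vector is characterised by its mean vector and covariance matrix, using the structural observations that $C_k(X)$ is a finitely generated free Abelian group with basis $\{e^k_\alpha\}_{\alpha=1}^{N_k}$ and that $C^k(X)$ is the dual real vector space of dimension $N_k$. First I would fix the basis and write $f = \sum_{\alpha} f(e^k_\alpha)\,(e^k_\alpha)^*$, so that the random cochain $f$ is completely determined by the random vector $\vec{F} := (f(e^1_1), \ldots, f(e^k_{N_k}))^\top \in \mathbb{R}^{N_k}$ via the linearity relation in Equation~\eqref{eq:cochain-def}; conversely any such random vector defines a random cochain. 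Thus it suffices to show that $\vec{F}$ is a (finite-dimensional) Gaussian vector and that its law is pinned down by the pair $(\mu, \kappa)$.

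The key step is to argue that $\vec{F}$ is jointly Gaussian. By hypothesis, for every $k$-chain $c = \sum_\alpha n_\alpha e^k_\alpha$ the scalar random variable $f(c) = \sum_\alpha n_\alpha f(e^k_\alpha) = \vec{n}^\top \vec{F}$ is Gaussian. Taking $c = e^k_\alpha$ shows each coordinate of $\vec{F}$ is Gaussian, but to get \emph{joint} Gaussianity I need arbitrary real linear combinations to be Gaussian, whereas chains only give integer combinations $\vec{n} \in \mathbb{Z}^{N_k}$. This is the main obstacle. I would close the gap in two moves: (i) integer combinations of the coordinates are Gaussian, hence so are rational combinations (scaling a Gaussian by a rational is Gaussian), and (ii) real combinations follow by a limiting argument — if $\vec{q}_m \to \vec{t}$ with $\vec{q}_m \in \mathbb{Q}^{N_k}$, then $\vec{q}_m^\top \vec{F} \to \vec{t}^\top \vec{F}$ almost surely (finite sums, continuity), and an almost-sure limit of Gaussians is Gaussian, since the characteristic functions $\exp(i s \vec{q}_m^\top \vec{F})$ converge to $\exp(i s \vec{t}^\top \vec{F})$, forcing the limiting law to have Gaussian characteristic function (one must check the means and variances converge, which holds because a convergent sequence of Gaussians in distribution has convergent parameters). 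By the Cramér–Wold device, $\vec{F}$ is then jointly Gaussian.

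Once $\vec{F} \sim \mathcal{N}(\vec{m}, \vec{\Sigma})$ is established, its law is determined by $(\vec{m}, \vec{\Sigma})$. It remains to identify these with $\mu$ and $\kappa$. For the mean, set $\mu := \sum_\alpha \mathbb{E}[f(e^k_\alpha)]\,(e^k_\alpha)^* \in C^k(X)$, so $m_\alpha = \mu(e^k_\alpha)$, i.e. $\mathbb{E}[f(c)] = \mu(c)$ for all chains $c$ by linearity. For the covariance, define $\kappa(c, c') := \mathrm{Cov}[f(c), f(c')]$; this is bilinear over $\mathbb{Z}$ (from linearity of $f$ in its chain argument), symmetric, and satisfies $\sum_{i,j}\kappa(c_i, c_j) = \mathrm{Var}[\sum_i f(c_i)] \ge 0$, so it is a kernel in the sense of Definition~\ref{def:kernel-on-chains}; moreover $\Sigma_{\alpha\beta} = \kappa(e^k_\alpha, e^k_\beta)$. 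Since $\vec{m}$ and $\vec{\Sigma}$ are exactly the coordinate representations of $\mu$ and $\kappa$ in the chosen basis, and the basis identification is a bijection between random cochains and random vectors in $\mathbb{R}^{N_k}$, the pair $(\mu, \kappa)$ determines the law of $f$; conversely any valid $(\mu, \kappa)$ yields a positive semidefinite $\vec{\Sigma}$ and hence a well-defined Gaussian random cochain. I would remark that the construction is basis-independent because a change of basis acts compatibly on chains and cochains, so $\mu$ and $\kappa$ are intrinsic objects.
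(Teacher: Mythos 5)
Your proposal is correct, and its skeleton is the same as the paper's: fix the basis $\{e^k_\alpha\}$, identify the random cochain with the coordinate vector $\vec{F}\in\mathbb{R}^{N_k}$, read off $\mu(c)=\mathbb{E}[f(c)]$ and $\kappa(c,c')=\mathrm{Cov}[f(c),f(c')]$ in one direction, and in the other direction build $\vec{f}\sim\mathcal{N}(\vec{\mu},\mat{K})$ and pull it back through the basis isomorphism (the paper does exactly this, following Lemmas 9--10 of Hutchinson et al.). The genuine difference is your middle step: the definition of a Gaussian random cochain only guarantees that $\vec{n}^\top\vec{F}$ is Gaussian for \emph{integer} vectors $\vec{n}$ (chains), and you upgrade this to all real linear combinations via rational scaling, an almost-sure limit of Gaussians, and Cram\'er--Wold, thereby proving that $\vec{F}$ is jointly Gaussian and hence that its law --- equivalently the law of $f$ --- is pinned down by $(\mu,\kappa)$. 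The paper's appendix proof does not contain this argument; it only establishes the two constructions $f\mapsto(\mu,\kappa)$ and $(\mu,\kappa)\mapsto f$, and the joint Gaussianity of the vector representation is simply asserted in the main text. So your route buys the actual uniqueness/characterisation content of the word ``fully characterised'', at the cost of a short analytic limiting argument; the paper's route is shorter but leaves that step implicit. One small point you should make explicit in the converse direction: Definition \ref{def:kernel-on-chains} only gives $\sum_{i,j}\kappa(c_i,c_j)\geq 0$ over chains, i.e.\ nonnegativity of the quadratic form $\vec{n}^\top\mat{K}\vec{n}$ for integer (hence rational) vectors, so positive semi-definiteness of $\mat{K}$ over $\mathbb{R}^{N_k}$ requires the same density-plus-continuity observation you used for joint Gaussianity; this is immediate but currently only asserted, and the paper glosses it in the same way.
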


The vector representation $\vec{f}$ of $f$ is simply a multivariate Gaussian $\vec{f} \sim \mathcal{N}(\vec{\mu}, \mat{K})$, whose covariance
\begin{align}
    \mathbf{K} :=
    \begin{pmatrix}
    \kappa(e^k_1, e^k_1) & \cdots & \kappa(e^k_1, e^k_{N_k}) \\
    \vdots & \ddots & \vdots \\
    \kappa(e^k_{N_k}, e^k_1) & \cdots & \kappa(e^k_{N_k}, e^k_{N_k})
    \end{pmatrix}
\end{align}
is the matrix representation of the kernel. 
Due to the bilinearity of $\kappa$, for any $c, d \in C_k(X)$, and their vector representations $\vec{c}, \vec{d} \in \mathbb{Z}^{N_k}$, we can write
\begin{align}
    \kappa(c, d) = \vec{c}^\top \mat{K} \vec{d}.
\end{align}
By fixing orientations on each $k$-cell, we also have a notion of {\em direction} for the signal $f$ -- for a $k$-cell $e^k_\alpha$, depending on whether the sign of $f(e^k_\alpha)$ is positive or negative, the direction of $f$ at $e^k_\alpha$ is aligned to, or runs counter to the orientation of $e^k_\alpha$, respectively.

\subsection{GPs on Cellular Complexes}
Next, we extend our notion of Gaussian random cochains to direct sums of cochains of different orders. We take this as our definition of Gaussian processes on cellular complexes.

\begin{definition} \label{def:gp-on-cell-complex}
    Let $X$ be an $n$-dimensional cellular complex. We define a Gaussian process on $X$ as a random variable $f : \Omega \rightarrow \bigoplus_{k=0}^n C^k(X)$ such that for any $c = (c_0, \ldots, c_n) \in \bigoplus_{k=0}^n C_k(X)$, the random variable $f (c) : \Omega \rightarrow \mathbb{R}$ is univariate Gaussian.
\end{definition}

As before, we have an appropriate notion of a kernel on this space as a symmetric bilinear form\textsuperscript{\ref{note3}}
\begin{align} \label{eq:kernel-direct-sum-space}
\kappa : \bigoplus_{k=0}^n C_k(X) \times \bigoplus_{k=0}^n C_k(X) \rightarrow \mathbb{R}
\end{align}
satisfying $\sum_{i,j} \kappa(c_i, c_j) \geq 0$ for $c_i, c_j \in \bigoplus_{k=0}^n C_k(X)$. We also have the following result characterising GPs on cellular complexes via a mean and a kernel.

\begin{theorem}\label{eq:CCGP-characterisation}
    A GP on a cellular complex $X$ is fully characterised by a mean $\mu \in \bigoplus_{k=0}^n C^k(X)$ and a kernel $\kappa : \bigoplus_{k=0}^n C_k(X) \times \bigoplus_{k=0}^n C_k(X) \rightarrow \mathbb{R}$.
    \\[2mm]
    Proof: Appendix \ref{app:ccgp-characterisation}.
\end{theorem}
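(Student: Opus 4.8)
The plan is to reduce the statement to the finite-dimensional theory of multivariate Gaussians, exactly as in the proof of Theorem~\ref{eq:GRC-characterisation}, the only new ingredient being that we now work over the larger---but still finite-dimensional---space $\bigoplus_{k=0}^n C^k(X)$. First I would fix orientations on all cells, so that the dual cells $\{(e^k_\alpha)^*\}_{k,\alpha}$ form a basis of $\bigoplus_{k=0}^n C^k(X)$, identifying it with $\mathbb{R}^N$ where $N := \sum_{k=0}^n N_k$, and correspondingly the cells $\{e^k_\alpha\}_{k,\alpha}$ are a $\mathbb{Z}$-basis of $\bigoplus_{k=0}^n C_k(X)$. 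Under these identifications, a random variable $f$ as in Definition~\ref{def:gp-on-cell-complex} has a vector representation $\vec{f} = (f(e^k_\alpha))_{k,\alpha} \in \mathbb{R}^N$, and $f(c) = \vec{c}^\top \vec{f}$ for every chain $c$ with integer coordinate vector $\vec{c}$.

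The first substantive step is to show that $\vec{f}$ is a multivariate Gaussian vector. Each component $f(e^k_\alpha)$ is univariate Gaussian by hypothesis, and for any $\vec{v} \in \mathbb{Z}^N$ the linear combination $\vec{v}^\top \vec{f} = f\big(\sum_{k,\alpha} v^k_\alpha e^k_\alpha\big)$ is Gaussian, again by hypothesis. Rescaling by $1/m$ extends this to $\vec{v} \in \mathbb{Q}^N$, and since $t \mapsto \mathbb{E}[e^{it\,\vec{v}^\top \vec{f}}]$ depends continuously on $\vec{v}$, the Gaussian form of the law of $\vec{v}^\top \vec{f}$ persists for all $\vec{v} \in \mathbb{R}^N$; the Cram\'er--Wold device then gives $\vec{f} \sim \mathcal{N}(\vec{\mu}, \mat{K})$ for some $\vec{\mu} \in \mathbb{R}^N$ and positive semidefinite $\mat{K} \in \mathbb{R}^{N \times N}$. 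Such a law is completely determined by $\vec{\mu}$ and $\mat{K}$, so the GP $f$ is determined by them as well.

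It then remains to match $(\vec{\mu}, \mat{K})$ with a mean cochain and a kernel, in both directions. The vector $\vec{\mu} = \mathbb{E}[\vec{f}]$ is by construction the coordinate vector of $\mu := \sum_{k,\alpha} \mathbb{E}[f(e^k_\alpha)]\,(e^k_\alpha)^* \in \bigoplus_{k=0}^n C^k(X)$, and conversely any cochain in this space produces a mean vector. Defining $\kappa(c,d) := \mathrm{Cov}(f(c), f(d))$, bilinearity over $\mathbb{Z}$ follows from bilinearity of covariance together with $c \mapsto f(c)$ being a homomorphism, symmetry is immediate, and $\sum_{i,j}\kappa(c_i,c_j) = \mathrm{Var}\big(\sum_i f(c_i)\big) \geq 0$, so $\kappa$ is a kernel in the sense of \eqref{eq:kernel-direct-sum-space}; its matrix of values on pairs of cells is exactly $\mat{K} = \mathrm{Cov}(\vec{f})$, whence $\kappa(c,d) = \vec{c}^\top \mat{K} \vec{d}$. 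Conversely, given any such kernel $\kappa$, evaluating the positivity condition on signed multisets of cells shows that its matrix $\mat{K}$ satisfies $\vec{v}^\top \mat{K} \vec{v} \geq 0$ for all $\vec{v} \in \mathbb{Z}^N$, hence (by homogeneity and continuity) for all $\vec{v} \in \mathbb{R}^N$, so $\mat{K}$ is positive semidefinite; then $\mathcal{N}(\vec{\mu}, \mat{K})$ is a legitimate law whose associated cochain-valued random variable is a GP on $X$ with mean $\mu$ and kernel $\kappa$. Uniqueness follows because two GPs with the same $\mu$ and $\kappa$ have vector representations with identical mean and covariance.

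I expect the main obstacle to be the passage from ``Gaussian on all integer combinations of chains'' to ``jointly Gaussian random vector'': chains carry only a $\mathbb{Z}$-module structure, so the Cram\'er--Wold device is not directly available, and one must first densify to $\mathbb{Q}^N$ by rescaling and then to $\mathbb{R}^N$ by continuity of characteristic functions. The same integer-to-real subtlety reappears when upgrading the abstract positivity condition behind \eqref{eq:kernel-direct-sum-space} to genuine positive semidefiniteness of $\mat{K}$; everything else is bookkeeping that runs in parallel with the single-degree case of Theorem~\ref{eq:GRC-characterisation}.
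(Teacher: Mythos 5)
Your proposal is correct and follows essentially the same route as the paper's proof in Appendix~\ref{app:ccgp-characterisation}: identify $\bigoplus_{k=0}^n C^k(X)$ with a finite-dimensional Euclidean space via the cell basis, obtain the mean and kernel as $\mathbb{E}[f(c)]$ and $\mathrm{Cov}[f(c),f(c')]$ in one direction, and in the other construct the multivariate Gaussian $\mathcal{N}(\vec{\mu},\mat{K})$ and pull it back through the group isomorphism, exactly mirroring the argument for Theorem~\ref{eq:GRC-characterisation}. The only difference is that you make explicit two details the paper leaves implicit---upgrading Gaussianity from integer chain combinations to joint Gaussianity of $\vec{f}$ via rational rescaling, continuity of characteristic functions and Cram\'er--Wold, and upgrading the $\mathbb{Z}$-valued positivity condition on $\kappa$ to genuine positive semidefiniteness of $\mat{K}$---which strengthens rather than changes the argument.
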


Again, we can represent \eqref{eq:kernel-direct-sum-space} by a matrix
\begin{align}
    \mathbf{K} =
    \begin{pmatrix}
    \mathbf{K}_{11} & \cdots & \mathbf{K}_{1n} \\
    \vdots & \ddots & \vdots \\
    \mathbf{K}_{n1} & \cdots & \mathbf{K}_{nn}
    \end{pmatrix}
\end{align}
with $[\mat{K}_{nm}]_{ij} = \kappa(e^n_i, e^m_j)$, which defines the covariance of $\vec{f} \in \mathbb{R}^{N_1 + \cdots + N_n}$, the vector representation of $f$.

\begin{remark}
    We emphasise that our notion of a GP on a cellular complex $X$ is not defined as a random function $X \rightarrow \mathbb{R}$ (i.e. a GP on the topological space $X$), but rather as a direct sum of Gaussian random cochains, i.e., a random function $\bigoplus_{k=0}^n C_k(X) \rightarrow \mathbb{R}$. This will allow us to define covariance structures between cells themselves, instead of between points on cells.
\end{remark}

\subsection{Kernels on Cellular Complexes}
We provide some concrete examples of kernels defining GPs on cellular complexes, which encompass existing kernels in the literature. For simplicity, we take unit cell-weights $w^k_\alpha = 1$ here and defer the treatment of the general case to Appendix \ref{app:arbitrary-cell-weights}. For any chain $c \in C_k(X)$, we also denote by $c^\flat \in C^k(X)$ the cochain defined by $f(c) = \big\langle f, c^\flat\big\rangle_{L^2(\vec{w}^k)}$ for any $f \in C^k(X)$.

\subsubsection{Matérn Kernel}\label{sec:matern-kernel}
We first consider a generalisation of the Matérn kernel on cellular complexes. Following \cite{borovitskiy2020matern, Borovitskiy2021}, consider the stochastic system
\begin{align}\label{eq:matern-gp-def}
    \left(\frac{2\nu}{\ell^2} + \Delta_k\right)^{\nu/2} f = \mathcal{W},
\end{align}
where $f \in C^k(X)$, $\Delta_k$ is the Hodge Laplacian (Definition \ref{def:hodge-laplacian}), and $\mathcal{W} : \Omega \rightarrow C^k(X)$ is a Gaussian random cochain satisfying $\mathbb{E}[\mathcal{W}(c_0)] = 0$ and $\mathbb{E}[\mathcal{W}(c_1) \mathcal{W}(c_2)] = \big\langle c^\flat_1, c^\flat_2\big\rangle_{L^2}$ for any $c_0, c_1, c_2 \in C_k(X)$.
The operator $\left(\frac{2\nu}{\ell^2} + \Delta_k\right)^{\nu/2}$ is defined rigorously in Appendix \ref{app:matern-kernel}.

The Matérn kernel $\kappa : C_k(X) \times C_k(X) \rightarrow \mathbb{R}$ is then defined as a solution to the system
\begin{align}\label{eq:matern-kernel-def}
    \left(\frac{2\nu}{\ell^2} + \Delta_k\right)^{\nu} \kappa(c, \,\cdot\,) = c^\flat, \quad \forall c \in C_k(X),
\end{align}
which is a kernel in the sense that it satisfies the following property.

\begin{proposition}\label{prop:matern-gp}
    The solution to \eqref{eq:matern-kernel-def} is related to the solution $f$ of the system \eqref{eq:matern-gp-def} by
    \begin{align}
        \kappa(c, c') = \mathbb{E}[f(c)f(c')], \quad 
    \forall c, c' \in C_k(X).
    \end{align}
    Thus, $\kappa$ solving \eqref{eq:matern-kernel-def} is the kernel of the Gaussian random cochain $f$. 
    \\[2mm] 
    Proof: Appendix \ref{app:matern-kernel-proof}.
\end{proposition}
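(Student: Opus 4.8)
The plan is to reduce everything to the finite‑dimensional numerical representation of Section~\ref{sec:numerical-representation} and then solve the two linear systems \eqref{eq:matern-gp-def} and \eqref{eq:matern-kernel-def} explicitly. Set $\mat{A} := \frac{2\nu}{\ell^2}\mat{I} + \mat{\Delta}_k$, where $\mat{\Delta}_k$ is the matrix representation of the Hodge Laplacian from Table~\ref{table:numerical-representation}; under the unit cell‑weight assumption $\mat{W}_k = \mat{I}$, so $\mat{\Delta}_k = \mat{B}_k^\top\mat{B}_k + \mat{B}_{k+1}\mat{B}_{k+1}^\top$ is symmetric positive semidefinite and $\mat{A}$ is symmetric positive definite. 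The fractional power $\mat{A}^{\nu/2}$ (made rigorous in Appendix~\ref{app:matern-kernel} via spectral calculus) is then symmetric, invertible, and satisfies $\mat{A}^{\nu/2}\mat{A}^{\nu/2} = \mat{A}^{\nu}$; I will invoke these properties without reproving them.

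First I would pin down the coordinate form of the white noise $\mathcal{W}$. Using the pairing $\mathcal{W}(c) = \vec{\mathcal{W}}^\top\vec{c}$ together with $\vec{c^\flat} = \mat{W}_k^{-1}\vec{c} = \vec{c}$ and the inner‑product formula $\langle f, g\rangle_{L^2(\vec w^k)} = \vec{f}^\top\mat{W}_k\vec{g}$ from Table~\ref{table:numerical-representation}, the defining moment conditions $\mathbb{E}[\mathcal{W}(c_0)] = 0$ and $\mathbb{E}[\mathcal{W}(c_1)\mathcal{W}(c_2)] = \langle c_1^\flat, c_2^\flat\rangle_{L^2}$ for all chains translate into $\mathbb{E}[\vec{\mathcal{W}}] = \vec{0}$ and $\vec{c}_1^\top\mathbb{E}[\vec{\mathcal{W}}\vec{\mathcal{W}}^\top]\vec{c}_2 = \vec{c}_1^\top\vec{c}_2$ for all $\vec{c}_1,\vec{c}_2$, i.e. $\vec{\mathcal{W}}\sim\mathcal{N}(\vec{0},\mat{I})$. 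The vector form of \eqref{eq:matern-gp-def} is then $\mat{A}^{\nu/2}\vec{f} = \vec{\mathcal{W}}$, which has the unique solution $\vec{f} = \mat{A}^{-\nu/2}\vec{\mathcal{W}}$; hence $\vec{f}$ is a centred Gaussian vector with covariance $\mat{A}^{-\nu/2}\,\mat{I}\,\mat{A}^{-\nu/2} = \mat{A}^{-\nu}$, using symmetry of $\mat{A}^{\nu/2}$. In particular $f$ is a Gaussian random cochain, being a linear image of a Gaussian vector.

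Next I would solve \eqref{eq:matern-kernel-def}. For fixed $c$, the cochain $\kappa(c,\cdot)$ has vector representation $\mat{K}\vec{c}$, where $\mat{K}$ is the (symmetric) matrix representation of the kernel: indeed $\kappa(c,d) = \vec{c}^\top\mat{K}\vec{d} = (\mat{K}\vec{c})^\top\vec{d}$ for all $d$, and a cochain $g$ with $g(d)=\vec{g}^\top\vec{d}$ has vector representation $\vec{g}$. Since $c^\flat$ has vector representation $\vec{c}$, the system becomes $\mat{A}^{\nu}\mat{K}\vec{c} = \vec{c}$ for every $\vec{c}$, so $\mat{K} = \mat{A}^{-\nu}$, which exists and is unique by invertibility of $\mat{A}$. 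Comparing with the previous paragraph, $\mat{K}$ is exactly the covariance of $\vec{f}$, whence for all chains $c,c'$,
\[
\kappa(c,c') = \vec{c}^\top\mat{K}\vec{c'} = \vec{c}^\top\mathbb{E}[\vec{f}\vec{f}^\top]\vec{c'} = \mathbb{E}\big[(\vec{c}^\top\vec{f})(\vec{c'}^\top\vec{f})\big] = \mathbb{E}[f(c)f(c')],
\]
which is the asserted identity; that $\kappa$ is then a bona fide kernel in the sense of Definition~\ref{def:kernel-on-chains} is immediate since $\mat{A}^{-\nu}$ is symmetric positive definite.

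The main obstacle is not the linear algebra but the bookkeeping across the chain/cochain duality: correctly expressing the flat map $c\mapsto c^\flat$, the pairing $f(c)$, and the covariance of $\mathcal{W}$ in coordinates, and verifying that the matrix representation of $\Delta_k$ is the one in Table~\ref{table:numerical-representation} and is symmetric when $\mat{W}_k=\mat{I}$. Without the unit‑weight assumption, $\mat{\Delta}_k$ is only self‑adjoint with respect to the $\mat{W}_k$‑weighted inner product, $\mathcal{W}$ becomes $\mathcal{N}(\vec{0},\mat{W}_k^{-1})$, and one must track these weights throughout — this is precisely the extra work handled in Appendix~\ref{app:arbitrary-cell-weights}. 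A secondary point requiring care is the definition and the symmetry/invertibility/square‑root properties of the fractional operator $\mat{A}^{\nu/2}$ for non‑integer $\nu$, which I would quote from Appendix~\ref{app:matern-kernel}.
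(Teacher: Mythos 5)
Your argument is correct, but it is not quite the route the paper takes for Proposition \ref{prop:matern-gp}: the paper's proof in Appendix \ref{app:matern-kernel-proof} stays at the level of cochains, writing $f=\sum_i(\tfrac{2\nu}{\ell^2}+\lambda_i^2)^{-\nu/2}\langle\mathcal{W},u_i\rangle_{L^2}u_i$ and $\kappa(c,\cdot)=\sum_i(\tfrac{2\nu}{\ell^2}+\lambda_i^2)^{-\nu}u_i(c)u_i(\cdot)$ and matching $\mathbb{E}[f(c)f(c')]$ term by term via $\mathbb{E}[\langle\mathcal{W},u_i\rangle\langle\mathcal{W},u_j\rangle]=\delta_{ij}$, then checking symmetry, bi-homomorphism and positive (semi-)definiteness directly from the explicit expansion. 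Your reduction to coordinates — identifying $\vec{\mathcal{W}}\sim\mathcal{N}(\vec 0,\mat I)$ from the flat map and the $L^2$ pairing, solving $\mat A^{\nu/2}\vec f=\vec{\mathcal{W}}$ and $\mat A^\nu\mat K=\mat I$, and reading off $\kappa(c,c')=\vec c^\top\mat A^{-\nu}\vec c'=\mathbb{E}[f(c)f(c')]$ — is essentially the paper's secondary derivation of the matrix representation in Appendix \ref{app:matern-kernel-representation} (and, with weights, Appendix \ref{app:arbitrary-cell-weights-matern}), promoted to the proof itself. The trade-off is what you already identify: the coordinate route is shorter and makes existence/uniqueness transparent via invertibility of $\mat A$, but it leans on the unit-weight symmetry of $\mat\Delta_k$, on the agreement of the spectrally defined fractional operator with the symmetric matrix power, and on the bookkeeping $\vec{c^\flat}=\mat W_k^{-1}\vec c$ and $\mathbb{E}[\vec{\mathcal{W}}\vec{\mathcal{W}}^\top]=\mat W_k^{-1}$ (all of which you handle correctly), whereas the paper's basis-level argument transfers verbatim to arbitrary cell weights once the eigencochains are taken weighted-orthonormal. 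One cosmetic nit: when you write the vector representation of $\kappa(c,\cdot)$ as $\mat K\vec c$ you implicitly use symmetry of $\mat K$ before it is established; without it the system gives $\mat K^\top=\mat A^{-\nu}$, from which symmetry then follows — harmless, but worth phrasing in that order.
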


We can express the kernel in \eqref{eq:matern-kernel-def} by a matrix
\begin{align}
    \mathbf{K} = \mathbf{U} \left(\frac{2\nu}{\ell^2}\mathbf{I} + \mathbf{\Lambda}^2\right)^{-\nu} \mathbf{U}^\top, \label{eq:matern-kernel-representation}
\end{align}
where $\mat{\Delta}_k = \mat{U} \mat{\Lambda}^2 \mat{U}^\top$ is the eigendecomposition of the Hodge Laplacian. In the case $k=0$, \eqref{eq:matern-kernel-representation} recovers exactly the graph Matérn kernel by \cite{Borovitskiy2021}.
We may also extend this construction to direct sums of cochains by replacing the Hodge Laplacian in  \eqref{eq:matern-gp-def} and \eqref{eq:matern-kernel-def} by the {\em super Laplacian} $\mathcal{L} := \bigoplus_{k=0}^n \Delta_k$, resulting in kernel \eqref{eq:matern-kernel-representation}, where $\mat{U} \mat{\Lambda}^2 \mat{U}^\top$ is now the eigendecomposition of the super Laplacian matrix $\vec{\mathcal{L}} = \mathtt{blockdiag}(\mathbf{\Delta}_0, \ldots, \mathbf{\Delta}_n)$. While this defines a GP on direct sums of cochains, due to the block-diagonal structure of the super-Laplacian, no mixing occurs between different cochains.

In the special case of GPs defined over the edges of a graph (i.e., Gaussian $1$-cochains), a similar construction of the Mat\'ern kernel has been explored in the concurrent work \citet{Yang2024}, where they also employ the Hodge decomposition to add further flexibility of the model.
Another related work \citep{pinder2021gaussian} explores the construction of Mat\'ern GPs on hypergraphs to model higher-order interactions. However, the work only considers inference on the  vertices, whereas our method focuses on making inferences on signals supported on the interactions themselves (i.e., the cells). 

\subsubsection{Reaction-diffusion Kernel}\label{sec:rd-kernel}
Next, we introduce a new type of kernel, which we term the {\em reaction-diffusion kernel}, that enables mixing of information between cochains of different orders. We will operate here entirely with the vector representation of cochains for ease of presentation.

Consider the {\em Dirac matrix}
\begin{equation} \label{eq:dirac-matrix}
\vec{\mathcal{D}} = 
    \begin{pmatrix}
        \mathbf{0} & \mathbf{B}_1 &  \cdots & \mathbf{0} \\
        \mathbf{B}_1^\top & \ddots & \ddots &  \vdots \\
        \vdots &  \ddots & \ddots & \mathbf{B}_n \\
        \mathbf{0} & \dots & \mathbf{B}_n^\top & \mathbf{0}
    \end{pmatrix},
\end{equation}
whose $k$-th column is a numerical representation of the $k$-th Dirac operator \eqref{eq:dirac-operator}. We can check that
$\vec{\mathcal{D}}^2 = 
\vec{\mathcal{L}}$ holds.
Thus, the Dirac matrix $\vec{\mathcal{D}}$ and the super-Laplacian matrix $\vec{\mathcal{L}}$ share a common eigenbasis $\mat{U}$ with eigenvalues $\mat{\Lambda}$ and $\mat{\Lambda}^2$, respectively. Now consider the stochastic system
\begin{align}\label{eq:reaction-diffusion-spde-representation}
    \left(r \mat{I} - c \vec{\mathcal{D}} +  d \vec{\mathcal{L}} \right)^{\frac{\nu}{2}} \vec{f} = \vec{w}, \quad \vec{w} \sim \mathcal{N}(0, \mat{I})
\end{align}
for some constants $r, c, d, \nu \in \mathbb{R}_{\geq 0}$. Then, the corresponding kernel matrix is given by
\begin{align} \label{eq:reaction-diffusion-kernel}
    \mat{K} = \mat{U} (r \mat{I} - c \mat{\Lambda} + d \mat{\Lambda}^2)^{-\nu} \mat{U}^\top,
\end{align}
which we term the {\em reaction-diffusion kernel}.\footnote{In general, $\mat K$ is  indefinite. To fix this, we set $\nu$ to be an even integer in \eqref{eq:reaction-diffusion-kernel}, which will make it positive definite for all $r, c, d \in \mathbb{R}_{\geq 0}$, excluding the set $\mathbb{X} := \bigcup_i \{(r, c, d) : r \pm c \lambda_i + d \lambda_i^2 = 0\}$. This set has Lebesgue measure zero. If $(r, c, d) \in \mathbb{X}$, $\mathbf{K}$ becomes positive {\em semi-definite}. Therefore it defines a degenerate Gaussian measure. Inference using degenerate Gaussian measures is, however, still valid, provided $\sigma>0$ is strictly positive in \eqref{eq:posterior-mean}--\eqref{eq:posterior-cov}.} 
This is a kernel in the sense that it satisfies the following result.
\begin{proposition}\label{prop:rd-gp}
    The kernel defined by \eqref{eq:reaction-diffusion-kernel} is related to the solution $\vec{f}$ of the system \eqref{eq:reaction-diffusion-spde-representation} by
    \begin{align}
        [\mat{K}]_{ij} = \mathbb{E}[f_i f_j], \quad 
    \forall i, j.
    \end{align}
    Proof: Appendix \ref{app:rd-kernel-proof}.
\end{proposition}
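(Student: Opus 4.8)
The plan is to diagonalise every operator appearing in \eqref{eq:reaction-diffusion-spde-representation} in the common eigenbasis $\mat{U}$ and thereby reduce the claim to the scalar identity $\big(t^{\nu/2}\big)^{-2} = t^{-\nu}$ applied eigenvalue by eigenvalue. First I would record that the Dirac matrix $\vec{\mathcal{D}}$ of \eqref{eq:dirac-matrix} is symmetric (its block pattern places $\mat{B}_k$ above the diagonal and $\mat{B}_k^\top$ below), so it admits a spectral decomposition $\vec{\mathcal{D}} = \mat{U}\mat{\Lambda}\mat{U}^\top$ with $\mat{U}$ orthogonal and $\mat{\Lambda}$ real diagonal; combined with the identity $\vec{\mathcal{D}}^2 = \vec{\mathcal{L}}$ already noted in the text, this gives $\vec{\mathcal{L}} = \mat{U}\mat{\Lambda}^2\mat{U}^\top$ with the same $\mat{U}$, and hence $r\mat{I} - c\vec{\mathcal{D}} + d\vec{\mathcal{L}} = \mat{U}\,\mathtt{diag}\big(r - c\lambda_i + d\lambda_i^2\big)_i\,\mat{U}^\top$.

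Next I would make sense of the fractional power in \eqref{eq:reaction-diffusion-spde-representation} by functional calculus in this basis, i.e. $\big(r\mat{I} - c\vec{\mathcal{D}} + d\vec{\mathcal{L}}\big)^{\nu/2} := \mat{U}\,\mathtt{diag}\big((r - c\lambda_i + d\lambda_i^2)^{\nu/2}\big)_i\,\mat{U}^\top$, which is well defined and invertible precisely when $r - c\lambda_i + d\lambda_i^2 \neq 0$ for every $i$, that is, when $(r,c,d)\notin\mathbb{X}$ (no positivity is required when $\nu$ is an even integer, in agreement with the footnote). Solving \eqref{eq:reaction-diffusion-spde-representation} then yields the closed form $\vec{f} = \mat{U}\,\mathtt{diag}\big((r - c\lambda_i + d\lambda_i^2)^{-\nu/2}\big)_i\,\mat{U}^\top\vec{w} =: \mat{A}\vec{w}$, with $\mat{A}$ symmetric.

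Finally, since $\vec{f}$ is a linear image of the standard Gaussian $\vec{w}$, it is Gaussian with mean $\vec{0}$ and covariance $\mathbb{E}[\vec{f}\vec{f}^\top] = \mat{A}\,\mathbb{E}[\vec{w}\vec{w}^\top]\,\mat{A}^\top = \mat{A}\mat{A}^\top = \mat{A}^2$; because the two diagonal factors commute and $\mat{U}^\top\mat{U} = \mat{I}$, this collapses to $\mat{U}\,\mathtt{diag}\big((r - c\lambda_i + d\lambda_i^2)^{-\nu}\big)_i\,\mat{U}^\top = \mat{U}(r\mat{I} - c\mat{\Lambda} + d\mat{\Lambda}^2)^{-\nu}\mat{U}^\top = \mat{K}$, which is exactly $[\mat{K}]_{ij} = \mathbb{E}[f_if_j]$ read off entrywise. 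Structurally this mirrors the proof of Proposition \ref{prop:matern-gp}, with the super-Laplacian replaced by the shifted Dirac-plus-Laplacian operator.

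The main obstacle is purely one of rigour around the operator power: one must fix a branch of $t \mapsto t^{\nu/2}$ on the spectrum (trivial for even-integer $\nu$, where it is a polynomial precomposed with inversion, but otherwise requiring the relevant eigenvalues to be positive and to avoid the branch cut), exclude the Lebesgue-null set $\mathbb{X}$ on which the operator is singular, and on $\mathbb{X}$ itself reinterpret \eqref{eq:reaction-diffusion-spde-representation}--\eqref{eq:reaction-diffusion-kernel} in the \emph{degenerate} sense of a Gaussian supported on a proper subspace, for which the identical eigenbasis computation still delivers the stated covariance. Everything else --- symmetry of $\vec{\mathcal{D}}$, the shared eigenbasis, and the Gaussian pushforward step --- is routine.
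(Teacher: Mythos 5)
Your proposal is correct and follows essentially the same route as the paper: the paper likewise defines the fractional power by functional calculus in the shared eigenbasis of $\vec{\mathcal{D}}$ and $\vec{\mathcal{L}}$, writes the solution explicitly as $\vec{f} = \mat{U}(r\mat{I} - c\mat{\Lambda} + d\mat{\Lambda}^2)^{-\nu/2}\mat{U}^\top\vec{w}$, and then computes $\mathbb{E}[\vec{f}\vec{f}^\top] = \mat{K}$ using $\mathbb{E}[\vec{w}\vec{w}^\top] = \mat{I}$ and $\mat{U}^\top\mat{U} = \mat{I}$, exactly as in your Gaussian pushforward step. Your additional remarks on the branch of $t^{\nu/2}$, the exclusion set $\mathbb{X}$, and the degenerate case only make explicit what the paper relegates to its footnote, so there is no substantive difference in approach.
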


\begin{remark}
    Our naming of the reaction-diffusion kernel derives from the similarity of system \eqref{eq:reaction-diffusion-spde-representation} with the multi-component reaction-diffusion equation
    \begin{equation}
    \frac{\partial \vec{f}}{\partial t} = (r - c \vec{\mathcal{D}} + d \vec{\mathcal{L}}) \vec{f},
\end{equation}
where the first and third term model the reaction and diffusion of a quantity respectively, and the second term models the cross-diffusion of multiple quantities.
\end{remark}

\begin{figure}[t]
    \centering
    \begin{subfigure}[b]{0.2\textwidth}
        \centering
        \includegraphics[width=1.1\textwidth]{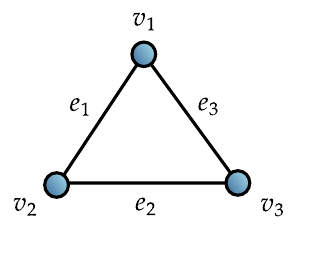}
        \caption[Network2]%
        {{ 1D cellular complex}}    
        \label{fig:pgm_graph}
    \end{subfigure}
    \hspace{1em}
    \begin{subfigure}[b]{0.22\textwidth} 
        \centering 
    \includegraphics[width=0.8\textwidth]{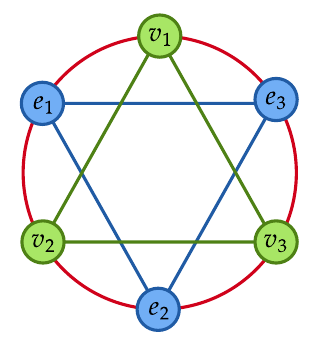}
        \caption[]%
        {{ PGM representation}}    
        \label{fig:pgm}
    \end{subfigure}
    \caption
    {Probabilistic graphical structure of the reaction-diffusion GP. 
    Interactions between vertices (\textcolor{OliveGreen}{green}) 
     and between edges (\textcolor{blue}{blue}) are shown as well as the mixing between cochains of different orders (\textcolor{red}{red}). The cellular Matérn kernel does not have this mixing property.}
    \label{fig:precision_of_rd_kernel}
\end{figure}

To interpret this kernel, we look at the corresponding precision matrix $\mat{P} = \mat{K}^{-1}$, which encodes the probabilistic graphical model (PGM) representation of the model $\vec{f}$, wherein variables $f_i$ and $f_j$ are linked if and only if $[\mat{P}]_{ij} \neq 0$ \cite{rue2005gaussian}. For simplicity, taking $\nu = 1$ and restricting to a 1-dimensional cellular complex, the precision takes the expression
\begin{align}
    \mat{P} =
    \begin{pmatrix}
    r \mat{I} + d \mat{\Delta}_0 & -c \mat{B}_1 \\
    -c \mat{B}_1^\top &r \mat{I} + d \mat{\Delta}_1
    \end{pmatrix}.
\end{align}
For $\vec{f} = (\vec{f}_0, \vec{f}_1)^\top$ in \eqref{eq:reaction-diffusion-spde-representation}, the graphical structure of the $k$-th component $\vec{f}_k$ is summarised by the matrix $r \mat{I} + d \mat{\Delta}_k$ for $k \in \{0, 1\}$, and the dependence between $\vec{f}_0$ and $\vec{f}_1$ is represented by the incidence matrix $\vec{B}_1$ (dotted red in \ref{fig:pgm}).
Since no communication between $\vec{f}_0$ and $\vec{f}_1$ occurs when $c=0$, the Dirac term is essential for allowing information to propagate between cochains of different orders. 

Let us now consider two special cases of the kernel \eqref{eq:reaction-diffusion-kernel}. In the first case, taking $r = 2\nu / \ell^2$, $c = 0$, $d = 1$ and $\nu = \nu$, we see that \eqref{eq:reaction-diffusion-kernel} recovers the Matérn kernel \eqref{eq:matern-kernel-representation}. Since $c=0$, there is no flow of information between cochains of different orders, resulting in independence between the random cochains $\vec{f}_0, \ldots, \vec{f}_n$.

For $r = m^2$, $c = 1$, $d = 0$ and $\nu = 2$, we obtain
\begin{align}
    \mat{K} &= \mat{U}(m \mat{I} - \mat{\Lambda})^{-2}\mat{U}^\top = (m \mat{I} - \vec{\mathcal{D}})^{-2}.
\end{align}
This kernel is considered by \cite{calmon2023dirac}  (in the form of a regulariser) for retrieving mixed topological signals supported on the $k$-cells for $k \leq 2$.

\section{Results}\label{sec:results}
In this section, we demonstrate the results of our GP model defined over cellular complexes (hereafter referred to as CC-GP) on two examples. First, we demonstrate that CC-GPs can make directed predictions on the edges of a graph by considering the problem of ocean current interpolation. In the second example, we investigate the effect of inter-signal mixing in the reaction-diffusion kernel.
We provide details of the experimental setups in Appendix \ref{app:experiments}.

\subsection{Directed Edge Prediction}\label{sec:ocean-experiment}
In the numerical simulation of fluids and especially in finite element methods (FEMs), it is common to treat vector fields as signals supported on the edges of a mesh \citep{arnold2006finite} to give them the flexibility for dealing with complex geometries, e.g., arising from the coastlines in ocean modelling. As there is an increasing adoption of FEM for weather and climate modelling (for example, the UK Met Office's GungHo model uses FEM with cubical elements \citep{staniforth2013gungho}), it is of interest to consider methods that propagate information from observations directly onto the finite element vertices and edges.

Our CC-GP model can naturally be applied in this setting: Consider the geostrophic current data from the \citet{noaa} database. First, we project the geostrophic current around the southern tip of Africa onto the oriented edges of a two-dimensional cubical complex by averaging the flow along each edge \citep{desbrun2006discrete}. This yields directed signals on the edges representing the vector field.  We then train our edge-based Matérn GP (see Section \ref{sec:matern-kernel} defined over a $1$-cochain) on $30\%$ of the data, selected randomly. The main objective of this experiment is to demonstrate that our approach can capture the directional information of the vector field, which would otherwise be difficult with existing approaches.

Figure \ref{fig:ocean} shows resulting predictions. For ease of visualisation, we display the magnitude of the predicted signals by colours on the edges; arrows indicate the predicted direction on each cell, computed by averaging the signals on its boundaries and then taking the resulting direction. We see that our CC-Matérn GP on edges captures the general characteristics of the ground-truth vector field with similar magnitudes and directions, indicating that it can correctly diffuse information onto neighbouring edges.

\begin{figure}[t]
    \centering
    \includegraphics[width=0.45
\textwidth]{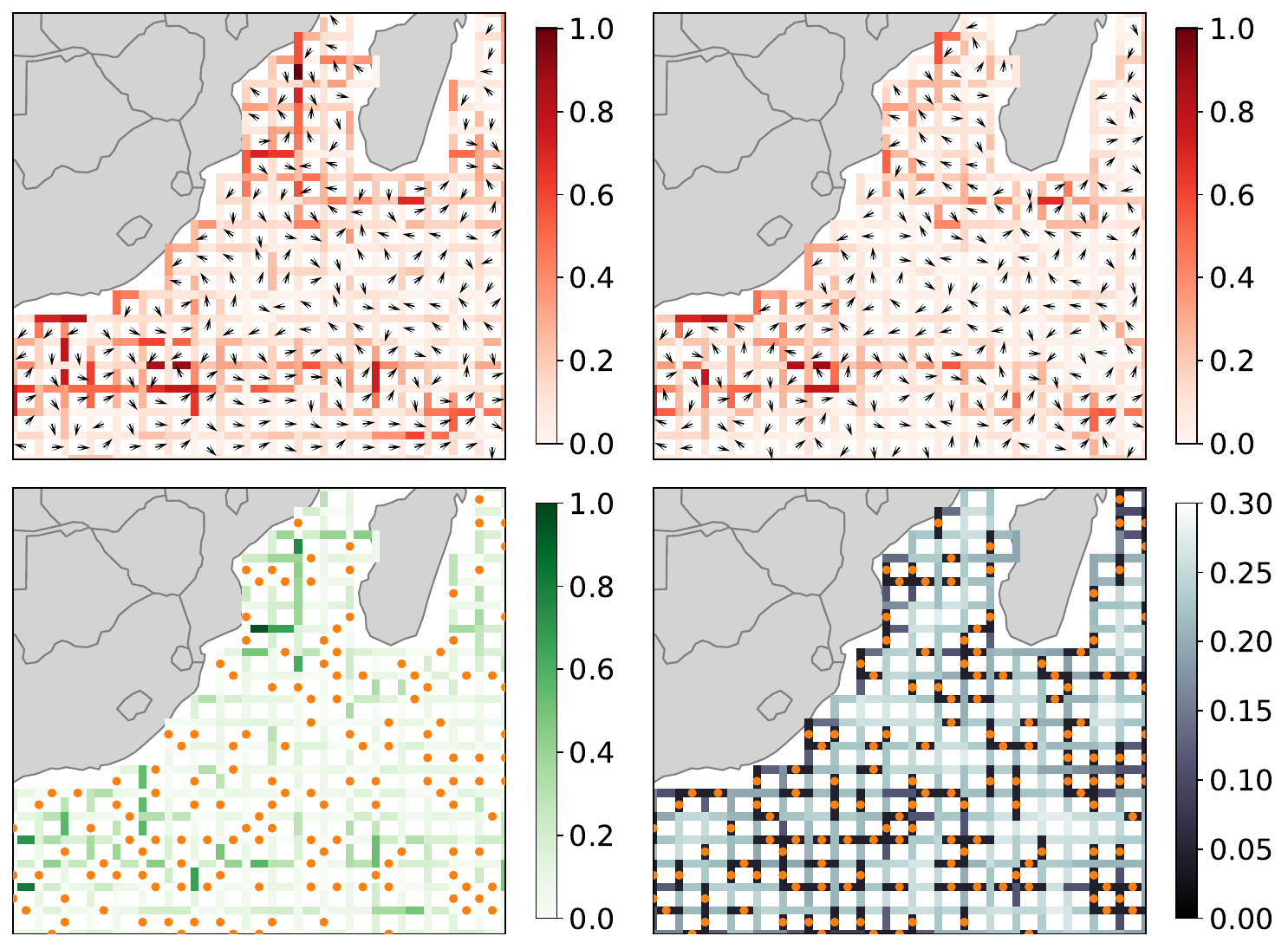}
    \caption{Prediction of geostrophic current around the Southern tip of Africa using the CC-Matérn GP on edges. (Top left) Ground truth. (Top right) Predicted mean. (Bottom left) Absolute error. (Bottom right) Standard deviation. Orange dots are observed edges.}
    \label{fig:ocean}
\end{figure}

We compare the results with a graph Matérn GP baseline defined over the corresponding line graph\footnote{This is the graph constructed by treating the edges as vertices and connecting them if they share a vertex.}. Since this baseline cannot infer directions on the edge signals, we use it to only predict its magnitude.
The results are shown in Table~\ref{table:ocean-baseline-comparison}, where we report the mean square error (MSE) and negative log-likelihood (NLL) scores on the magnitude of the ocean current. MSE results for both models are comparable; however, our edge-based Matérn GP performs better than the graph Matérn GP on the NLL. This suggests that in addition to being able to infer the directions on edges, predictive uncertainties are better and the topological inductive bias contained in the edge-based Matérn GP also helps to improve predictions for the magnitudes.

\begin{table}[t]
\begin{adjustbox}{width=\columnwidth,center}
\begin{tabular}{l  c c } 
     \toprule
      & MSE ($\downarrow$) & NLL ($\downarrow$) \\
     \midrule
     Graph Matérn & ${\bf 0.030 \pm 0.000}$
     & $-684.54 \pm 4.20$ \\
     Edge Matérn (ours) & ${\bf 0.029 \pm 0.001}$ & ${\bf -703.42 \pm 5.10}$ \\
     \bottomrule
\end{tabular}
\end{adjustbox}
\caption{Mean square error (MSE) and negative log-likelihood (NLL) of ocean current magnitude predictions using (a) a graph Matérn baseline, and (b) the edge Matérn GP. Mean and standard error are shown.}
\label{table:ocean-baseline-comparison}
\end{table}

\begin{figure*}[ht]
    \centering
    \begin{subfigure}[b]{0.325\hsize}
        \centering
        \includegraphics[width=0.49\hsize]{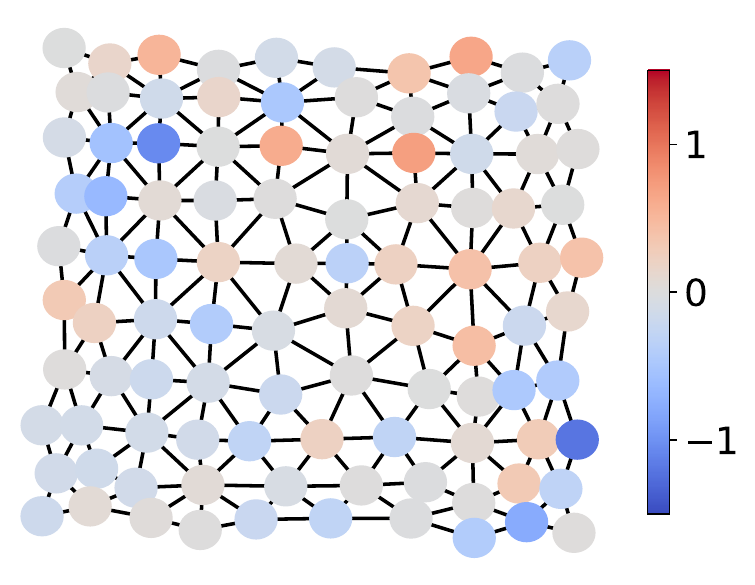}
        \includegraphics[width=0.49\hsize]{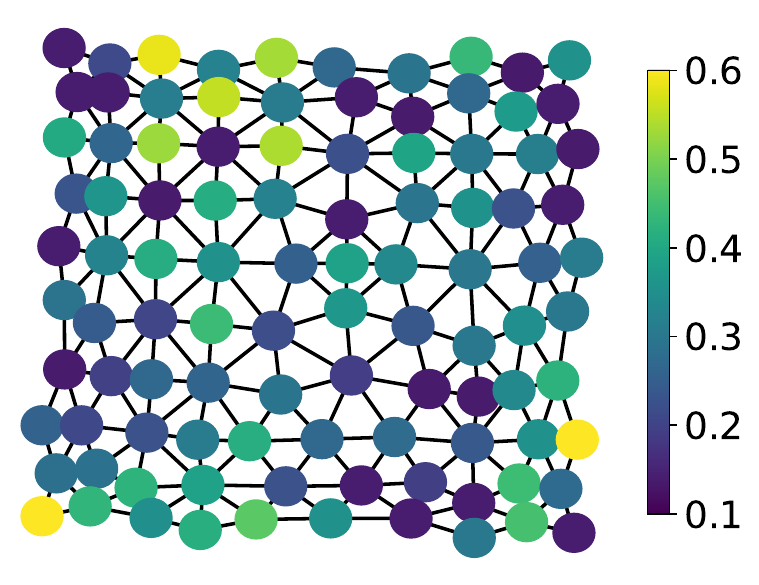}
        \caption{Vertex prediction, RD}  
        \label{fig:fields_rd_vertex}
    \end{subfigure}
    \hfill
        \begin{subfigure}[b]{0.325\hsize}
        \centering
        \includegraphics[width=0.49\hsize]{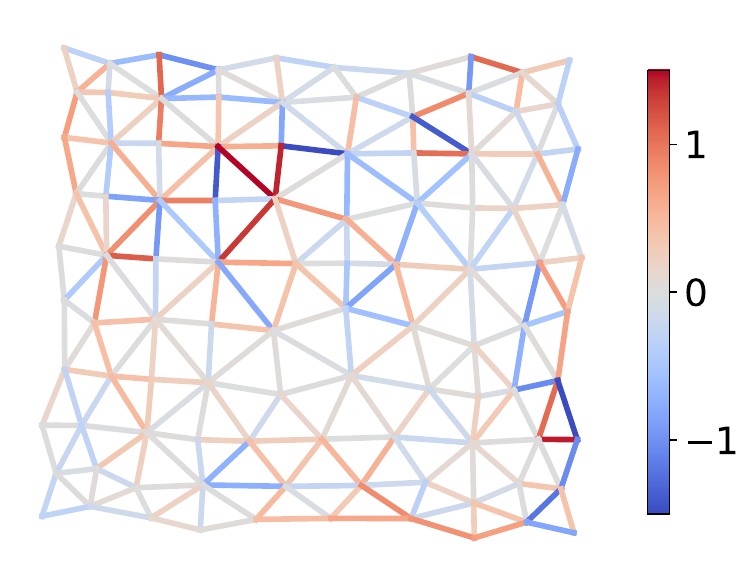}
        \includegraphics[width=0.49\hsize]{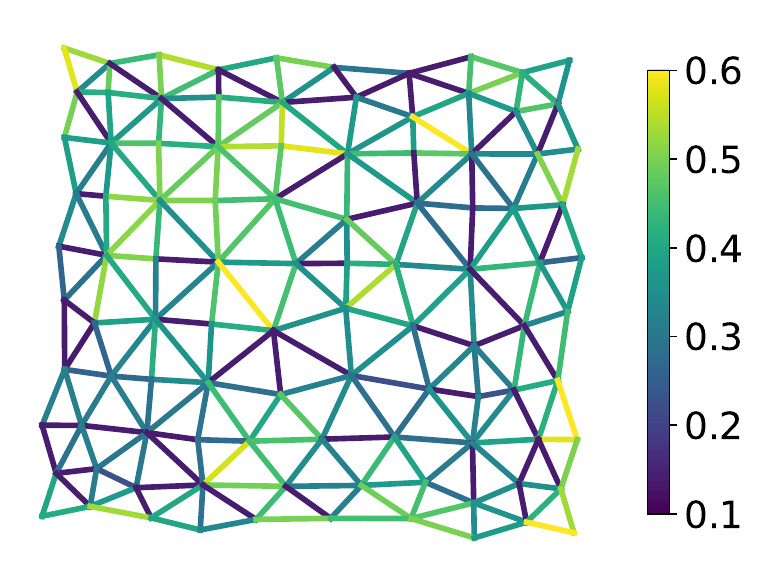}
        \caption{Edge prediction, RD}  
        \label{fig:fields_rd_edge}
    \end{subfigure}
    \hfill
       \begin{subfigure}[b]{0.325\hsize}
        \centering
        \includegraphics[width=0.49\hsize]{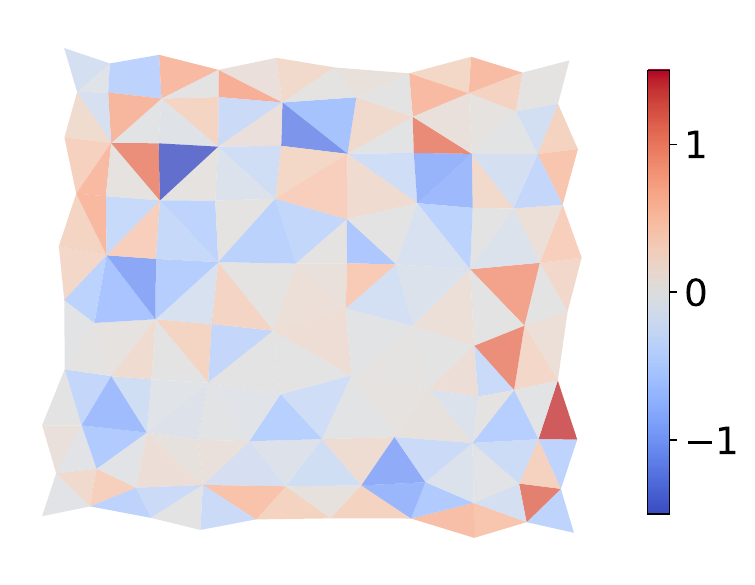}
        \includegraphics[width=0.49\hsize]{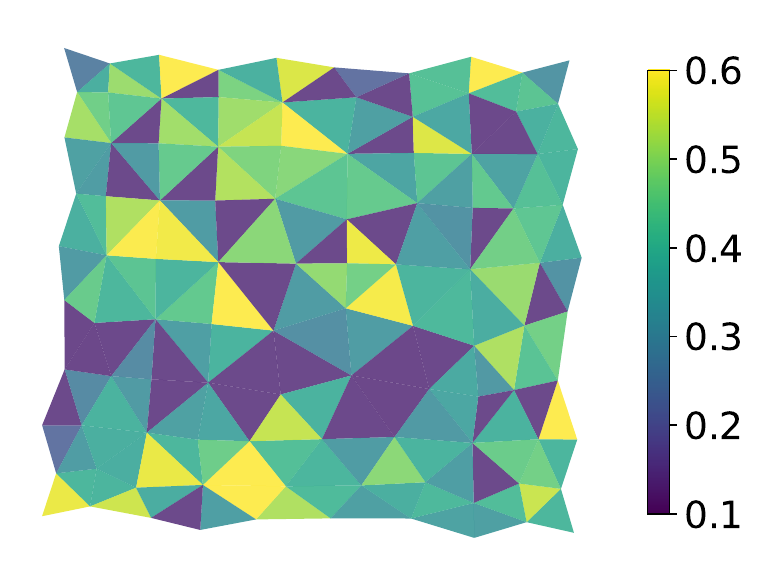}
        \caption{Triangle prediction, RD}  
        \label{fig:fields_rd_triangle}
    \end{subfigure}
\\
    \begin{subfigure}[b]{0.325\hsize}
        \centering
        \includegraphics[width=0.49\hsize]{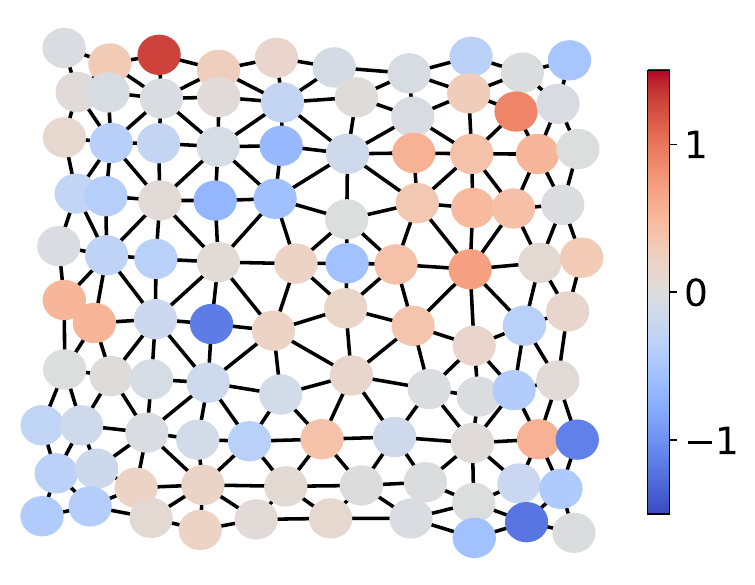}
        \includegraphics[width=0.49\hsize]{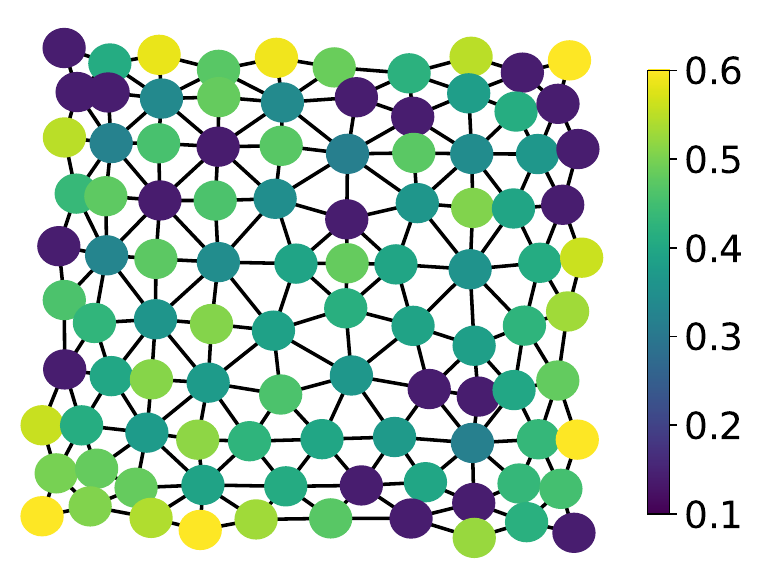}
        \caption{Vertex prediction, CC-Mat\'ern}  
        \label{fig:fields_matern_vertex}
    \end{subfigure}
    \hfill
        \begin{subfigure}[b]{0.325\hsize}
        \centering
        \includegraphics[width=0.49\hsize]{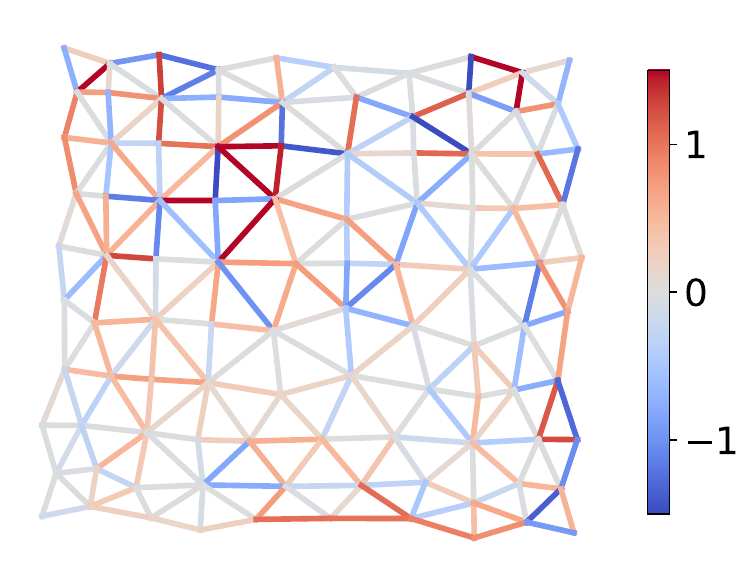}
        \includegraphics[width=0.49\hsize]{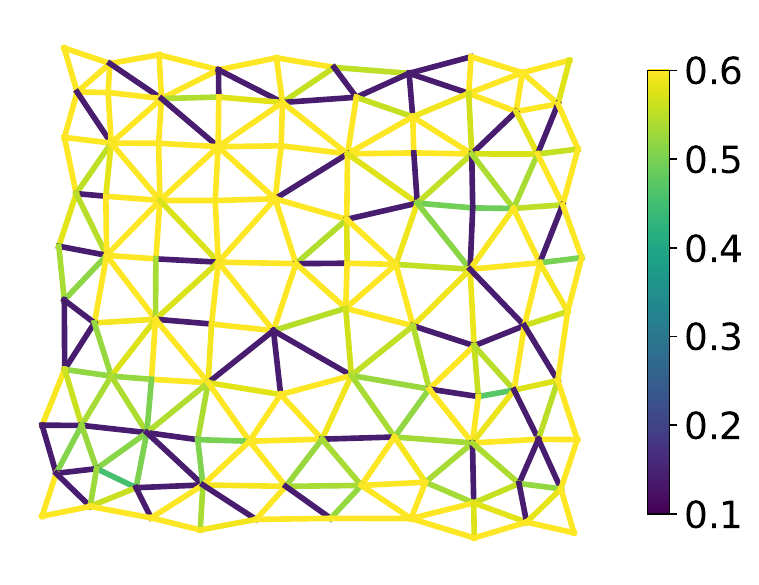}
        \caption{Edge prediction, CC-Mat\'ern}  
        \label{fig:fields_matern_edge}
    \end{subfigure}
    \hfill
       \begin{subfigure}[b]{0.325\hsize}
        \centering
        \includegraphics[width=0.49\hsize]{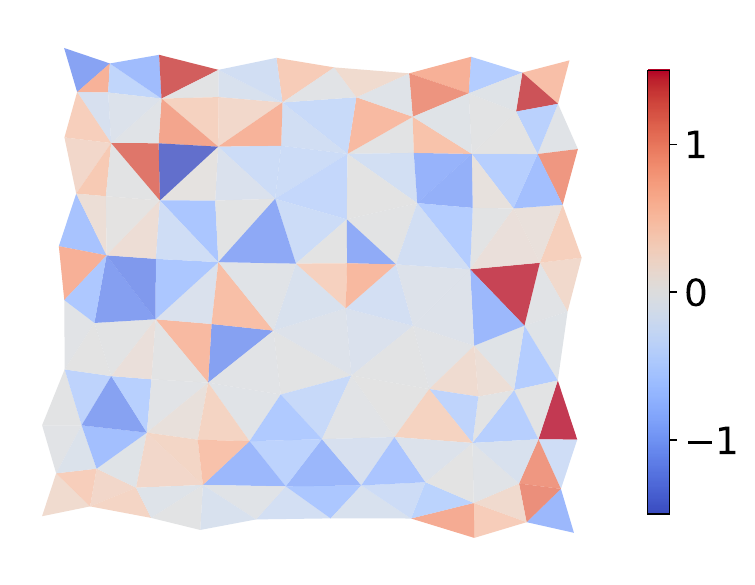}
        \includegraphics[width=0.49\hsize]{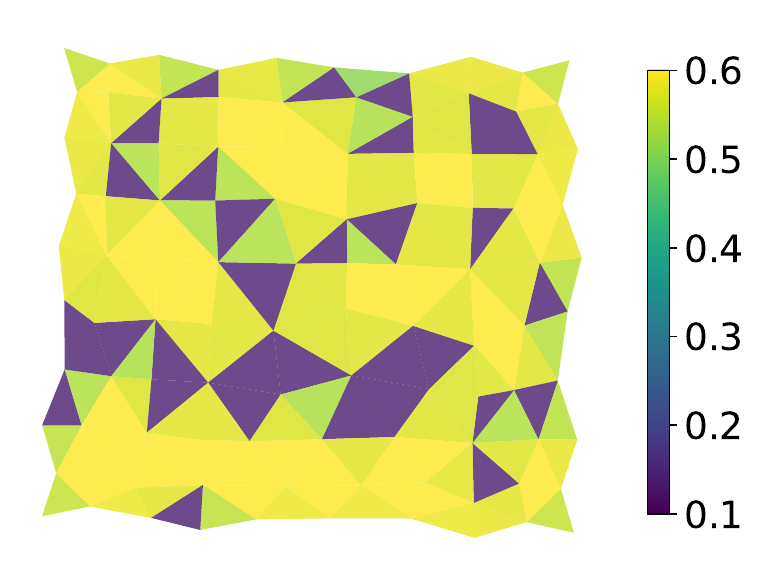}
        \caption{Triangle prediction, CC-Mat\'ern}  
        \label{fig:fields_matern_triangle}
    \end{subfigure}
    \caption{ Predictive distributions on vertices, edges, and triangles using the reaction-diffusion kernel (RD, top row) and the CC-Matérn kernel (bottom row). Left panels: differences between mean prediction and ground truth (values close to 0 are good); right panels: corresponding predictive standard deviations. By taking correlation properties into account, the RD kernel produces better predictions on vertices, edges, and triangles.}
    \label{fig:fields}
\end{figure*}

\begin{table}[t]
    \centering
    \begin{tabular}{l r r} 
       {MSE ($\downarrow$)}  & CC-Mat\'ern & Reaction-diffusion\\
       \midrule
       Vertices & 0.165 $\pm$ 0.005  & {\bf 0.076 $\pm$ 0.004} \\ 
     Edges & 0.335 $\pm$ 0.014 & {\bf 0.200 $\pm$ 0.010} \\
     Triangles & 0.272 $\pm$ 0.005 & {\bf 0.166 $\pm$ 0.005} \\
     \hline
     \rule{0pt}{3ex} NLL ($\downarrow$)
        \\
     \midrule
     Vertices &  28.81 $\pm$ 1.75 & {\bf -9.31 $\pm$ 1.61} \\ 
     Edges & 136.77 $\pm$ 4.32 & {\bf 71.07 $\pm$ 6.29} \\
     Triangles & 82.84 $\pm$ 1.56 & {\bf 39.78 $\pm$ 2.47} \\
    \end{tabular}
    \caption{Mean square error (MSE) and negative log-likelihood (NLL) of predictions on the synthetic data (mean and standard error across 20 random seeds). Overall, the performance of the reaction-diffusion GP is superior to that of the Matérn GP on the cellular complex, highlighting the benefits of mixing information across different cell types.}
    \label{fig:synthetic_results_table}
\end{table}
\subsection{Signal Mixing}\label{sec:synthetic-exp}
We illustrate the benefits of mixing signals using the reaction-diffusion kernel on synthetic data, where we constructed a 2D-simplicial mesh consisting of $10 \times 10$ vertices. We generated artificial signals on the edges by considering a random $1$-cochain $f = \sum_{i=k}^K \xi_i u_i$, where $\xi_i \sim \mathcal{N}(0, \lambda_i^{-1})$, $\{(\lambda_i, u_i)\}_{i}$ are the eigenpairs of the Hodge Laplacian $\Delta_1$ and $0 < k < K$ are the minimal and maximal wavenumbers controlling the smoothness of the edge field.
We then generated data supported on the triangles and vertices of the mesh by applying the coboundary operator $d_1$ and its adjoint $d_1^*$ respectively to $f$ (numerically, this corresponds to applying the matrices $\mat{B}_2^\top$ and $\mat{B}_1$).
We randomly selected a third of the data supported on each type of cell (vertices, edges and triangles) and perturbed them by noise for training. The aim is to recover the signals on the remaining two thirds of the cells.

We compare the results of the Matérn GP on the cellular complex (CC-Matérn) and the reaction-diffusion (RD) GP, used to make predictions across various cell-types. Figure~\ref{fig:fields} shows an example result with $k = 20$ and $K = 100$, where we display the difference (left panels) from the ground truth and the predictive standard deviation (right panels) for vertex (left column), edge (center column), and triangle predictions (right column) for the RD kernel (top row) and the CC-Mat\'ern kernel (bottom row). Overall, we see how the mixing of information across different cell types in the RD kernel helps to improve predictions, as evidenced by the overall smaller errors in each cell type (with small predictive uncertainty), compared to the predictions made by the CC-Matérn kernel. In particular, we see how data supported on one type of cell can be used to enhance the predictions on another cell type as we can infer from the standard deviation plots. The standard deviation of the CC-Matérn GP is more localised around the data points than that of the reaction-diffusion GP, suggesting that information on different cell types is being mixed in the latter.

This behaviour is also quantitatively reflected in Table~\ref{fig:synthetic_results_table}, which reports the mean squared error (MSE) and the negative log-likelihood (NLL) scores of the predictions for both models, computed across 20 random seeds (mean and standard errors). We see how the results for the RD-GP are on average significantly better than that for the CC-Matérn kernel on both metrics, highlighting the benefits of mixing for both prediction and uncertainty quantification.

\subsection{Modelling Electromagnetic Fields}\label{sec:em-experiment}
Another potential application area of GPs defined over CCs is to model electromagnetic fields, which have natural representations as cochains. In particular, the scalar potential, electric field and magnetic field generated by point charges on a 2D plane can be modelled geometrically as scalar, vector and pseudovector fields, respectively. Upon discretisation, these can then be represented by $0, 1$ and $2$-cochains, respectively, using the continuous-discrete correspondence between tensor fields (more precisely, differential forms) and cochains \cite{desbrun2006discrete}.

In this experiment, we use simulations of the scalar potential ($V$), electric field ($E$) and magnetic field ($B$) generated by $10$ randomly sampled point charges. These are then projected onto $0, 1$ and $2$-cochains, respectively, from which we extract noisy observations at a sixth of randomly selected cells in each skeleton of the generated CC. Then, similar to the experiment in Section \ref{sec:synthetic-exp}, the objective is to recover the signals on the remaining cells. In contrast to the previous experiment, the correlation structure between signals supported on different cell types are more complex, provided indirectly through Maxwell's equations. Our goal is thus to see whether the RD kernel is still useful in this setting where the correlation between fields exist, but are not artificially imposed, as with the previous experiment.
The electromagnetic simulation was performed using the Python package \texttt{PyCharge} \cite{filipovich2022pycharge}.

In Table \ref{tab:em-experiment}, we compare the results of the CC-Mat\'ern GP and the reaction-diffusion (RD) GP on this experiment. Generally, we observe that the RD-GP yields slightly better results than the CC-Mat\'ern GP on both the MSE and the NLL, with the exception of the MSE on the scalar potential. For predictions of the electric and magnetic fields, the RD kernel outperformed the Mat\'ern kernel on eight out of ten random seeds that were used to generate the results in Table \ref{tab:em-experiment}. Our results indicate that interestingly, mixing of information between the different cell types using the RD kernel can still be useful in this setting. However, we also note that the improvements that we see here are much less pronounced than what we observed in the previous experiment, where correlations between signals on different cell types were imposed more directly.

\begin{table}
\centering
\begin{adjustbox}{width=\columnwidth}
\begin{tabular}{l | r r | rr} 
   & Matérn MSE & RD MSE & Matérn NLL & RD NLL
    \\
 \midrule
 V &  {\bf 0.102 $\pm$ 0.008} & 0.11 $\pm$ 0.011 & 45.75 $\pm$ 7.20 & {\bf 45.41 $\pm$ 7.35} \\ 
 E & 0.130 $\pm$ 0.008 &  {\bf 0.128 $\pm$ 0.009} & 104.3 $\pm$ 14.0 & {\bf 100.7 $\pm$ 15.8} \\
 B & 0.190 $\pm$ 0.026 & {\bf 0.178 $\pm$ 0.023} & 124.7 $\pm$ 18.8 & {\bf 117.3 $\pm$ 17.2} \\
\end{tabular}
\end{adjustbox}
\caption{Comparison of independent Matérn GPs and the reaction-diffusion (RD) GP for joint modelling of the 2D potential (V), electric (E) and magnetic (B) fields. We compare the mean square error (MSE) and negative log-likelihood (NLL).
For the E and B fields, RD ourperforms independent Matérn on 8/10 seeds.}
\label{tab:em-experiment}
\end{table}

\section{Discussion}\label{sec:discussion}
Our experiments demonstrate the benefits of incorporating the structure of cellular complexes to model data that are naturally supported on higher-order networks, enabling us to treat (a) directed signals and (b) mixed signals with ease, which a standard graph GP cannot handle. This opens up new possibilities for modelling data on non-Euclidean domains. For instance, as a promising direction, we believe that our approach could be useful for modelling vectorial or higher-order tensorial quantities supported on arbitrary manifolds, by relying on the structural parallels between CCs and differential forms, typically employed in numerical simulations of vectorial/tensorial quantities supported on manifolds \cite{arnold2006finite}. This idea is explored in our experiments in Sections \ref{sec:ocean-experiment} and \ref{sec:em-experiment} on simple 2D domains. We aim to extend this to more complex domains in future work, possibly incorporating boundary conditions.

Indeed, vectorial GPs on manifolds have been considered before for example in \cite{hutchinson2021vector} and more recently in \cite{robert2023intrinsic}. However, the former construction relies on embedding the manifold in a larger ambient space in order to make use of scalar kernels, and the latter relies on the Helmholtz decomposition to make use of scalar kernels. On the other hand, by directly encoding the vector information on the edges of a CC, one can easily model vector fields over arbitrary manifolds, possibly with boundaries (see Figure~\ref{fig:simplicial-vector-projection}). Adopting a CC perspective thus makes modelling with vectorial GPs easier, and can be further generalised to the tensorial setting naturally. Further, the reaction-diffusion GP provides a topologically consistent extension to multitask GPs to the CC setting, where the individual GPs can now live on different skeletons of the same CC. This will be useful in situations for modelling multiple fields (cochains) that are correlated.

We note that at present, our reaction-diffusion kernel is controlled only by three hyperparameters $r, c, d$, which may be too restrictive for some modelling purposes.
To make our model more flexible, one possible direction that we can consider is to make use of the Hodge decomposition to split the cochains into exact, co-exact and harmonic components. This will allow us to model each of these components separately, for added flexibility. In a recent work \cite{Yang2024}, the authors consider GPs to model edge signals on a graph and observed that modelling the exact, co-exact and harmonic signals separately can lead to improved results. For signals supported on a cellular complex, a natural analogue of the Hodge decomposition can be provided through the Dirac matrix \cite{calmon2023dirac}. Hence, a promising direction would be to combine this decomposition with our reaction-diffusion kernel to develop a more flexible extension that could fit more complex mixing of information between different cell-types.

Another current limitation of our model is the computational cost associated with computing the eigendecomposition of the Laplacian/Dirac matrix, which can grow very quickly with the size of the network. While this limits the size of networks we can work with, this expensive computation needs to be performed only once and not during training, so we expect one to be able to work with reasonably large networks, consisting of hundreds of thousands of cells.

\begin{figure}
\centering
    \centering
    \includegraphics[height=3cm]{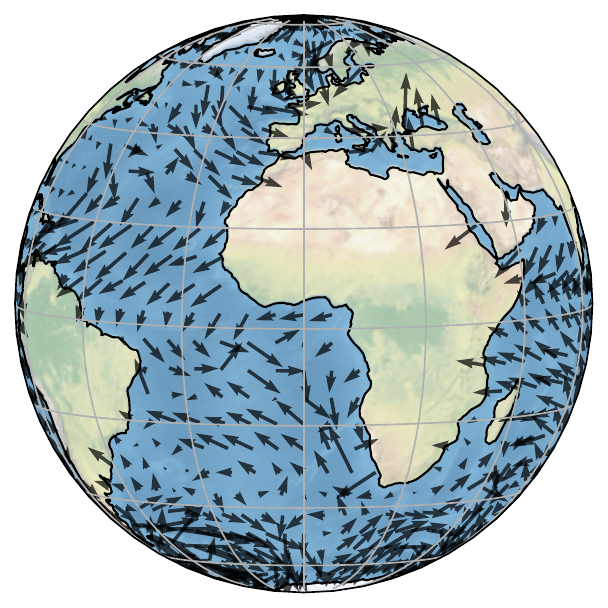}
    \hspace{5mm}
    \includegraphics[height=3cm]{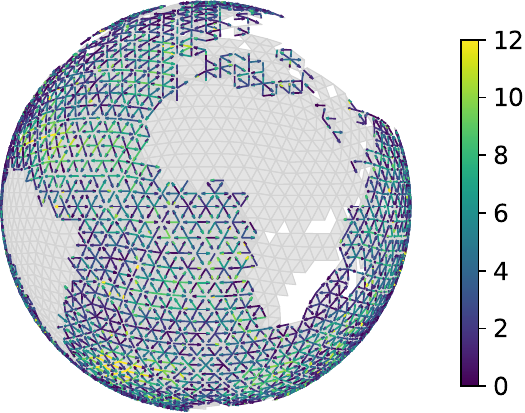}
    \caption{Left: Vector field on a manifold-with-boundaries; Right: Vector field's encoding on the edges of an oriented CC. Modelling such fields is straightforward with CC-GPs.}
    \label{fig:simplicial-vector-projection}
\end{figure}

\section{Conclusion}
\label{sec:conclusion}
We introduced Gaussian processes on cellular complexes as tools for probabilistic modelling on higher-order networks. We identify these as GPs defined over random chains or direct sums thereof, enabling inference on vertices, edges, and higher-order cells. We constructed kernels appropriate for practical modelling. In particular, we generalise the Matérn kernel to cellular complexes and propose the reaction-diffusion kernel, which allows for propagation of information between cells of different orders.

\section*{Acknowledgements}
MA is supported by a Mathematical Sciences Doctoral Training Partnership held by Prof. Helen Wilson, funded by the Engineering and Physical Sciences Research Council (EPSRC), under Project Reference EP/W523835/1. ST is supported by a Department of Defense Vannevar Bush Faculty Fellowship held by Prof. Andrew Stuart, and by the SciAI Center, funded by the Office of Naval Research (ONR), under Grant Number N00014-23-1-2729.

\section*{Impact Statement}
This paper presents work whose goal is to advance the field of machine learning. However, due to its theoretical nature, we do not identify any potential societal consequences of our work.

\bibliography{references}

\begin{thebibliography}{32}
\providecommand{\natexlab}[1]{#1}
\providecommand{\url}[1]{\texttt{#1}}
\expandafter\ifx\csname urlstyle\endcsname\relax
  \providecommand{\doi}[1]{doi: #1}\else
  \providecommand{\doi}{doi: \begingroup \urlstyle{rm}\Url}\fi

\bibitem[Alvarez-Rodriguez et~al.(2021)Alvarez-Rodriguez, Battiston, de~Arruda, Moreno, Perc, and Latora]{Alvarez-Rodriguez2021}
Alvarez-Rodriguez, U., Battiston, F., de~Arruda, G.~F., Moreno, Y., Perc, M., and Latora, V.
\newblock Evolutionary {D}ynamics of {H}igher-{O}rder {I}nteractions in {S}ocial {N}etworks.
\newblock \emph{Nature Human Behaviour}, 2021.

\bibitem[Arnold et~al.(2006)Arnold, Falk, and Winther]{arnold2006finite}
Arnold, D.~N., Falk, R.~S., and Winther, R.
\newblock Finite {E}lement {E}xterior {C}alculus, {H}omological {T}echniques, and {A}pplications.
\newblock \emph{Acta Numerica}, 2006.

\bibitem[Barbarossa \& Sardellitti(2020)Barbarossa and Sardellitti]{Barbarossa2020}
Barbarossa, S. and Sardellitti, S.
\newblock Topological {S}ignal {P}rocessing over {S}implicial {C}omplexes.
\newblock \emph{IEEE Transactions on Signal Processing}, 2020.

\bibitem[Battiloro et~al.(2023)Battiloro, Testa, Giusti, Sardellitti, Lorenzo, and Barbarossa]{Battiloro2023}
Battiloro, C., Testa, L., Giusti, L., Sardellitti, S., Lorenzo, P.~D., and Barbarossa, S.
\newblock Generalized {S}implicial {A}ttention {N}eural {N}etworks.
\newblock \emph{arXiv preprint arXiv:2309.02138}, 2023.

\bibitem[Bianconi(2021)]{Bianconi2021}
Bianconi, G.
\newblock The {T}opological {D}irac {E}quation of {N}etworks and {S}implicial {C}omplexes.
\newblock \emph{Journal of Physics: Complexity}, 2021.

\bibitem[Bodnar et~al.(2021)Bodnar, Frasca, Otter, Wang, Liò, Montúfar, and Bronstein]{Bodnar2021}
Bodnar, C., Frasca, F., Otter, N., Wang, Y.~G., Liò, P., Montúfar, G., and Bronstein, M.
\newblock Weisfeiler and {L}ehman {G}o {C}ellular: {CW} {N}etworks.
\newblock In \emph{Advances in Neural Information Processing Systems}, 2021.

\bibitem[Borovitskiy et~al.(2020)Borovitskiy, Terenin, Mostowsky, and Deisenroth]{borovitskiy2020matern}
Borovitskiy, V., Terenin, A., Mostowsky, P., and Deisenroth, M.
\newblock Mat{\'e}rn {G}aussian {P}rocesses on {R}iemannian {M}anifolds.
\newblock In \emph{Advances in Neural Information Processing Systems}, 2020.

\bibitem[Borovitskiy et~al.(2021)Borovitskiy, Azangulov, Terenin, Mostowsky, Deisenroth, and Durrande]{Borovitskiy2021}
Borovitskiy, V., Azangulov, I., Terenin, A., Mostowsky, P., Deisenroth, M.~P., and Durrande, N.
\newblock Mat\'ern {G}aussian {P}rocesses on {G}raphs.
\newblock In \emph{International Conference on Artificial Intelligence and Statistics}, 2021.

\bibitem[Bradbury et~al.(2018)Bradbury, Frostig, Hawkins, Johnson, Leary, Maclaurin, Necula, Paszke, Vander{P}las, Wanderman-{M}ilne, and Zhang]{jax2018github}
Bradbury, J., Frostig, R., Hawkins, P., Johnson, M.~J., Leary, C., Maclaurin, D., Necula, G., Paszke, A., Vander{P}las, J., Wanderman-{M}ilne, S., and Zhang, Q.
\newblock {JAX}: {C}omposable {T}ransformations of {P}ython+{N}um{P}y {P}rograms, 2018.
\newblock URL \url{http://github.com/google/jax}.

\bibitem[Calmon et~al.(2023)Calmon, Schaub, and Bianconi]{calmon2023dirac}
Calmon, L., Schaub, M.~T., and Bianconi, G.
\newblock Dirac {S}ignal {P}rocessing of {H}igher-{O}rder {T}opological {S}ignals.
\newblock \emph{New Journal of Physics}, 2023.

\bibitem[Corso et~al.(2023)Corso, St{\"a}rk, Jing, Barzilay, and Jaakkola]{Corso2023}
Corso, G., St{\"a}rk, H., Jing, B., Barzilay, R., and Jaakkola, T.
\newblock Diff{D}ock: {D}iffusion {S}teps, {T}wists, and {T}urns for {M}olecular {D}ocking.
\newblock In \emph{International Conference on Learning Representations}, 2023.

\bibitem[Desbrun et~al.(2006)Desbrun, Kanso, and Tong]{desbrun2006discrete}
Desbrun, M., Kanso, E., and Tong, Y.
\newblock Discrete {D}ifferential {F}orms for {C}omputational {M}odeling.
\newblock In \emph{ACM SIGGRAPH 2006 Courses}. 2006.

\bibitem[Fernandes et~al.(2019)Fernandes, Allamanis, and Brockschmidt]{Fernandes2019}
Fernandes, P., Allamanis, M., and Brockschmidt, M.
\newblock Structured {N}eural {S}ummarization.
\newblock In \emph{International Conference on Learning Representations}, 2019.

\bibitem[Filipovich \& Hughes(2022)Filipovich and Hughes]{filipovich2022pycharge}
Filipovich, M.~J. and Hughes, S.
\newblock Py{C}harge: an {O}pen-source {P}ython {P}ackage for {S}elf-{C}onsistent {E}lectrodynamics {S}imulations of {L}orentz {O}scillators and {M}oving {P}oint {C}harges.
\newblock \emph{Computer Physics Communications}, 274:\penalty0 108291, 2022.

\bibitem[Hajij et~al.(2020)Hajij, Istvan, and Zamzmi]{Hajij2020}
Hajij, M., Istvan, K., and Zamzmi, G.
\newblock Cell {C}omplex {N}eural {N}etworks.
\newblock In \emph{Workshop on Topological Data Analysis and Beyond at NeurIPS}, 2020.

\bibitem[Hatcher(2001)]{Hatcher2001}
Hatcher, A.
\newblock \emph{Algebraic {T}opology}.
\newblock Cambridge University Press, 2001.

\bibitem[Hutchinson et~al.(2021)Hutchinson, Terenin, Borovitskiy, Takao, Teh, and Deisenroth]{hutchinson2021vector}
Hutchinson, M., Terenin, A., Borovitskiy, V., Takao, S., Teh, Y., and Deisenroth, M.
\newblock Vector-{V}alued {G}aussian {P}rocesses on {R}iemannian {M}anifolds via {G}auge {I}ndependent {P}rojected {K}ernels.
\newblock In \emph{Advances in Neural Information Processing Systems}, 2021.

\bibitem[Kuzmin et~al.(2018)Kuzmin, VanderSluis, Wang, Tan, Deshpande, Chen, Usaj, Balint, Usaj, van Leeuwen, Koch, Pons, Dagilis, Pryszlak, Wang, Hanchard, Riggi, Xu, Heydari, Luis, Shuteriqi, Zhu, Dyk, Sharifpoor, Costanzo, Loewith, Caudy, Bolnick, Brown, Andrews, Boone, and Myers]{Kuzminz2018}
Kuzmin, E., VanderSluis, B., Wang, W., Tan, G., Deshpande, R., Chen, Y., Usaj, M., Balint, A., Usaj, M.~M., van Leeuwen, J., Koch, E., Pons, C., Dagilis, A., Pryszlak, M., Wang, J. Z.~Y., Hanchard, J., Riggi, M., Xu, K., Heydari, H., Luis, B.-J.~S., Shuteriqi, E., Zhu, H., Dyk, N.~V., Sharifpoor, S., Costanzo, M., Loewith, R., Caudy, A., Bolnick, D., Brown, G., Andrews, B., Boone, C., and Myers, C.
\newblock Systematic {A}nalysis of {C}omplex {G}enetic {I}nteractions.
\newblock \emph{Science}, 2018.

\bibitem[Nikitin et~al.(2022)Nikitin, John, Solin, and Kaski]{Nikitin2022}
Nikitin, A., John, S., Solin, A., and Kaski, S.
\newblock Non-separable {S}patio-temporal {G}raph {K}ernels via {SPDE}s.
\newblock In \emph{International Conference on Artificial Intelligence and Statistics}, 2022.

\bibitem[{NOAA CoastWatch}(2023)]{noaa}
{NOAA CoastWatch}.
\newblock {Sea {L}evel {A}nomaly and {G}eostrophic {C}urrents, {M}ulti-mission, {G}lobal, {O}ptimal {I}nterpolation, {G}ridded}.
\newblock https://coastwatch.noaa.gov/cwn/products/sea-level-anomaly-and-geostrophic-currents-multi-mission-global-optimal-interpolation.html, 2023.
\newblock Altimetry data are provided by the NOAA Laboratory for Satellite Altimetry.

\bibitem[Opolka et~al.(2022)Opolka, Zhi, Liò, and Dong]{Opolka2022}
Opolka, F., Zhi, Y.-C., Liò, P., and Dong, X.
\newblock Adaptive {G}aussian {P}rocesses on {G}raphs via {S}pectral {G}raph {W}avelets.
\newblock In \emph{International Conference on Artificial Intelligence and Statistics}, 2022.

\bibitem[Pinder \& Dodd(2022)Pinder and Dodd]{Pinder2022}
Pinder, T. and Dodd, D.
\newblock {GPJ}ax: a {G}aussian {P}rocess {F}ramework in {JAX}.
\newblock \emph{Journal of Open Source Software}, 2022.

\bibitem[Pinder et~al.(2021)Pinder, Turnbull, Nemeth, and Leslie]{pinder2021gaussian}
Pinder, T., Turnbull, K., Nemeth, C., and Leslie, D.
\newblock Gaussian {P}rocesses on {H}ypergraphs.
\newblock \emph{arXiv preprint arXiv:2106.01982}, 2021.

\bibitem[Robert-Nicoud et~al.(2024)Robert-Nicoud, Krause, and Borovitskiy]{robert2023intrinsic}
Robert-Nicoud, D., Krause, A., and Borovitskiy, V.
\newblock Intrinsic {G}aussian {V}ector {F}ields on {M}anifolds.
\newblock In \emph{International Conference on Artificial Intelligence and Statistics}, 2024.

\bibitem[Roddenberry et~al.(2022)Roddenberry, Schaub, and Hajij]{Roddenberry2022}
Roddenberry, T.~M., Schaub, M.~T., and Hajij, M.
\newblock Signal {P}rocessing on {C}ell {C}omplexes.
\newblock In \emph{IEEE International Conference on Acoustics, Speech and Signal Processing}, 2022.

\bibitem[Rue \& Held(2005)Rue and Held]{rue2005gaussian}
Rue, H. and Held, L.
\newblock \emph{Gaussian {M}arkov {R}andom {F}ields: {T}heory and {A}pplications}.
\newblock CRC Press, 2005.

\bibitem[Scarselli et~al.(2008)Scarselli, Gori, Tsoi, Hagenbuchner, and Monfardini]{Scarselli2008}
Scarselli, F., Gori, M., Tsoi, A.~C., Hagenbuchner, M., and Monfardini, G.
\newblock The {G}raph {N}eural {N}etwork {M}odel.
\newblock \emph{IEEE Transactions on Neural Networks}, 2008.

\bibitem[Smola \& Kondor(2003)Smola and Kondor]{smola2003kernels}
Smola, A.~J. and Kondor, R.
\newblock Kernels and {R}egularization on {G}raphs.
\newblock In \emph{Conference on Learning Theory}, 2003.

\bibitem[Staniforth et~al.(2013)Staniforth, Melvin, and Wood]{staniforth2013gungho}
Staniforth, A., Melvin, T., and Wood, N.
\newblock Gungho! a {N}ew {D}ynamical {C}ore for the {U}nified {M}odel.
\newblock In \emph{ECMWF Workshop on Recent Developments in Numerical Methods for Atmosphere and Ocean Modelling}, 2013.

\bibitem[Yang et~al.(2024)Yang, Borovitskiy, and Isufi]{Yang2024}
Yang, M., Borovitskiy, V., and Isufi, E.
\newblock Hodge-{C}ompositional {E}dge {G}aussian {P}rocesses.
\newblock In \emph{International Conference on Artificial Intelligence and Statistics}, 2024.

\bibitem[Yu et~al.(2011)Yu, Yang, Nakahara, Santos, Nikolić, and Plenz]{Yu2011}
Yu, S., Yang, H., Nakahara, H., Santos, G., Nikolić, D., and Plenz, D.
\newblock Higher-{O}rder {I}nteractions {C}haracterized in {C}ortical {A}ctivity.
\newblock \emph{Journal of Neuroscience}, 2011.

\bibitem[Zhi et~al.(2023)Zhi, Ng, and Dong]{Zhi2023}
Zhi, Y.-C., Ng, Y.~C., and Dong, X.
\newblock Gaussian {P}rocesses on {G}raphs via {S}pectral {K}ernel {L}earning.
\newblock \emph{IEEE Transactions on Signal and Information Processing over Networks}, 2023.

\end{thebibliography}
\bibliographystyle{icml2024}

\newpage
\appendix
\onecolumn
\section{Cell Orientation and Boundaries}\label{app:orientation}
In this appendix, we provide more details on the computation of cell boundaries.
To this end, we first require a notion of orientation on cells. For low dimensional $k$-cells, we have the following intuitive definitions of what we mean by an orientation:
\begin{itemize}
    \item $k=0$ (i.e., a point): A choice of either ``$+$" or ``$-$".
    \item $k=1$ (i.e., a line segment): A choice of direction from one endpoint to the other.
    \item $k=2$ (i.e., a 2D-disk): A choice of rotation direction (clockwise or anticlockwise).
\end{itemize}
Generally, the orientation of a $k$-dimensional unit disk $D^k$ is the choice of a continuously varying unit normal vector field $\hat{\vec{n}} : D^k \rightarrow \mathbb{R}^{k+1}$ on $D^k$, viewed as a surface embedded in $\mathbb{R}^{k+1}$. Here, a unit normal vector $\hat{\vec{n}}(s)$ for $s \in D^k$ is a vector in $\mathbb{R}^{k+1}$ such that $\|\hat{\vec{n}}(s)\| = 1$ and $\hat{\vec{n}}(s) \cdot \vec{v} = 0$ for any $\vec{v} \in T_sD^k \subset \mathbb{R}^{k+1}$, where $T_sD^k$ is the tangent space of $D^k$ at point $s$ (i.e., the space of all vectors in $\mathbb{R}^{k+1}$ that are tangential to $D^k$). Since orientation is a topological invariant, we can define the orientation on a $k$-cell generally to be the orientation of the $k$-dimensional disk that it is homeomorphic to.

A useful way of thinking about the orientation of $D^k$ is in terms of how it is embedded in $\mathbb{R}^{k+1}$. In particular, we may identify a point $x \in D^k$ as a vector $(x_1, \ldots, x_k, 0)^\top \in \mathbb{R}^{k+1}$ such that $\sqrt{\sum_{i=1}^k x_i^2} < 1$. We can thus choose the unit normal field to be given by $\hat{\vec{n}} = (x_1, \ldots, x_k, 1)^\top \in \mathbb{R}^{k+1}$ for any $x_1, \ldots, x_k$ parameterising $D^k$. From this perspective, we can define an {\em induced orientation} $\partial \hat{\vec{n}}$ on its boundary $\partial D^k \cong \mathbb{S}^{k-1}$, by taking the unit normal field pointing outwards from the disk, with respect to this embedding.

\begin{figure}[h]
    \centering
    \begin{subfigure}[b]{0.45\hsize}
    \centering
    \includegraphics[width=0.6\hsize]{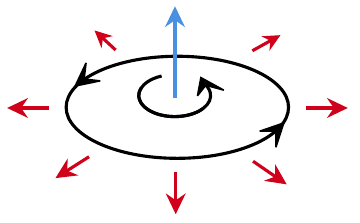}
    \caption{Orientation and induced orientation on a $2$-cell}
    \label{fig:2-cell-orientation}
    \end{subfigure}
    \begin{subfigure}[b]{0.45\hsize}
    \centering
    \includegraphics[width=0.8\hsize]{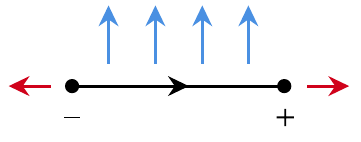}
    \caption{Orientation and induced orientation on a $1$-cell}
    \label{fig:1-cell-orientation}
    \end{subfigure}
    \caption{Illustration of orientations (\textcolor{blue}{blue} arrows) on (a) a $2$-cell, and (b) a $1$-cell, along with the induced orientation (\textcolor{red}{red} arrows) on the corresponding boundaries. The orientation can be understood as a choice of a continuous vector field in the ambient Euclidean space that points in the direction normal to the $k$-cell (in this case, pointing upwards). The induced orientation on the boundaries can be understood as the outward pointing normal field with respect to the embedding.}
    \label{fig:}
\end{figure}
    
\begin{example}
    Consider a 2D-disk $D^2$ embedded in $\mathbb{R}^3$. The orientation on $D^2$ is then given by the unit normal field $\hat{\vec{n}}(x,y,0) = (x,y,1)$ for $\sqrt{x^2 + y^2} < 1$ and the induced orientation on its boundary $\mathbb{S}^1 \cong \partial D^2$ is given by the outward unit normal field $\partial \hat{\vec{n}}(x,y,0) = (x,y,0)$ for $\sqrt{x^2 + y^2} = 1$. Intuitively, the former can be thought of as anticlockwise rotation of the disk, deduced by aligning one's thumb with the unit normal direction and applying the right-hand rule. Now, aligning the thumb with $\hat{\vec{n}}$ and the index finger with $\partial \hat{\vec{n}}$, the middle finger, according to the right-hand rule, points in the anticlockwise direction around the boundary, giving us a more intuitive interpretation of the induced orientation (see Figure \ref{fig:2-cell-orientation}).
\end{example}

\begin{example}
    Consider a line-segment (i.e., a 1D disk) embedded in $\mathbb{R}^2$ with respect to the parameterisation $\{(x, 0) \in \mathbb{R}^2 : x \in (-1, 1)\}$. The orientation of the line segment is determined by the unit normal field $\hat{\vec{n}}(x,0) = (x,1)$ for $x \in (-1,1)$ and the induced orientation is given by the outward normal field $\partial \hat{\vec{n}}(x,0) = (x, 0)$ for $x \in \{-1,1\}$. Again, we can use the right-hand rule to determine the ``intuitive" interpretation of orientations here: pointing the thumb towards the page and the index finger aligned with $\hat{\vec{n}}$, the middle finger points in the direction going from left endpoint $x = -1$ to the right endpoint $x=1$. Since the middle finger as a result aligns with $\partial \hat{\vec{n}}(x,0)$ at $x=1$, we think of the induced orientation at $(1,0)$ as having the sign $``+"$. However, it does not align with $\partial \hat{\vec{n}}(x,0)$ at $x=-1$, hence the induced orientation at $(-1,0)$ has the sign $``-"$ (see Figure \ref{fig:1-cell-orientation}).
\end{example}

Next, the {\em boundary} of a $k$-cell $e_\alpha^k$, is defined generally as a $(k-1)$-chain
\begin{align}
    \partial e_\alpha^k = \sum_{\beta=1}^{N_{k-1}} \mathrm{deg}(\chi^{\alpha\beta}) e_\beta^{k-1},
\end{align}
where $\mathrm{deg}(\chi^{\alpha\beta})$ is the {\em Brouwer degree} of the surjection $\chi^{\alpha\beta} : \mathbb{S}^{k-1} \cong \partial e_\alpha^k \stackrel{\phi^k_{\alpha}}{\rightarrow} X^{k-1} \stackrel{/}{\rightarrow} X^{k-1} / (X^{k-1} - e_\beta^{k-1}) \cong \mathbb{S}^{k-1}$, mapping the boundary of $e_\alpha^k$ (homeomorphic to $\mathbb{S}^{k-1}$) to the $(k-1)$-cell $e_\beta^{k-1}$, identified with $\mathbb{S}^{k-1}$ by collapsing its boundary to a single point \citep{Hatcher2001}. For a regular cellular complex, $\mathrm{deg}(\chi^{\alpha\beta})$ is either $0$ or $\pm 1$ depending on whether or not $e_\beta^{k-1}$ is a part of $\partial e_\alpha^k$ under the attaching map. That is, if $e_\beta^{k-1}$ is not part of the boundary $\partial e_\alpha^k$, then we take $\mathrm{deg}(\chi^{\alpha\beta}) = 0$.
Otherwise, if the induced orientation of $\partial e_\alpha^k$ aligns with the orientation of $e_\beta^{k-1}$ (i.e., take an embedding $X^{k-1} \xhookrightarrow{} \mathbb{R}^k$ and see how the unit normal fields of $\partial e_\alpha^k$ and $e_\beta^{k-1}$ align), then we take $\mathrm{deg}(\chi^{\alpha\beta}) = 1$ and if it has opposite orientations, we take $\mathrm{deg}(\chi^{\alpha\beta}) = -1$. We will also adopt the notation $\mathrm{deg}(e_\alpha^k \rightarrow e_\beta^{k-1})$ to denote $\mathrm{deg}(\chi^{\alpha\beta})$ in order to make the cells involved more explicit.

The computation of the Brouwer degree can in general be challenging. However, in some special cases, the computation can be simplified, as we show in the following examples.

\begin{example}
    Consider a $k$-simplex, which can be identified as a collection of $k+1$ vertices, say $v_0, \ldots, v_k$ without loss of generality. Its orientation is determined by the parity of the permutation of these $k+1$ vertices. Hence, we represent it by an equivalence class $[v_0, \ldots, v_k]$, where equivalence is defined by the parity. We set $e^k_\alpha = [v_0, \ldots, v_k]$.
    Now consider an oriented face of this simplex, represented by an equivalence class of $k$ vertices $e^{k-1}_\beta = [w_1, \ldots, w_k]$, such that $\{w_1, \ldots, w_k\} = \{v_0, \ldots, \cancel{v_\ell}, \ldots, v_k\}$ for some $\ell \in \{0, \ldots, k\}$. Naturally, this can be represented by a permutation $\sigma \in S_{k+1}$ on $\{0, \ldots, k\}$ with $\sigma(0) = \ell$ and $w_i = v_{\sigma(i)}$ for $i = 1, \ldots, k$. Then, we take $\mathrm{deg}(e_\alpha^k \rightarrow e_\beta^{k-1}) = \mathrm{sgn}(\sigma)$, the signature of the permutation $\sigma$. Thus, in general, we can write
    \begin{align}\label{eq:bdry-simplex-case}
        \partial [v_1, \ldots, v_{k+1}] = \sum_{\ell=1}^{k+1} (-1)^{\ell}[v_1, \ldots, \cancel{v_\ell}, \ldots, v_{k+1}].
    \end{align}
\end{example}

\begin{example}
    Consider a 2D polygonal cell $e^k_\alpha$ with vertices $v_1, \ldots, v_k$, and re-order them such that it revolves around the polygon in a clockwise or anticlockwise manner: i.e., $v_{\sigma(1)} \rightarrow v_{\sigma(2)} \rightarrow \cdots \rightarrow v_{\sigma(k-1)} \rightarrow v_{\sigma(k)} \rightarrow v_{\sigma(1)}$ for some permutation $\sigma \in S_k$. Whether the ordering here is clockwise or anticlockwise determines the orientation of the cell. Since the choice of the first vertex in the ordering is not important, we represent this polygon as the equivalence class $e^k_\alpha = [v_{\sigma(1)}, \ldots, v_{\sigma(k)}]$, where equivalence is defined by permutation with respect to the cyclic group $C_k \subset S_k$. Now, an oriented edge of this polygon can be represented by an ordered tuple $e_\beta^{k-1} = (v_i, v_j)$ for $i < j$, such that either $(i, j) = (\sigma(\ell), \sigma(\ell+1))$ or $(i, j) = (\sigma(\ell+1), \sigma(\ell))$ for some $\ell \in \{1, \ldots, k\}$ (we use the convention $k+1 \equiv 1$, i.e., assume the indices are elements of $\mathbb{Z}/k\mathbb{Z}$). Then, we set $\mathrm{deg}(e_\alpha^k \rightarrow e_\beta^{k-1}) = 1$ in the former case and $\mathrm{deg}(e_\alpha^k \rightarrow e_\beta^{k-1}) = -1$ in the latter. Hence, using the convention $(v_i, v_j) = -(v_j, v_i)$, we can check that
    \begin{align}\label{eq:bdry-polygon-case}
        \partial [v_{\sigma(1)}, \ldots, v_{\sigma(k)}] = \sum_{\ell=1}^{k+1} (v_{\sigma(\ell)}, v_{\sigma(\ell+1)}).
    \end{align}
\end{example}

In the general case, we can compute the boundary of an arbitrary cell by considering its {\em simplicial decomposition}. That is, we discretise the $k$-cell $e^k_\alpha$ by a collection of $N$ $k$-simplices. This can be expressed as a $k$-chain
\begin{align}
    e^k_\alpha = \sum_{i=1}^N [v_{i_1}, \ldots, v_{i_k}].
\end{align}
We can compute its boundary by first taking
\begin{align}
    \partial e^k_\alpha = \sum_{i=1}^N 
    \partial [v_{i_1}, \ldots, v_{i_k}],
\end{align}
then using \eqref{eq:bdry-simplex-case} to give a simplicial decomposition of $\partial e^k_\alpha$, and finally collectivising with respect to the simplicial decomposition of $e^{k-1}_\beta$.

\section{Numerical Representation of Cellular Complexes}\label{app:numerical-representation}
In this appendix, we provide a full derivation of the numerical representation of key concepts on cellular complexes, as displayed in Table \ref{table:numerical-representation}.
To make this appendix self-contained, we first re-introduce some definitions.

\begin{definition}[$k$-chains]
A $k$-chain on $X$ is a free Abelian group whose generator is the set of all $k$-cells comprising $X$. The space of all $k$-chains on $X$ is denoted $C_k(X)$.
\end{definition}

\begin{definition}[Boundary operators]
    For $k \in \{1, \ldots, n\}$, the boundary operator is a group homomorphism $\partial_k : C_k(X) \rightarrow C_{k-1}(X)$ mapping $k$-chains to $(k-1)$-chains, i.e.,
    \begin{align}
        \partial_k \left(\sum_{\alpha=1}^{N_k} n_a e^k_\alpha\right) = \sum_{\alpha=1}^{N_k} n_a \partial e^k_\alpha,
    \end{align}
    where $\partial e^k_\alpha$ is the boundary of the cell $e_k^\alpha$, viewed as a $(k-1)$-chain (see Appendix \ref{app:orientation}). For convention, we also take $\partial_k c \equiv 0$ for $k \in \{0, n+1\}$.
\end{definition}

\begin{definition}[$k$-cochains]
A $k$-cochain on $X$ is a group homomorphism $f : C_k(X) \rightarrow \mathbb{R}$ assigning real numbers to $k$-chains, i.e.,
\begin{align} \label{eq:cochain-def-appendix}
    f\Big(\sum_{\alpha=1}^{N_k} n_\alpha e^k_\alpha\Big) = \sum_{\alpha=1}^{N_k} n_\alpha f(e_\alpha^k),
\end{align}
where $f(e_\alpha^k) \in \mathbb{R}$ is the value of $f$ at cell $e_\alpha^k$. The space of all $k$-cochains on $X$ is denoted $C^k(X)$ and forms a real vector space.
\end{definition}

\begin{definition}[Coboundary operators]
    For $0 \leq k < n$, the coboundary operator $d_k : C^k(X) \rightarrow C^{k+1}(X)$ is a linear map defined via the relation
    \begin{align} \label{eq:coboundary-def-appendix}
        d_k f(c) = f(\partial_{k+1} c)
    \end{align}
    for all $f \in C^k(X)$ and $c \in C_{k+1}(X)$.
    For convention, we also take $d_k f \equiv 0$ for $k \in \{-1, n\}$.
\end{definition}

\begin{definition}[$L^2$-inner product on cochains]\label{def:inner-product-appendix}
For each $k$, let $\{w_\alpha^k\}_{\alpha=1}^{N_k}$ be a set of real, positive weights. Then, for any $f, g \in C^k(X)$, we define the weighted $L^2$-inner product as
\begin{align} \label{eq:k-cochain-inner-product-appendix}
    \left<f, g\right>_{L^2(\vec{w}^k)} := \sum_{\alpha=1}^{N_k} w_\alpha^k \,f(e^k_\alpha) \,g(e^k_\alpha).
\end{align}
\end{definition}

\begin{definition}[$L^2$-adjoint of the coboundary operator]
For each $k$, let $\{w_\alpha^k\}_{\alpha=1}^{N_k}$ be a set of real, positive weights. The $L^2$-adjoint of the coboundary operator, denoted $d_k^* : C^{k+1}(X) \rightarrow C^{k}(X)$ is defined by
\begin{align} \label{eq:codifferential-def-appendix}
    \left<d_k^* f, g\right>_{L^2(\vec{w}^{k})} = \left<f, d_{k}g\right>_{L^2(\vec{w}^{k+1})},
\end{align}
for any $f \in C^{k+1}(X)$ and $g \in C^{k}(X)$.
\end{definition}

\begin{definition}[Hodge Laplacian]\label{def:hodge-laplacian-appendix}
    The Hodge Laplacian $\Delta_k : C^k(X) \rightarrow C^k(X)$ on the space of $k$-cochains is defined as
    \begin{align}\label{eq:hodge-laplacian-appendix}
        \Delta_k := d_{k-1} \circ d_{k-1}^* + d_{k}^* \circ d_{k}.
    \end{align}
\end{definition}

To make explicit computations with them, we wish to represent these using matrices and vectors. Fortunately, this is not difficult as the space of chains / cochains forms a free Abelian group / vector space, which is naturally isomorphic to $\mathbb{Z}^n$ / $\mathbb{R}^n$.

To this end, we fix a labelling $\alpha \mapsto e^k_\alpha$ of the $k$-cells comprising a cellular complex $X$, which forms an ordered basis $(e^k_1, \ldots, e^k_{N_k})$. Then, an arbitrary $k$-chain $c = \sum_{\alpha=1}^{N_k} n_\alpha e^k_\alpha \in C_k(X)$ may be represented by a vector $\boldsymbol{c} = (n_1, \ldots, n_{N_k})^\top$ in $\mathbb{Z}^{N_k}$. Similarly, a $k$-cochain $f = \sum_{\alpha=1}^{N_k} f_\alpha (e^k_\alpha)^* \in C^k(X)$ can be represented by a vector $\boldsymbol{f} = (f_1, \ldots, f_{N_k})^\top$ in $\mathbb{R}^{N_k}$. Under this representation, cochain evaluation \eqref{eq:cochain-def-appendix} can simply be expressed as a dot product $f(c) = \boldsymbol{f}^\top \boldsymbol{c} \in \mathbb{R}$.

Next, we consider the boundary and coboundary operators.
The boundary operator can be expressed as a signed incidence matrix $\mathbf{B}_k \in \mathbb{Z}^{N_{k-1} \times N_k}$, whose $j$-th column corresponds to the vector representation of the cell boundary $\partial e^k_j$, viewed as a $(k-1)$-chain (see Appendix \ref{app:orientation}).
That is,
\begin{align}
    [\mathbf{B}_k]_{ij} = \mathrm{deg}(e_j^k \rightarrow e_i^{k-1})
\end{align}
for $k \in \{1, \ldots, n\}$, and by convention, we take $\mat{B}_k \equiv \mat{0}$ for $k \in \{0, n+1\}$.
Similarly, the coboundary operator can be represented by a matrix $\mathbf{D}_k \in \mathbb{R}^{N_{k+1} \times N_k}$. Using \eqref{eq:coboundary-def-appendix}, we have
\begin{align}
    \boldsymbol{f}^\top \mathbf{D}_k^\top \boldsymbol{c} = \boldsymbol{f}^\top \mathbf{B}_{k+1} \boldsymbol{c}, \\ \Leftrightarrow \quad \mathbf{D}_k = \mathbf{B}_{k+1}^\top.
\end{align}
Thus, the coboundary operator is identified with the transpose of the signed incidence matrix. Finally, let $\mathbf{W}_k = \mathtt{diag}(w_1^k, \ldots, w_{N_k}^k)$ be the weight matrix defining the $L^2$-inner product \eqref{eq:k-cochain-inner-product-appendix}. i.e.,
\begin{align}
    \left<f, g\right>_{L^2(\vec{w}^k)} = \boldsymbol{f}^\top\mathbf{W}_k \boldsymbol{g}.
\end{align}
Then, letting $\mathbf{D}^*_k \in \mathbb{R}^{N_k \times N_{k+1}}$ be the matrix representation of the adjoint of the coboundary,  \eqref{eq:codifferential-def-appendix} implies
\begin{align}
    \boldsymbol{f}^\top (\mathbf{D}_k^*)^\top \mathbf{W}_{k} \boldsymbol{g} = \boldsymbol{f}^\top \mathbf{W}_{k+1} \mathbf{B}_{k+1}^\top \boldsymbol{g} \\
    \quad \Leftrightarrow \quad \mathbf{D}_k^* = \mathbf{W}_{k}^{-1} \mathbf{B}_{k+1} \mathbf{W}_{k+1}.
\end{align}

Putting this together, we find the matrix expression $\mathbf{\Delta}_k \in \mathbb{R}^{N_k \times N_k}$ for the Hodge Laplacian operator using \eqref{eq:hodge-laplacian-appendix}:
\begin{align}
    &\mathbf{\Delta}_k = \mathbf{D}_{k} \mathbf{D}^*_{k} + \mathbf{D}^*_{k+1} \mathbf{D}_{k+1} \\
    &= \mathbf{B}_{k}^\top (\mathbf{W}_{k-1}^{-1} \mathbf{B}_{k} \mathbf{W}_{k}) + (\mathbf{W}_{k}^{-1} \mathbf{B}_{k+1} \mathbf{W}_{k+1}) \mathbf{B}_{k+1}^\top.
    \label{eq:weighted-hodge-laplacian-appendix}
\end{align}

\section{Characterisation of GPs on Cellular Complexes}\label{app:gp-characterisation}
Here, we prove the results in Section \ref{sec:GPs-on-CCs} characterising Gaussian random cochains and GPs on cellular complexes by a mean and a kernel.
\subsection{Proof of Theorem \ref{eq:GRC-characterisation}}\label{app:grc-characterisation}
We restate Theorem \ref{eq:GRC-characterisation} below for completeness.
\begin{theorem}
    A Gaussian random cochain $f : \Omega \rightarrow C^k(X)$ is fully characterised by a mean $\mu \in C^k(X)$ and a kernel $\kappa : C_k(X) \times C_k(X) \rightarrow \mathbb{R}$.
\end{theorem}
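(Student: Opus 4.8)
The plan is to reduce the statement to the finite-dimensional linear-algebraic picture of Table~\ref{table:numerical-representation}. Fix the ordered basis $(e^k_1, \ldots, e^k_{N_k})$ of $k$-cells, so that $C^k(X) \cong \mathbb{R}^{N_k}$ via $f \mapsto \vec{f} = (f(e^k_1), \ldots, f(e^k_{N_k}))^\top$ and $C_k(X) \cong \mathbb{Z}^{N_k}$ via $c \mapsto \vec{c}$, with $f(c) = \vec{f}^\top \vec{c}$. The claim then amounts to three things: (i) the random vector $\vec{f}$ is multivariate Gaussian; (ii) its law, hence the law of $f$, is determined by, and can realise, any pair $(\vec{\mu}, \mat{K})$ with $\vec{\mu} \in \mathbb{R}^{N_k}$ and $\mat{K}$ symmetric positive semidefinite; (iii) such pairs are in bijection with pairs $(\mu, \kappa)$ where $\mu \in C^k(X)$ and $\kappa$ is a kernel in the sense of Definition~\ref{def:kernel-on-chains}.

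First I would establish (i). The hypothesis gives that $f(c) = \sum_\alpha n_\alpha f(e^k_\alpha)$ is a (possibly degenerate) univariate Gaussian for every integer vector $\vec{c} = (n_\alpha)$. I would upgrade this to all real vectors $\vec{a}$: for rational $\vec{a} = \vec{b}/q$ it is immediate, since $\vec{a}^\top \vec{f} = q^{-1}\vec{b}^\top\vec{f}$; for real $\vec{a}$, pick rationals $\vec{a}^{(m)} \to \vec{a}$, note that $(\vec{a}^{(m)})^\top\vec{f} \to \vec{a}^\top\vec{f}$ almost surely, and observe that the corresponding means and variances converge. Finiteness of $\mu_\alpha := \mathbb{E}[f(e^k_\alpha)]$ and of $\Sigma_{\alpha\beta} := \mathrm{Cov}[f(e^k_\alpha), f(e^k_\beta)]$ follows from $f(e^k_\alpha)$ and $f(e^k_\alpha) + f(e^k_\beta)$ each being Gaussian, so the weak limit of the Gaussians $(\vec{a}^{(m)})^\top\vec{f}$ is again Gaussian. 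By the Cram\'er--Wold device, $\vec{f} \sim \mathcal{N}(\vec{\mu}, \mat{K})$ with $\vec{\mu}_\alpha = \mu_\alpha$ and $\mat{K}_{\alpha\beta} = \Sigma_{\alpha\beta}$.

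For (ii)--(iii) I would set $\mu := \sum_\alpha \vec{\mu}_\alpha (e^k_\alpha)^* \in C^k(X)$ and $\kappa(c, d) := \vec{c}^\top \mat{K} \vec{d}$ on $C_k(X) \times C_k(X)$. Then $\kappa$ is a symmetric bi-homomorphism, and applying the defining inequality of Definition~\ref{def:kernel-on-chains} with $m = 1$ to the chain with vector representation $\vec{v}$ gives $\vec{v}^\top \mat{K} \vec{v} = \kappa\big(\sum_\alpha v_\alpha e^k_\alpha, \sum_\beta v_\beta e^k_\beta\big) \geq 0$; conversely, $\mat{K} \succeq 0$ makes $\kappa$ satisfy Definition~\ref{def:kernel-on-chains}, since $\sum_{i,j}\kappa(c_i,c_j) = \big(\sum_i \vec{c}_i\big)^\top \mat{K}\big(\sum_j \vec{c}_j\big)$. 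Moreover $\mathbb{E}[f(c)] = \vec{c}^\top\vec{\mu} = \mu(c)$ and $\mathrm{Cov}[f(c), f(d)] = \vec{c}^\top\mat{K}\vec{d} = \kappa(c,d)$, so $\mu$ and $\kappa$ are genuinely the mean and covariance kernel of $f$. Since $f \leftrightarrow \vec{f}$ is a measurable isomorphism onto $\mathbb{R}^{N_k}$ and a Gaussian vector is uniquely determined by its mean and covariance, the law of $f$ is uniquely determined by $(\mu,\kappa)$; and conversely, given any such $(\mu,\kappa)$, taking $\vec{f} \sim \mathcal{N}(\vec{\mu},\mat{K})$ on some probability space produces a Gaussian random cochain (every $f(c) = \vec{c}^\top\vec{f}$ is Gaussian) with that mean and kernel. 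This proves full characterisation in both directions.

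The main obstacle, modest but genuine, is the passage in step~(i) from integer to arbitrary real linear combinations: the definition of a Gaussian random cochain only tests chains, which carry integer coefficients, whereas multivariate Gaussianity of $\vec{f}$ is a statement about all real linear combinations, so a rational-scaling plus weak-limit argument (together with the finiteness of second moments) is needed to bridge the gap. Everything else is bookkeeping transporting standard multivariate Gaussian facts through the basis isomorphisms.
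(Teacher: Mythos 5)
Your proposal is correct, and its skeleton is the same as the paper's: fix the ordered basis of $k$-cells, identify $C^k(X)\cong\mathbb{R}^{N_k}$ and $C_k(X)\cong\mathbb{Z}^{N_k}$, read off $\mu$ and $\kappa$ as the mean and covariance in this basis in one direction, and in the other direction realise any admissible pair $(\vec{\mu},\mat{K})$ by a multivariate Gaussian $\vec{f}\sim\mathcal{N}(\vec{\mu},\mat{K})$ and pull it back through the isomorphism. The genuine difference is your step (i). The paper's proof (Appendix~\ref{app:grc-characterisation}, modelled on Lemmas 9--10 of \cite{hutchinson2021vector}) only performs the two constructions --- it defines $\mu(c):=\mathbb{E}[f(c)]$, $\kappa(c,c'):=\mathrm{Cov}[f(c),f(c')]$ and verifies the kernel axioms via $\sum_{\alpha,\beta}\kappa(c_\alpha,c_\beta)=\mathbb{E}[(\sum_\alpha (f(c_\alpha)-\mathbb{E}[f(c_\alpha)]))^2]\ge 0$, and conversely builds a Gaussian random cochain from $(\mu,\kappa)$ --- but it never addresses the point you isolate: the definition of a Gaussian random cochain only tests integer-coefficient combinations $f(c)$, while ``fully characterised'' (uniqueness of the law given $(\mu,\kappa)$) needs joint Gaussianity of $(f(e^k_1),\ldots,f(e^k_{N_k}))$, i.e.\ Gaussianity of all real linear combinations. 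Your rational-scaling plus weak-limit argument, combined with Cram\'er--Wold, closes exactly this gap, so your proof is strictly more complete on the uniqueness direction; the paper implicitly leans on the cited lemmas, where the issue does not arise because arbitrary real functionals are available. One small remark: the same integer-to-real upgrade is also silently needed in your step (iii), since Definition~\ref{def:kernel-on-chains} only yields $\vec{v}^\top\mat{K}\vec{v}\ge 0$ for integer $\vec{v}$; for a symmetric $\mat{K}$ this extends to all real $\vec{v}$ by scaling to rationals and continuity of the quadratic form, so it is a one-line addition, but worth stating.
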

\begin{proof}
    The proof is almost identical to that of Lemmas 9--10 in \cite{hutchinson2021vector}.

    $(\Rightarrow)$ First, we show that given a Gaussian random cochain $f$,  we can define a mean and a kernel object. For this, we simply set $\mu(c) := \mathbb{E}[f(c)]$ and $\kappa(c, c') := \mathrm{Cov}[f(c), f(c')]$. The former is clearly a cochain since $f$ is a (random) cochain. The latter can be easily checked to be a kernel in the sense of Definition \ref{def:kernel-on-chains}:
    \begin{itemize}
        \item (Symmetry) This follows from the symmetry of the covariance operator $\mathrm{Cov}[f(c), f(c')] = \mathrm{Cov}[f(c'), f(c)]$.
        \item (Group bi-homomorphism) This follows from the fact that by definition, $f$ is a group homomorphism and using the bilinearity of the covariance operator.
        \item (Positive semi-definiteness) Fixing $c_1, \ldots, c_m \in C_k(X)$, we have
        \begin{align}
            \sum_{\alpha, \beta=1}^m \kappa(c_\alpha, c_\beta) &= \sum_{\alpha, \beta=1}^m \mathrm{Cov}[f(c_\alpha), f(c_\beta)] = \mathbb{E}\Bigg[\Big(\sum_{\alpha=1}^m (f(c_\alpha) - \mathbb{E}[f(c_\alpha)])\Big)^2\Bigg] \geq 0.
        \end{align}
    \end{itemize}

    $(\Leftarrow)$ Next, we show that given $\mu \in C^k(X)$ and a kernel $\kappa: C_k(X) \times C_k(X) \rightarrow \mathbb{R}$, we can construct a Gaussian random cochain. Take
    \begin{align}
    \boldsymbol{\mu} =
    \begin{bmatrix}
    \mu(e^k_1) \\
    \vdots \\
    \mu(e^k_{N_k})
    \end{bmatrix} \in \mathbb{R}^{N_k},
    \qquad
    \mathbf{K} =
    \begin{bmatrix}
    k(e^k_1, e^k_1) & \cdots & k(e^k_1, e^k_{N_k}) \\
    \vdots & \ddots & \vdots \\
    k(e^k_{N_k}, e^k_1) & \cdots & k(e^k_{N_k}, e^k_{N_k})
    \end{bmatrix} \in \mathbb{R}^{N_k \times N_k}.
\end{align}
    This uniquely defines a multivariate Gaussian random variable $\boldsymbol{f} \sim \mathcal{N}(\vec{\mu}, \mat{K})$. Now let $\varphi : C^k(X) \stackrel{\sim}{\rightarrow} \mathbb{R}^{N_k}$ be the group isomorphism identifying $k$-cochains by a vector in $\mathbb{R}^{N_k}$ via the labelling $\alpha \mapsto e_\alpha^k$. We also take the group isomorphism $\psi : C_k(X) \stackrel{\sim}{\rightarrow} \mathbb{Z}^{N_k}$, defined by $f(c) = \varphi(f)^\top \psi(c)$ for any $f \in C^k(X)$ and $c \in C_k(X)$. Then, we set $f := \varphi^{-1}\vec{f}$, which defines a Gaussian random cochain, since for any $c \in C_k(X)$, we have $f(c) = [\varphi^{-1}\vec{f}](c) = \vec{f}^\top \psi(c)$, which is univariate Gaussian.
\end{proof}

\subsection{Proof of Theorem \ref{eq:CCGP-characterisation}}\label{app:ccgp-characterisation}
We restate Theorem \ref{eq:CCGP-characterisation} below for completeness.
\begin{theorem}
    A Gaussian process on a cellular complex $X$ (abbreviated as CC-GP) is fully characterised by a mean $\mu \in \bigoplus_{k=0}^n C^k(X)$ and a kernel $\kappa : \bigoplus_{k=0}^n C_k(X) \times \bigoplus_{k=0}^n C_k(X) \rightarrow \mathbb{R}$.
\end{theorem}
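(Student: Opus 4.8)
The plan is to mirror the proof of Theorem~\ref{eq:GRC-characterisation} (the single-cochain case) almost verbatim, exploiting the fact that a direct sum of cochain spaces is again a finite-dimensional vector space and a direct sum of chain groups is again a free Abelian group. Write $N := \sum_{k=0}^n N_k$ and recall that $\bigoplus_{k=0}^n C^k(X) \cong \mathbb{R}^{N}$ and $\bigoplus_{k=0}^n C_k(X) \cong \mathbb{Z}^{N}$ via a fixed labelling of all cells across all skeleta. I will prove the two implications separately.

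\textbf{($\Rightarrow$)} Given a CC-GP $f : \Omega \to \bigoplus_{k=0}^n C^k(X)$, define $\mu(c) := \mathbb{E}[f(c)]$ for $c \in \bigoplus_k C_k(X)$ and $\kappa(c,c') := \mathrm{Cov}[f(c), f(c')]$. Linearity of $f$ on $\bigoplus_k C_k(X)$ together with linearity of expectation shows $\mu$ is an element of $\bigoplus_k C^k(X)$ (a group homomorphism $\bigoplus_k C_k(X) \to \mathbb{R}$, equivalently a vector in $\mathbb{R}^N$). For $\kappa$: symmetry is inherited from symmetry of covariance; the group bi-homomorphism property follows from bilinearity of covariance together with the additivity of $f$ in its argument; and positive semi-definiteness follows from the identity
\begin{align}
\sum_{i,j=1}^m \kappa(c_i, c_j) = \mathbb{E}\Bigg[\Big(\sum_{i=1}^m \big(f(c_i) - \mathbb{E}[f(c_i)]\big)\Big)^2\Bigg] \geq 0
\end{align}
for any $c_1, \ldots, c_m \in \bigoplus_k C_k(X)$. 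So $\kappa$ is a kernel in the sense of \eqref{eq:kernel-direct-sum-space}.

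\textbf{($\Leftarrow$)} Conversely, given $\mu \in \bigoplus_k C^k(X)$ and a kernel $\kappa$, enumerate all cells as $\{e_a\}_{a=1}^N$ (ranging over all orders), form $\vec{\mu} = (\mu(e_1), \ldots, \mu(e_N))^\top \in \mathbb{R}^N$ and $\mat{K} = (\kappa(e_a, e_b))_{a,b=1}^N \in \mathbb{R}^{N\times N}$, which is symmetric and positive semi-definite by the kernel axioms, hence defines a multivariate Gaussian $\vec{f} \sim \mathcal{N}(\vec{\mu}, \mat{K})$. Let $\varphi : \bigoplus_k C^k(X) \xrightarrow{\sim} \mathbb{R}^N$ and $\psi : \bigoplus_k C_k(X) \xrightarrow{\sim} \mathbb{Z}^N$ be the natural (co)ordinate isomorphisms with $f(c) = \varphi(f)^\top \psi(c)$, and set $f := \varphi^{-1}\vec{f}$. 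Then for any $c \in \bigoplus_k C_k(X)$ we have $f(c) = \vec{f}^\top \psi(c)$, which is a fixed linear functional of a Gaussian vector, hence univariate Gaussian; thus $f$ is a CC-GP with the prescribed mean and kernel. Finally, two CC-GPs with the same mean and kernel induce the same $\vec{\mu}, \mat{K}$, hence the same law on $\mathbb{R}^N$ and therefore the same finite-dimensional distributions on all chains, giving uniqueness.

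I do not anticipate a genuine obstacle here: the only thing to be careful about is bookkeeping — making the single global labelling of cells across different orders explicit so that the isomorphisms $\varphi, \psi$ are well-defined and compatible ($f(c) = \varphi(f)^\top\psi(c)$), and checking that ``kernel on $\bigoplus_k C_k(X)$'' in the sense of \eqref{eq:kernel-direct-sum-space} is exactly what is needed for $\mat{K}$ to be a valid covariance. Everything else is a direct transcription of the proof of Theorem~\ref{eq:GRC-characterisation} with $C^k(X)$ replaced by $\bigoplus_{k=0}^n C^k(X)$ and $C_k(X)$ replaced by $\bigoplus_{k=0}^n C_k(X)$, so I would simply state that and fill in the few places where the direct-sum structure enters.
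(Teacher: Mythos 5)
Your proposal is correct and follows essentially the same route as the paper's proof: the forward direction defines $\mu(c) := \mathbb{E}[f(c)]$ and $\kappa(c,c') := \mathrm{Cov}[f(c), f(c')]$, and the converse builds $\vec{f} \sim \mathcal{N}(\vec{\mu}, \mat{K})$ from a global labelling of cells and pulls it back through the coordinate isomorphism, exactly as in the paper's transcription of Theorem~\ref{eq:GRC-characterisation} to the direct-sum setting. Your added remark on uniqueness of the law is a harmless (and welcome) extra detail not spelled out in the paper.
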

\begin{proof}
    The proof is almost identical to the proof of Theorem \ref{eq:GRC-characterisation} (see Appendix \ref{app:grc-characterisation}), hence we will omit some details to avoid repetition.
    
    $(\Rightarrow)$ For a CC-GP $f : \Omega \rightarrow \bigoplus_{k=0}^n C^k(X)$,  we can define a mean and a kernel by setting $\mu(c) := \mathbb{E}[f(c)]$ and $\kappa(c, c') := \mathrm{Cov}[f(c), f(c')]$, for any $c, c' \in \bigoplus_{k=0}^n C^k(X)$.

    $(\Leftarrow)$ Given $\mu \in \bigoplus_{k=0}^n C^k(X)$ and a kernel $\kappa : \bigoplus_{k=0}^n C_k(X) \times \bigoplus_{k=0}^n C_k(X) \rightarrow \mathbb{R}$, we take
    \begin{align}
        \vec{\mu} = 
        \begin{pmatrix}
            \vec{\mu}_1 \\
            \vdots \\
            \vec{\mu}_n
        \end{pmatrix},
        \quad
        \mathbf{K} =
        \begin{pmatrix}
        \mathbf{K}_{11} & \cdots & \mathbf{K}_{1n} \\
        \vdots & \ddots & \vdots \\
        \mathbf{K}_{n1} & \cdots & \mathbf{K}_{nn}
        \end{pmatrix}
    \end{align}
    with $[\vec{\mu}_i]_j = \mu(e^i_j)$ and $[\mat{K}_{nm}]_{ij} = \kappa(e^n_i, e^m_j)$, which uniquely defines a multivariate Gaussian random variable $\boldsymbol{f} \sim \mathcal{N}(\vec{\mu}, \mat{K})$ in $\mathbb{R}^{N_1 + \cdots + N_n}$.
    Now, consider the group isomorphism $\varphi : \bigoplus_{k=0}^n C^k(X) \stackrel{\sim}{\rightarrow} \mathbb{R}^{N_1 + \cdots + N_n}$ by fixing a labelling $\alpha \mapsto e_\alpha^k$ for each $k=0, \ldots, n$. Then, $f := \varphi^{-1}\vec{f}$ defines a CC-GP. 
\end{proof}

\section{Kernels on Cellular Complexes}\label{app:kernels}
In this appendix, we provide further details on the kernels that we consider in this paper, namely the Matérn kernel on cellular complexes (CC-Matérn) and the reaction-diffusion kernel (RD). We will need the following definition to formalise our kernels.

\begin{definition}\label{def:group-homomorphism-appendix}
Consider the weighted $L^2$-inner product $\left<\cdot, \cdot\right>_{L^2(\vec{w}^k)} : C^k(X) \times C^k(X) \rightarrow \mathbb{R}$ on the space of $k$-cochains (Definition \ref{def:inner-product-appendix}). We define a group homomorphism $\flat : C_k(X) \rightarrow C^k(X)$ by
\begin{align}\label{eq:flat-operator}
    f(c) = \big\langle f, c^\flat\big\rangle_{L^2(\vec{w}^k)},
\end{align}
for any $c \in C_k(X)$ and $f \in C^k(X)$. The existence of $c^\flat$ follows from the Riesz representation theorem.
\end{definition}

Next, we introduce the concept of a Gaussian white-noise cochain, defined as follows.

\begin{definition}[Gaussian white-noise cochain]\label{def:gaussian-white-noise-cochain}
    We define a zero-mean Gaussian white noise cochain as a Gaussian random cochain $\mathcal{W} : \Omega \rightarrow C^k(X)$ satisfying
    \begin{itemize}
        \item $\mathbb{E}[\left<\mathcal{W}, f\right>_{L^2(\vec{w}^k)}] = 0$ for any $f \in C^k(X)$.
        \item $\mathbb{E}[\left<\mathcal{W}, f\right>_{L^2(\vec{w}^k)} \left<\mathcal{W}, g\right>_{L^2(\vec{w}^k)}] = \left<f, g\right>_{L^2(\vec{w}^k)}$ for any $f, g \in C^k(X)$
    \end{itemize}
\end{definition}

Now, we are ready to define our notion of the Matérn kernel on cellular complexes.

\subsection{Matérn Kernel}\label{app:matern-kernel}
For simplicity, let us assume for now that the cell weights are $w^k_\alpha = 1$ for all $k, \alpha$. In this special case, we will use the shorthand notation $\left<\cdot, \cdot\right>_{L^2} \equiv \left<\cdot, \cdot\right>_{L^2(\vec{w}^k)}$. We will deal with the more general case in Appendix \ref{app:arbitrary-cell-weights-matern}.

Following the construction in \cite{Borovitskiy2021}, we define a Matérn Gaussian random $k$-cochain as a solution to the stochastic system
\begin{align}\label{eq:matern-gp-def-appendix}
    \left(\frac{2\nu}{\ell^2} + \Delta_k\right)^{\nu/2} f = \mathcal{W},
\end{align}
where $f \in C^k(X)$ and $\mathcal{W} : \Omega \rightarrow C^k(X)$ is the Gaussian white-noise cochain (Definition \ref{def:gaussian-white-noise-cochain}).

The operator $\left(\frac{2\nu}{\ell^2} + \Delta_k\right)^{\nu/2}$ is to be understood as an operation in frequency space, by the following construction.
Let $\{(\lambda_i^2, u_i)\}_{i=1}^{N_k}$, be solutions to the eigenproblem $\Delta_k u_i = \lambda_i^2 u_i$ such that the eigencochains $\{u_i\}_{i=1}^{N_k}$ are orthonormal in $L^2$. Representing $f$ as $f = \sum_i \left<f, u_i\right>_{L^2(w^k)} u_i$, we define
\begin{align}\label{eq:matern-operator}
    \left(\frac{2\nu}{\ell^2} + \Delta_k\right)^{\nu/2} f := \sum_{i=1}^{N_k} \left(\frac{2\nu}{\ell^2} + \lambda_i^2 \right)^{\nu/2} \left<f, u_i\right>_{L^2} u_i,
\end{align}
which is a linear operator on the space of $k$-cochains.
We define the Matérn kernel $\kappa : C_k(X) \times C_k(X) \rightarrow \mathbb{R}$ as as a solution to the linear system
\begin{align}\label{eq:matern-kernel-def-appendix}
    \left(\frac{2\nu}{\ell^2} + \Delta_k\right)^{\nu} \kappa(c, \,\cdot\,) = c^\flat, \quad \forall c \in C_k(X).
\end{align}

\subsubsection{Proof of Proposition \ref{prop:matern-gp}}\label{app:matern-kernel-proof}
We restate Proposition \ref{prop:matern-gp} below for completeness.
\begin{proposition}
    The solution to \eqref{eq:matern-kernel-def-appendix} is related to the solution $f$ of the system \eqref{eq:matern-gp-def-appendix} as:
    \begin{align}\label{eq:matern-kernel-property-appendix}
        \kappa(c, c') = \mathbb{E}[f(c)f(c')], \quad 
    \forall c, c' \in C_k(X)
    \end{align}
    Thus, $\kappa$ is the kernel corresponding to the Gaussian random cochain $f$.
\end{proposition}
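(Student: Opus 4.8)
The plan is to work in the eigenbasis of the Hodge Laplacian and reduce everything to a scalar computation per frequency. First I would expand the white noise $\mathcal{W}$ in the orthonormal eigencochain basis $\{u_i\}_{i=1}^{N_k}$ as $\mathcal{W} = \sum_i \xi_i u_i$, where $\xi_i := \langle \mathcal{W}, u_i\rangle_{L^2}$. By the defining properties of the Gaussian white-noise cochain (Definition~\ref{def:gaussian-white-noise-cochain}), the $\xi_i$ are jointly Gaussian with $\mathbb{E}[\xi_i] = 0$ and $\mathbb{E}[\xi_i \xi_j] = \langle u_i, u_j\rangle_{L^2} = \delta_{ij}$, i.e.\ they are i.i.d.\ standard normals. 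Writing $f = \sum_i \hat f_i u_i$ and applying the spectral definition \eqref{eq:matern-operator} of the operator $(\tfrac{2\nu}{\ell^2} + \Delta_k)^{\nu/2}$ to \eqref{eq:matern-gp-def-appendix}, I would match coefficients to get $\hat f_i = (\tfrac{2\nu}{\ell^2} + \lambda_i^2)^{-\nu/2}\xi_i$, so that $f = \sum_i (\tfrac{2\nu}{\ell^2} + \lambda_i^2)^{-\nu/2}\xi_i\, u_i$.

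Next I would compute $\mathbb{E}[f(c)f(c')]$ for fixed $c, c' \in C_k(X)$. Using linearity (cochain evaluation is a homomorphism) and $f(c) = \langle f, c^\flat\rangle_{L^2}$ from Definition~\ref{def:group-homomorphism-appendix}, I write $f(c) = \sum_i (\tfrac{2\nu}{\ell^2}+\lambda_i^2)^{-\nu/2}\xi_i \langle u_i, c^\flat\rangle_{L^2}$ and similarly for $f(c')$. Taking expectations and using $\mathbb{E}[\xi_i\xi_j] = \delta_{ij}$ collapses the double sum to
\begin{align}
    \mathbb{E}[f(c)f(c')] = \sum_{i=1}^{N_k} \left(\tfrac{2\nu}{\ell^2} + \lambda_i^2\right)^{-\nu} \langle u_i, c^\flat\rangle_{L^2}\, \langle u_i, c'^\flat\rangle_{L^2}.
\end{align}

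It then remains to identify the right-hand side with $\kappa(c,c')$, where $\kappa$ solves \eqref{eq:matern-kernel-def-appendix}. For this I would posit the candidate $\kappa(c, \cdot) := \sum_i (\tfrac{2\nu}{\ell^2}+\lambda_i^2)^{-\nu} \langle u_i, c^\flat\rangle_{L^2}\, u_i$ and verify two things: (i) it solves the linear system, by applying $(\tfrac{2\nu}{\ell^2}+\Delta_k)^{\nu}$ termwise via \eqref{eq:matern-operator}, which multiplies the $i$-th coefficient by $(\tfrac{2\nu}{\ell^2}+\lambda_i^2)^{\nu}$ and yields $\sum_i \langle u_i, c^\flat\rangle_{L^2} u_i = c^\flat$ by completeness of the eigenbasis; and (ii) pairing this candidate in $L^2$ against $c'^\flat$ (equivalently, evaluating $\kappa(c,\cdot)$ on $c'$) reproduces exactly the displayed double-sum expression for $\mathbb{E}[f(c)f(c')]$, using $\langle \kappa(c,\cdot), c'^\flat\rangle_{L^2} = \kappa(c, c')$. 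Uniqueness of the solution to \eqref{eq:matern-kernel-def-appendix} follows since $(\tfrac{2\nu}{\ell^2}+\Delta_k)^{\nu}$ is invertible (all eigenvalues $\tfrac{2\nu}{\ell^2}+\lambda_i^2 > 0$), so the candidate is \emph{the} solution, closing the argument. Finally I would remark that symmetry, bilinearity and positive semi-definiteness of $\kappa$ — hence that it is a genuine kernel in the sense of Definition~\ref{def:kernel-on-chains} — are immediate from the representation $\kappa(c,c') = \mathbb{E}[f(c)f(c')]$, exactly as in the $(\Rightarrow)$ direction of Theorem~\ref{eq:GRC-characterisation}.

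\textbf{Main obstacle.} The only delicate point is handling the fractional/real power $(\tfrac{2\nu}{\ell^2}+\Delta_k)^{\nu/2}$ rigorously: one must confirm that the spectral definition \eqref{eq:matern-operator} is consistent (well-defined independent of the choice of orthonormal eigenbasis, including in the presence of repeated eigenvalues) and that it interacts correctly with the $\flat$ isomorphism and with taking expectations of the Gaussian coefficients. Since everything is finite-dimensional this is routine linear algebra, but it is where care is needed; the rest is bookkeeping in the eigenbasis.
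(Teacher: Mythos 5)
Your proposal is correct and follows essentially the same route as the paper's proof: expand $f$ and $\mathcal{W}$ in the $L^2$-orthonormal eigencochain basis of $\Delta_k$, obtain the explicit spectral representations of both the solution $f$ and the kernel $\kappa$, and collapse $\mathbb{E}[f(c)f(c')]$ via $\mathbb{E}[\langle\mathcal{W},u_i\rangle_{L^2}\langle\mathcal{W},u_j\rangle_{L^2}]=\delta_{ij}$ to match $\kappa(c,c')$. The only cosmetic difference is that you deduce symmetry, bilinearity and positive semi-definiteness of $\kappa$ from the covariance representation (via Theorem~\ref{eq:GRC-characterisation}), whereas the paper verifies these properties directly from the explicit spectral formula; both are fine.
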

\begin{proof}
    We first claim that the unique solution $f$ to \eqref{eq:matern-gp-def-appendix} can be represented as
    \begin{align}
        f = \sum_i \left(\frac{2\nu}{\ell^2} + \lambda_i^2\right)^{-\nu/2} \left<\mathcal{W}, u_i\right>_{L^2} u_i.
    \end{align}
    This can be checked by simply substituting this expression inside \eqref{eq:matern-operator} and using the $L^2$-orthonormality of the eigencochains $\{u_i\}_{i=1}^{N_k}$. The uniqueness can be checked by the linearity of the operator $\left(\frac{2\nu}{\ell^2} + \Delta_k\right)^{\nu/2}$ and the fact that the solution to the system $\left(\frac{2\nu}{\ell^2} + \Delta_k\right)^{\nu/2} f = 0$ is satisfied only by $f \equiv 0$.

    Similarly, the solution to \eqref{eq:matern-kernel-def-appendix} is given by
    \begin{align}
        \kappa(c, \cdot) &= \sum_i \left(\frac{2\nu}{\ell^2} + \lambda_i^2\right)^{-\nu} \big\langle c^\flat, u_i\big\rangle_{L^2} u_i(\cdot) \\
        &\stackrel{\eqref{eq:flat-operator}}{=} \sum_i \left(\frac{2\nu}{\ell^2} + \lambda_i^2\right)^{-\nu} u_i(c) u_i(\cdot) \label{eq:matern-kernel-explicit}
    \end{align}

    Next, we show that for arbitrary $c, c' \in C_k(X)$, we have
    \begin{align}
        \mathbb{E}[f(c)f(c')] &= \mathbb{E}\left[\sum_i \sum_j \left(\frac{2\nu}{\ell^2} + \lambda_i^2\right)^{-\nu/2} \left(\frac{2\nu}{\ell^2} + \lambda_j^2\right)^{-\nu/2} \left<\mathcal{W}, u_i\right>_{L^2} \left<\mathcal{W}, u_j\right>_{L^2} u_i(c) u_j(c')\right] \\
        &= \sum_i \sum_j \left(\frac{2\nu}{\ell^2} + \lambda_i^2\right)^{-\nu/2} \left(\frac{2\nu}{\ell^2} + \lambda_j^2\right)^{-\nu/2} \mathbb{E}\left[\left<\mathcal{W}, u_i\right>_{L^2} \left<\mathcal{W}, u_j\right>_{L^2}\right] u_i(c) u_j(c') \\
        &= \sum_i \sum_j \left(\frac{2\nu}{\ell^2} + \lambda_i^2\right)^{-\nu/2} \left(\frac{2\nu}{\ell^2} + \lambda_j^2\right)^{-\nu/2} \underbrace{\left<u_i, u_j\right>_{L^2}}_{= \delta_{ij}} u_i(c) u_j(c') \\
        &= \sum_i \left(\frac{2\nu}{\ell^2} + \lambda_i^2\right)^{-\nu} u_i(c) u_i(c') \\
        &\stackrel{\eqref{eq:matern-kernel-explicit}}{=} \kappa(c, c'),
    \end{align}
    verifying property \eqref{eq:matern-kernel-property-appendix}. Finally, we show that $\kappa$ is indeed a kernel. The symmetry of $\kappa$ can be easily verified from the explicit expression \eqref{eq:matern-kernel-explicit}, that is, $\kappa(c, c') = \kappa(c', c)$ for any $c, c' \in C_k(X)$. Checking that $\kappa$ is a group bi-homomorphism also follows easily from expression \eqref{eq:matern-kernel-explicit} using the fact that $u_i$ is a group homomorphism by the definition of cochains. Fixing $c_1, \ldots, c_m \in C_k(X)$ such that $c_\alpha \neq 0$ for some $\alpha$, we also have
    \begin{align}
        \sum_{\alpha, \beta=1}^m \kappa(c_\alpha, c_\beta) &= \sum_{\alpha, \beta=1}^m  \sum_i \left(\frac{2\nu}{\ell^2} + \lambda_i^2\right)^{-\nu} u_i(c_\alpha) u_i(c_\beta) \\
        &= \sum_i \left(\frac{2\nu}{\ell^2} + \lambda_i^2\right)^{-\nu} \sum_{\alpha=1}^m u_i(c_\alpha) \sum_{\beta=1}^m u_i(c_\beta) \\
        &= \sum_i \left(\frac{2\nu}{\ell^2} + \lambda_i^2\right)^{-\nu} \left(\sum_{\alpha=1}^m u_i(c_\alpha)\right)^2 \\
        &> 0,
    \end{align}
    verifying the positive-definiteness of $\kappa$. Hence, $\kappa$ is a kernel on $C_k(X)$, as expected.
\end{proof}

\subsubsection{Matrix representation}\label{app:matern-kernel-representation}
From \eqref{eq:matern-kernel-explicit}, we can deduce the matrix representation of the Matérn kernel as
\begin{align}
    \mathbf{K} = \mathbf{U} \left(\frac{2\nu}{\ell^2}\mathbf{I} + \mathbf{\Lambda}^2\right)^{-\nu} \mathbf{U}^\top. \label{eq:matern-kernel-representation_matrix}
\end{align}
Another way to derive this representation is by directly considering the numerical representation of system \eqref{eq:matern-gp-def-appendix}:
\begin{align} \label{eq:numerical-representation-matern-eq}
    \mathbf{L} \boldsymbol{f} = \vec{w},
\end{align}
where
\begin{align}\label{eq:L-nonweighted}
    \mathbf{L} := \mathbf{U} \left(\frac{2\nu}{\ell^2}\mathbf{I} + \mathbf{\Lambda}^2\right)^{\nu/2} \mathbf{U}^\top
\end{align}
is the numerical representation of the linear operator \eqref{eq:matern-operator}, and $\vec{w} \sim \mathcal{N}(0, \mat{I})$. Then, we have
\begin{align}
    \vec{f} = \mat{L}^{-1} \vec{w} \sim \mathcal{N}(0, \mat{K}),
\end{align}
where
\begin{align}
    \mat{K} &= \mat{L}^{-1}\mat{L}^{-\top}.
\end{align}
We claim that
\begin{align}
    \mat{L}^{-1} = \mathbf{U} \left(\frac{2\nu}{\ell^2}\mathbf{I} + \mathbf{\Lambda}^2\right)^{-\nu/2} \mathbf{U}^\top.
\end{align}
This can be verified by computing
\begin{align}
    \mat{L}^{-1}\mat{L} &= \mathbf{U} \left(\frac{2\nu}{\ell^2}\mathbf{I} + \mathbf{\Lambda}^2\right)^{-\nu/2} 
    \underbrace{\mathbf{U}^\top \mathbf{U}}_{= \mat{I}} \left(\frac{2\nu}{\ell^2}\mathbf{I} + \mathbf{\Lambda}^2\right)^{\nu/2} \mathbf{U}^\top \\
    &= \mathbf{U} \underbrace{\left(\frac{2\nu}{\ell^2}\mathbf{I} + \mathbf{\Lambda}^2\right)^{-\nu/2} \left(\frac{2\nu}{\ell^2}\mathbf{I} + \mathbf{\Lambda}^2\right)^{\nu/2}}_{= \mat{I}} \mathbf{U}^\top \\
    &= \mathbf{U} \mathbf{U}^\top \\
    &= \mat{I},
\end{align}
and similarly, $\mat{L}\mat{L}^{-1} = \mat{I}$. Then, we can check that indeed we have
\begin{align}
    \mat{K} &= \mat{L}^{-1}\mat{L}^{-\top} \\
    &= \mathbf{U} \left(\frac{2\nu}{\ell^2}\mathbf{I} + \mathbf{\Lambda}^2\right)^{-\nu/2} \underbrace{\mathbf{U}^\top \mathbf{U}}_{=\mat{I}} \left(\frac{2\nu}{\ell^2}\mathbf{I} + \mathbf{\Lambda}^2\right)^{-\nu/2} \mathbf{U}^\top \\
    &= \mathbf{U} \left(\frac{2\nu}{\ell^2}\mathbf{I} + \mathbf{\Lambda}^2\right)^{-\nu} \mathbf{U}^\top.
\end{align}

\subsection{Reaction-diffusion Kernel}\label{app:rd-kernel}
Here, we provide further details on the reaction-diffusion kernel, presented in Section \ref{sec:rd-kernel}. Since many of the ideas are similar to the Matérn kernel (Appendix \ref{app:matern-kernel}), we omit some details.
We first lift the Dirac operator \eqref{eq:dirac-operator} to the direct sum space $\bigoplus_{k=1}^n C^k(X)$, where it is more natural as it defines a group homomorphism to itself (i.e., an endomorphism) $\mathcal{D} : \bigoplus_{k=1}^n C^k(X) \rightarrow \bigoplus_{k=1}^n C^k(X)$. This is given explicitly as
\begin{align} \label{eq:ext-dirac-operator-appendix}
    \mathcal{D} f = \sum_{k} \delta_k f_k = 
    \begin{pmatrix}
        d^*_0f_1 \\
        d_0f_0 + d^*_1f_2 \\
        \vdots \\
        d_{n-2}f_{n-2} + d^*_{n-1}f_n \\
        d_{n-1}f_{n-1}
    \end{pmatrix}.
\end{align}
Using the property $d_{k+1} \circ d_k = 0$ (equivalently, $d_{k}^* \circ d_{k+1}^* = 0$) of the coboundary operator, one can check that $\mathcal{D}^2 = \mathcal{L}$ (the super-Laplacian operator) holds.
We also extend the $L^2$ inner-product to the direct sum space $\bigoplus_{k=1}^n C^k(X)$, which we define by
\begin{align} \label{eq:extended-inner-product-appendix}
    \left<f, g\right>_{L^2(\vec{w})} := \sum_{k=1}^n \sum_{\alpha=1}^{N_k} w_\alpha^k \,f(e^k_\alpha) \,g(e^k_\alpha).
\end{align}
This trivially lifts Definition \ref{def:inner-product-appendix} and Definition \ref{def:gaussian-white-noise-cochain} to the direct sum setting, defining a group homomorphism $\flat : \bigoplus_{k=1}^n C_k(X) \rightarrow \bigoplus_{k=1}^n C^k(X)$ and a Gaussian white noise process $\mathcal{W} : \Omega \rightarrow \bigoplus_{k=1}^n C^k(X)$, respectively.

Now let $\mathcal{W} : \Omega \rightarrow \bigoplus_{k=1}^n C^k(X)$ be the white noise process on the direct sum space. We define the reaction-diffusion GP to be the solution to the stochastic system
\begin{align}\label{eq:rd-gp-def-appendix}
    \left(r + c \mathcal{D} + d \mathcal{L}\right)^{\nu/2} f = \mathcal{W}.
\end{align}

As before, the operator $\left(r + c \mathcal{D} + d \mathcal{L}\right)^{\nu/2}$ is to be understood as an operation in frequency space, as follows.
Let $\{(\lambda_i, u_i)\}_{i=1}^{N_1 + \cdots + N_n}$, be solutions to the eigenproblem $\mathcal{D} u_i = \lambda_i u_i$ such that $\{u_i\}_{i=1}^{N_1 + \cdots + N_n}$ are orthonormal in $L^2$. Since $\mathcal{D}^2 = \mathcal{L}$, we have that $\mathcal{D}$ and $\mathcal{L}$ share the same eigenfunction $u_i$ with eigenvalues $\lambda_i$ and $\lambda_i^2$, respectively. Thus, we define
\begin{align}\label{eq:rd-operator-appendix}
    \left(r + c \mathcal{D} + d \mathcal{L}\right)^{\nu/2} f := \sum_{k=1}^n\sum_{i=1}^{N_k} \left(r + c \lambda_i + d \lambda_i^2 \right)^{\nu/2} \left<f, u_i\right>_{L^2} u_i,
\end{align}
which is a linear operator on the direct sum space $\bigoplus_{k=1}^n C^k(X)$. We then define the reaction-diffusion kernel to be the solution to the system
\begin{align}\label{eq:rd-kernel-def-appendix}
    \left(r + c \mathcal{D} + d \mathcal{L}\right)^{\nu} k(c, \cdot) = c^\flat,
\end{align}
for any $c \in \bigoplus_{k=1}^n C^k(X)$. The solutions to \eqref{eq:rd-gp-def-appendix} and \eqref{eq:rd-kernel-def-appendix} are given explicitly by
\begin{align}
    f(\cdot) &= \sum_{k=1}^n\sum_{i=1}^{N_k} \left(r + c \lambda_i + d \lambda_i^2 \right)^{-\nu/2} \left<\mathcal{W}, u_i\right>_{L^2} u_i(\cdot), \label{eq:rd-f}\\
    \kappa(c, \cdot) &= \sum_{k=1}^n\sum_{i=1}^{N_k} \left(r + c \lambda_i + d \lambda_i^2 \right)^{-\nu} u_i(c) u_i(\cdot).\label{eq:rd-k}
\end{align}
Then, following the proof in Appendix \ref{app:matern-kernel-proof} line-by-line, one can verify that $\kappa$ is indeed a kernel for the GP $f$, that is, $\kappa(c, c') = \mathbb{E}[f(c) f(c')]$. Below, we present a more explicit proof under the numerical representation of \eqref{eq:rd-f} and \eqref{eq:rd-k}, which can be written in the form
\begin{align}
    \vec{f} &= \mat{U} \left(r\mat{I} + c\mat{\Lambda} + d\mat{\Lambda}^2\right)^{-\nu/2} \mat{U}^\top\vec{w} \label{eq:rd-f-explicit} \\
    \mat{K} &= \mat{U} \left(r\mat{I} + c\mat{\Lambda} + d\mat{\Lambda}^2\right)^{-\nu} \mat{U}^\top.\label{eq:rd-k-explicit}
\end{align}
Here, we denoted by $\mat{\Lambda} = \mathtt{diag}(\lambda_1, \ldots, \lambda_{N_1 + \cdots + N_n})$ the diagonal matrix of eigenvalues, $\vec{w} \in \mathcal{N}(0, \mat{I})$ is the numerical representation of $\mathcal{W}$, and $\mat{U} = (\vec{u}_1, \ldots, \vec{u}_{N_1 + \cdots + N_n})$ is the matrix of eigenvectors.

\subsubsection{Proof of Proposition \ref{prop:rd-gp}}\label{app:rd-kernel-proof}
\begin{proposition}
    The kernel defined by \eqref{eq:rd-k-explicit} is related to $\vec{f}$ given by \eqref{eq:rd-f-explicit}, as
    \begin{align}
        [\mat{K}]_{ij} = \mathbb{E}[f_i f_j], \quad 
    \forall i, j = 1, \ldots, N_1 + \cdots + N_n.
    \end{align}
\end{proposition}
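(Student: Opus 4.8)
The plan is to observe that $\vec{f}$ in \eqref{eq:rd-f-explicit} is a linear image of the standard Gaussian vector $\vec{w} \sim \mathcal{N}(\vec{0}, \mat{I})$, so that its second moments are read off from a single matrix product, exactly as in the matrix derivation of the Mat\'ern kernel in Appendix \ref{app:matern-kernel-representation}. Write
\begin{align}
\mat{A} := \mat{U}\left(r\mat{I} + c\mat{\Lambda} + d\mat{\Lambda}^2\right)^{-\nu/2}\mat{U}^\top,
\end{align}
so that $\vec{f} = \mat{A}\vec{w}$. First I would note that, since $\vec{w}$ has zero mean and identity covariance, $\mathbb{E}[\vec{f}] = \mat{A}\,\mathbb{E}[\vec{w}] = \vec{0}$ and $\mathbb{E}[\vec{f}\vec{f}^\top] = \mat{A}\,\mathbb{E}[\vec{w}\vec{w}^\top]\,\mat{A}^\top = \mat{A}\mat{A}^\top$; entrywise this reads $\mathbb{E}[f_i f_j] = [\mat{A}\mat{A}^\top]_{ij}$.

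Next I would simplify $\mat{A}\mat{A}^\top$ using the eigenstructure recalled just before \eqref{eq:reaction-diffusion-spde-representation}: with unit cell-weights the Dirac matrix $\vec{\mathcal{D}}$ is real symmetric, so its eigenbasis can be taken orthonormal, i.e.\ $\mat{U}^\top\mat{U} = \mat{U}\mat{U}^\top = \mat{I}$, and $\vec{\mathcal{D}}$ and $\vec{\mathcal{L}}$ are simultaneously diagonalised by $\mat{U}$ with eigenvalues $\mat{\Lambda}$ and $\mat{\Lambda}^2$. Since $\left(r\mat{I} + c\mat{\Lambda} + d\mat{\Lambda}^2\right)^{-\nu/2}$ is diagonal, hence symmetric, $\mat{A}$ is itself symmetric, and cancelling the inner factor $\mat{U}^\top\mat{U} = \mat{I}$ gives
\begin{align}
\mat{A}\mat{A}^\top = \mat{U}\left(r\mat{I} + c\mat{\Lambda} + d\mat{\Lambda}^2\right)^{-\nu/2}\left(r\mat{I} + c\mat{\Lambda} + d\mat{\Lambda}^2\right)^{-\nu/2}\mat{U}^\top = \mat{U}\left(r\mat{I} + c\mat{\Lambda} + d\mat{\Lambda}^2\right)^{-\nu}\mat{U}^\top,
\end{align}
which is precisely the right-hand side of \eqref{eq:rd-k-explicit}. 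Combining the two displays yields $[\mat{K}]_{ij} = \mathbb{E}[f_i f_j]$ for all $i, j$, as claimed. I would also remark, following the discussion after \eqref{eq:rd-k}, that this is the numerical counterpart of the coordinate-free identity $\kappa(c, c') = \mathbb{E}[f(c) f(c')]$, whose proof is obtained by transcribing Appendix \ref{app:matern-kernel-proof} verbatim with $\frac{2\nu}{\ell^2} + \lambda_i^2$ replaced by $r + c\lambda_i + d\lambda_i^2$.

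The closest thing to an obstacle --- and it is a minor one --- is the meaning of the power $\left(r\mat{I} + c\mat{\Lambda} + d\mat{\Lambda}^2\right)^{-\nu/2}$ when some diagonal entry $r + c\lambda_i + d\lambda_i^2$ is negative or zero. I would dispose of this exactly as in the footnote to \eqref{eq:reaction-diffusion-kernel}: taking $\nu$ to be an even integer makes $-\nu/2$ and $-\nu$ ordinary integer exponents, so for $(r, c, d) \notin \mathbb{X}$ the matrices above are honest (signed) diagonal matrices and the algebra is literal; for $(r, c, d) \in \mathbb{X}$ one reads the inverse powers as Moore--Penrose pseudo-inverses on the null space of $r\mat{I} + c\mat{\Lambda} + d\mat{\Lambda}^2$, and the identity $\mat{A}\mat{A}^\top = \mat{K}$ persists unchanged (both sides being supported on the complementary subspace), or equivalently one invokes continuity since $\mathbb{X}$ is Lebesgue-null with empty interior. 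Beyond this, the proposition is pure linear algebra with no hidden difficulty.
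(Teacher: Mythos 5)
Your proposal is correct and follows essentially the same route as the paper: write $\vec{f}$ as a linear image of the standard Gaussian $\vec{w}$, use $\mathbb{E}[\vec{w}\vec{w}^\top]=\mat{I}$ and the orthonormality $\mat{U}^\top\mat{U}=\mat{I}$ to collapse $\mat{U}(\cdot)^{-\nu/2}\mat{U}^\top\mat{U}(\cdot)^{-\nu/2}\mat{U}^\top$ into $\mat{U}(\cdot)^{-\nu}\mat{U}^\top=\mat{K}$. Your extra remarks on even $\nu$ and the measure-zero degenerate set are consistent with the paper's footnote and add only minor (harmless) elaboration.
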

\begin{proof}
    We have
    \begin{align}
        \mathbb{E}[\vec{f}\vec{f}^\top] &= \mat{U} \left(r\mat{I} - c\mat{\Lambda} + d \mat{\Lambda}^2\right)^{-\nu/2} \mat{U}^\top \underbrace{\mathbb{E}[\vec{w}\vec{w}^\top]}_{= \mat{I}} \mat{U} \left(r\mat{I} - c\mat{\Lambda} + d \mat{\Lambda}^2\right)^{-\nu/2} \mat{U}^\top \\
        &= \mat{U} \left(r\mat{I} - c\mat{\Lambda} + d \mat{\Lambda}^2\right)^{-\nu/2} \underbrace{\mat{U}^\top \mat{U}}_{=\mat{I}} \left(r\mat{I} - c\mat{\Lambda} + d \mat{\Lambda}^2\right)^{-\nu/2} \mat{U}^\top \\
        &= \mat{U} \left(r\mat{I} - c\mat{\Lambda} + d \mat{\Lambda}^2\right)^{-\nu} \mat{U}^\top \\
        &= \mat{K},
    \end{align}
    which proves the claim.
\end{proof}

\subsection{Generalisation to Arbitrary Cell Weights}\label{app:arbitrary-cell-weights}
Here, we consider the case of general cell weights, extending the results in Appendix \ref{app:matern-kernel} and \ref{app:rd-kernel}. In particular, we demonstrate how we arrive at identical expressions for the kernels, only the eigenbasis must be orthonormal with respect to the weighted $L^2$ inner product instead of the standard one.

\subsubsection{Matérn Kernel}\label{app:arbitrary-cell-weights-matern}
Let $\{(\lambda_i^2, u_i)\}_{i=1}^{N_k}$ be the eigenpairs of the Hodge Laplacian operator $\Delta_k$, defined with respect to the weighted $L^2$-inner product $\langle \cdot, \cdot \rangle_{L^2(\vec{w}^k)}$.
We claim that in this case, the eigencochains $\{u_i\}_{i=1}^{N_k}$ can be set to be orthonormal under the weighted $L^2$-inner product. To see this, we first show that $\Delta_k$ is self-adjoint with respect to the weighted $L^2$-inner product:
\begin{align}
    \langle f, \Delta_k g\rangle_{L^2(\vec{w}^k)} &= \langle f, \, d_{k-1} d_{k-1}^* g \rangle_{L^2(\vec{w}^k)} + \langle f, \, d_{k}^* d_{k} g \rangle_{L^2(\vec{w}^k)} \\
    &= \langle d_{k-1}^* f, \, d_{k-1}^* g \rangle_{L^2(\vec{w}^{k-1})} + \langle d_{k} f, \, d_{k} g \rangle_{L^2(\vec{w}^{k+1})} \\
    &= \langle d_{k-1} d_{k-1}^* f, \, g \rangle_{L^2(\vec{w}^k)} + \langle d_{k}^* d_{k} f, \, g \rangle_{L^2(\vec{w}^k)} \\
    &= \langle \Delta_k f, g\rangle_{L^2(\vec{w}^k)}.
\end{align}
Now consider
\begin{align}\label{eq:u-delta-u pairing}
    \langle u_i, \Delta_k u_j\rangle_{L^2(\vec{w}^k)} = \lambda_j \langle u_i, u_j \rangle_{L^2(\vec{w}^k)}, \qquad \langle \Delta_k u_i, u_j\rangle_{L^2(\vec{w}^k)} = \lambda_i \langle u_i, u_j \rangle_{L^2(\vec{w}^k)}.
\end{align}
Since $\langle u_i, \Delta_k u_j\rangle_{L^2(\vec{w}^k)} = \langle \Delta_k u_i, u_j\rangle_{L^2(\vec{w}^k)}$ owing to the self-adjointness of $\Delta_k$, we have $\lambda_j \langle u_i, u_j \rangle_{L^2(\vec{w}^k)} = \lambda_i \langle u_i, u_j \rangle_{L^2(\vec{w}^k)}$ by \eqref{eq:u-delta-u pairing}. For $\lambda_i \neq \lambda_j$, this relation is true if and only if $\langle u_i, u_j \rangle_{L^2(\vec{w}^k)} = 0$. In the case $\lambda_i = \lambda_j = \lambda$, letting $E(\lambda)$ denote the corresponding eigenspace, we can simply take the orthonormal basis of $E(\lambda)$ to be the elements of $\{u_i\}_{i=1}^{N_k}$ corresponding to the eigenvalue $\lambda$. Thus, we have a choice of $\{u_i\}_{i=1}^{N_k}$ such that $\langle u_i, u_j \rangle_{L^2(\vec{w}^k)} = \delta_{ij}$ for all $i, j$. Now, with this choice of the eigenbasis, we can follow the arguments in Appendix \eqref{app:matern-kernel} almost identically to show that Proposition \ref{prop:matern-gp} still holds in the weighted setting, under the same definition for the Matérn GP \eqref{eq:matern-gp-def-appendix} and the Matérn kernel \eqref{eq:matern-kernel-def-appendix}. The only difference is that the $\flat$ operator and the Gaussian white noise $\mathcal{W}$ must take into consideration the cell weights, according to Definitions \ref{def:inner-product-appendix} and \ref{def:gaussian-white-noise-cochain}.

To illustrate this better, we consider its explicit representation in terms of a matrix-vector system. We first represent the weighted orthonormality condition of the eigenbasis by
\begin{align}\label{eq:orthonormality-1}
    \mat{U}^\top \mat{W} \mat{U} = \mat{I},
\end{align}
where $\mat{U}$ is the matrix of eigenvectors of $\mat{\Delta}_k$ and $\mat{W} = \mathtt{diag}(w^k_1, \ldots, w^k_{N_k})$. Due to the orthonormality, we also have that for any $f \in C^k(X)$, we have the expression $f = \sum_i \langle f, u_i\rangle_{L^2(\vec{w}^k)} u_i$. This has the vector expression
\begin{align}
    \mat{U} (\mat{U}^\top \mat{W} \vec{f}) = \vec{f},
\end{align}
which implies
\begin{align}\label{eq:orthonormality-2}
    \mat{U} \mat{U}^\top \mat{W} = \mat{I},
\end{align}
since $\vec{f}$ is arbitrary.
We can check that the operator \eqref{eq:matern-operator} takes the form (contrast this with \eqref{eq:L-nonweighted} in the non-weighted case):
\begin{align}
    \mathbf{L} := \mathbf{U} \left(\frac{2\nu}{\ell^2}\mathbf{I} + \mathbf{\Lambda}^2\right)^{\nu/2} \mathbf{U}^\top \mat{W}.
\end{align}
Again, we can check that its inverse reads
\begin{align}
    \mathbf{L}^{-1} := \mathbf{U} \left(\frac{2\nu}{\ell^2}\mathbf{I} + \mathbf{\Lambda}^2\right)^{-\nu/2} \mathbf{U}^\top \mat{W},
\end{align}
by verifying
\begin{align}
    \mathbf{L}^{-1}\mathbf{L} &= \mathbf{U} \left(\frac{2\nu}{\ell^2}\mathbf{I} + \mathbf{\Lambda}^2\right)^{-\nu/2} \underbrace{\mathbf{U}^\top \mat{W} \mathbf{U}}_{= \mat{I}} \left(\frac{2\nu}{\ell^2}\mathbf{I} + \mathbf{\Lambda}^2\right)^{\nu/2} \mathbf{U}^\top \mat{W} \\
    &= \mathbf{U} \underbrace{\left(\frac{2\nu}{\ell^2}\mathbf{I} + \mathbf{\Lambda}^2\right)^{-\nu/2} \left(\frac{2\nu}{\ell^2}\mathbf{I} + \mathbf{\Lambda}^2\right)^{\nu/2}}_{= \mat{I}} \mathbf{U}^\top \mat{W} \\
    &= \mathbf{U} \mathbf{U}^\top \mat{W} \\
    &= \mat{I},
\end{align}
and similarly, $\mathbf{L}\mathbf{L}^{-1} = \mat{I}$. Now, we consider the numerical representation $\vec{w}$ of the white noise process $\mathcal{W}$ in the weighted setting. We claim that
\begin{align}
    \vec{w} \sim \mathcal{N}(0, \mat{W}^{-1}).
\end{align}
This can be checked by using its definition (Definition \ref{def:gaussian-white-noise-cochain}), we have
\begin{align}
    \mathbb{E}[\left<\mathcal{W}, f\right>_{L^2(\vec{w}^k)} \left<\mathcal{W}, g\right>_{L^2(\vec{w}^k)}] &= \langle f, g \rangle_{L^2(\vec{w}^k)} \\
    \Leftrightarrow (\vec{f}^\top \mat{W}) \mathbb{E}[\vec{w}\vec{w}^\top] (\mat{W} \vec{g}) &= \vec{f}^\top \mat{W} \vec{g} \\
    \Leftrightarrow \mathbb{E}[\vec{w}\vec{w}^\top] &= \mat{W}^{-1}.
\end{align}
Hence by \eqref{eq:numerical-representation-matern-eq}, we have
\begin{align}
    \vec{f} = \mat{L}^{-1} \vec{w} \sim \mathcal{N}(0, \mat{K}),
\end{align}
where
\begin{align}
    \mat{K} &= \mat{L}^{-1}\mat{W}^{-1}\mat{L}^{-\top} \\
    &= \mathbf{U} \left(\frac{2\nu}{\ell^2}\mathbf{I} + \mathbf{\Lambda}^2\right)^{-\nu/2} \mathbf{U}^\top \underbrace{\mat{W} \mat{W}^{-1} \mat{W}}_{\mat{W}} \mathbf{U} \left(\frac{2\nu}{\ell^2}\mathbf{I} + \mathbf{\Lambda}^2\right)^{-\nu/2} \mathbf{U}^\top \\
    &= \mathbf{U} \left(\frac{2\nu}{\ell^2}\mathbf{I} + \mathbf{\Lambda}^2\right)^{-\nu/2} \underbrace{\mathbf{U}^\top \mat{W} \mathbf{U}}_{= \mat{I}} \left(\frac{2\nu}{\ell^2}\mathbf{I} + \mathbf{\Lambda}^2\right)^{-\nu/2} \mathbf{U}^\top \\
    &= \mathbf{U} \left(\frac{2\nu}{\ell^2}\mathbf{I} + \mathbf{\Lambda}^2\right)^{-\nu} \mathbf{U}^\top.
\end{align}
The expression for the kernel is the same as in the unweighted case, except that now $\mat{\Lambda}, \mat{U}$ are the eigenpairs of the weighted Hodge Laplacian \eqref{eq:weighted-hodge-laplacian-appendix}, the latter being orthonormal with respect to the weighted $L^2$-inner product instead of the standard one.

\subsubsection{Reaction-diffusion Kernel}\label{app:arbitrary-cell-weights-rd}
The same approach also applies to extending the reaction-diffusion kernel to the weighted setting.
In this case, the matrix expression for the Dirac operator \eqref{eq:ext-dirac-operator-appendix} reads
\begin{equation}\label{eq:weighted-dirac}
\vec{\mathcal{D}} = 
    \begin{pmatrix}
        \mathbf{0} & \mathbf{D}_0^* &  \cdots & \mathbf{0} \\
        \mathbf{D}_0 & \ddots & \ddots &  \vdots \\
        \vdots &  \ddots & \ddots & \mathbf{D}_{n-1}^* \\
        \mathbf{0} & \dots & \mathbf{D}_{n-1} & \mathbf{0}
    \end{pmatrix} =
    \begin{pmatrix}
        \mathbf{0} & \mat{W}_0^{-1}\mathbf{B}_1\mat{W}_1 &  \cdots & \mathbf{0} \\
        \mathbf{B}_1^\top & \ddots & \ddots &  \vdots \\
        \vdots &  \ddots & \ddots & \mat{W}_{n-1}^{-1}\mathbf{B}_n \mat{W}_n\\
        \mathbf{0} & \dots & \mathbf{B}_n^\top & \mathbf{0}
    \end{pmatrix}.
\end{equation}
We can check that $\vec{\mathcal{D}}$ is self-adjoint under the weighted $L^2$-inner product, that is
\begin{align}
    \vec{f}^\top \mat{W} \vec{\mathcal{D}} \vec{g} = \vec{g}^\top \mat{W} \vec{\mathcal{D}} \vec{f}.
\end{align}
Hence, we can choose an eigenbasis $\mat{U}$ of $\vec{\mathcal{D}}$ that is orthonormal under the weighted $L^2$-inner product, with eigenvalues given by $\mat{\Lambda} = \mathtt{diag}(\lambda_1, \ldots, \lambda_{N_1 + \cdots + N_n})$. The operator \eqref{eq:rd-operator-appendix} in this case can be expressed as
\begin{align}
    \left(r\mat{I} - c \boldsymbol{\mathcal{D}} + d \boldsymbol{\mathcal{L}} \right)^{\nu/2} \vec{f} &= \mat{U} \left(r\mat{I} - c \mat{\Lambda} + d \mat{\Lambda}^2\right)^{\nu/2} (\mat{U}^{-1}\mat{U})\mat{U}^\top \mat{W} \vec{f} \\
    &= \mat{U} \left(r\mat{I} - c \mat{\Lambda} + d \mat{\Lambda}^2\right)^{\nu/2}\mat{U}^\top \mat{W} \vec{f}.
\end{align}
Then as before, taking
\begin{align}
    \mathbb{E}[\vec{w}\vec{w}^\top] = \mat{W}^{-1}
\end{align}
and following the remaining steps in Appendix \ref{app:arbitrary-cell-weights-matern}, we arrive at the expression for the weighted reaction-diffusion kernel
\begin{align}
    \mat{K} = \mat{U} \left(r\mat{I} - c\mat{\Lambda} + d \mat{\Lambda}^2\right)^{-\nu} \mat{U}^\top.
\end{align}
This is essentially the same as in the non-weighted case, except $\mat{\Lambda}, \mat{U}$ are now the eigenpairs of the weighted Dirac operator \eqref{eq:weighted-dirac}, the latter being orthonormal under the weighted $L^2$-inner product.

\subsubsection{Amplitude of the process}
As a special case, given a positive constant $\sigma > 0$, let us consider the weights $w^k_\alpha = \sigma^{-2}$ for all $k, \alpha$. That is,
\begin{align}
    \mat{W} = \sigma^{-2}\mat{I}.
\end{align}
In this case, notice that conditions \eqref{eq:orthonormality-1} and \eqref{eq:orthonormality-2} become
\begin{align}
    \mat{U}^\top \mat{U} = \mat{U} \mat{U}^\top = \sigma^2 \mat{I}.
\end{align}
Thus, the normalised eigenbasis $\widehat{\mat{U}} := \sigma^{-1}\mat{U}$ is orthonormal under the standard $L^2$-inner product, i.e.,
\begin{align}
\widehat{\mat{U}}^\top \widehat{\mat{U}} = \widehat{\mat{U}} \widehat{\mat{U}}^\top = \mat{I}.
\end{align}
Under this basis, the expressions for the Matérn and reaction-diffusion kernels read
\begin{align}
    \mat{K}_{\text{Matérn}} &= \sigma^2 \,\widehat{\mat{U}} \left(\frac{2\nu}{\ell^2}\mathbf{I} + \mathbf{\Lambda}^2\right)^{-\nu} \widehat{\mat{U}}^\top, \\
    \mat{K}_{\text{r.d.}} &= \sigma^2 \,\widehat{\mat{U}} \left(r\mat{I} - c\mat{\Lambda} + d \mat{\Lambda}^2\right)^{-\nu} \widehat{\mat{U}}^\top.
\end{align}
The extra parameter $\sigma$ controls the {\em amplitude} of the process
\begin{align}
    \mathrm{Var}(f_i) = [\mat{K}]_{ii} =  \sigma^2 c_i,
\end{align}
for $c_i = [\widehat{\mat{U}}\Phi(\mat{\Lambda})\widehat{\mat{U}}^\top]_{ii}$, where $\Phi(\mat{\Lambda}) = \left(\frac{2\nu}{\ell^2}\mathbf{I} + \mathbf{\Lambda}^2\right)^{-\nu}$ in the case of the Matérn kernel and $\Phi(\mat{\Lambda}) = \left(r\mat{I} - c\mat{\Lambda} + d \mat{\Lambda}^2\right)^{-\nu}$ in the case of the reaction-diffusion kernel.
This can be introduced as an extra hyperparameter in the model to fit the data more appropriately, which is recommended to obtain better results.

\section{Experimental Details}\label{app:experiments}

In this paper, all Gaussian processes (graph Matérn GP, CC-GP and RD-GP) are implemented using the GPJax library \citep{Pinder2022}. The objective function is the conjugate marginal log-likelihood and the optimiser is an implementation of Adam from Optax \citep{jax2018github} with a learning rate set at $0.1$.

\subsection{Directed Edge Prediction}\label{app:edge-prediction-experiment}

This experiment compares our CC-GP on edges (Matérn-CC kernel) and the graph Matérn kernel \citep{borovitskiy2020matern}. The task is to predict the edge flow constructed from the geostrophic current around the southern tip of Africa. Here, the geostrophic current refers to the dominant component of the ocean current derived by balancing the pressure gradient with the Coriolis effect. The unprocessed data is retrieved from the \cite{noaa} database, which comes in the form of two scalar fields: one representing the $x$-component and the other the $y$-component of the geostrophic current vector field. The current around the southern tip of Africa is then extracted (lat = $[-45.0, -15.0]$, lon = $[20.0, 53.1]$) and its components are rescaled to a 2D grid of dimension $20\times 20$ (see Figure \ref{fig:geostrophic_field}).
\begin{figure}[ht]
    \centering
    \begin{subfigure}[t]{.3\textwidth}
        \centering  \includegraphics[width=.9\linewidth]{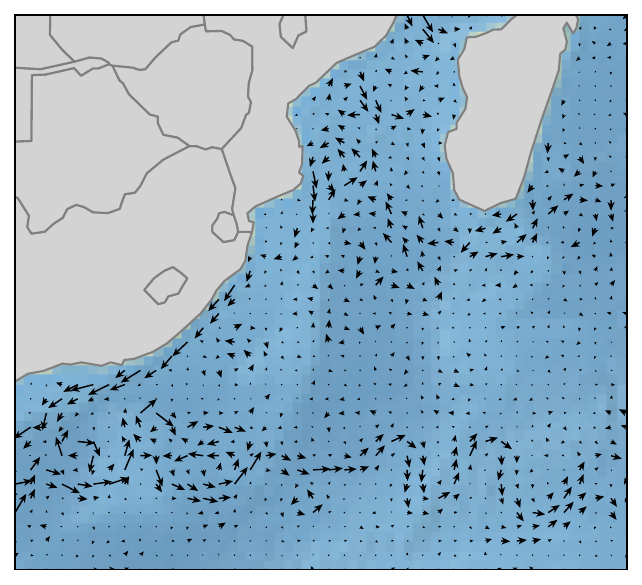}
        \caption[]%
        {{Geostrophic current}}    
    \label{fig:geostrophic_current}
    \end{subfigure}
    \centering
    \begin{subfigure}[t]{.3\textwidth}
        \centering  \includegraphics[width=.9\linewidth]{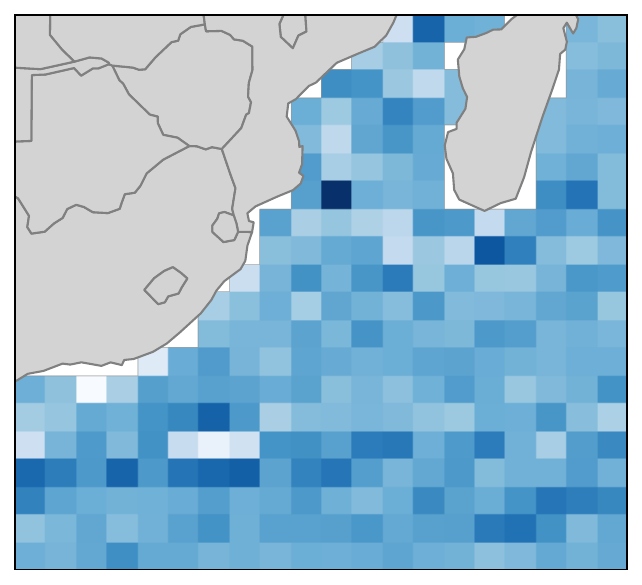}
        \caption[Network2]%
        {{$x$-component (coarsened)}}    
    \label{fig:geostrophic_current_x}
    \end{subfigure}%
    \begin{subfigure}[t]{.3\textwidth} 
        \centering 
\includegraphics[width=.9\linewidth]{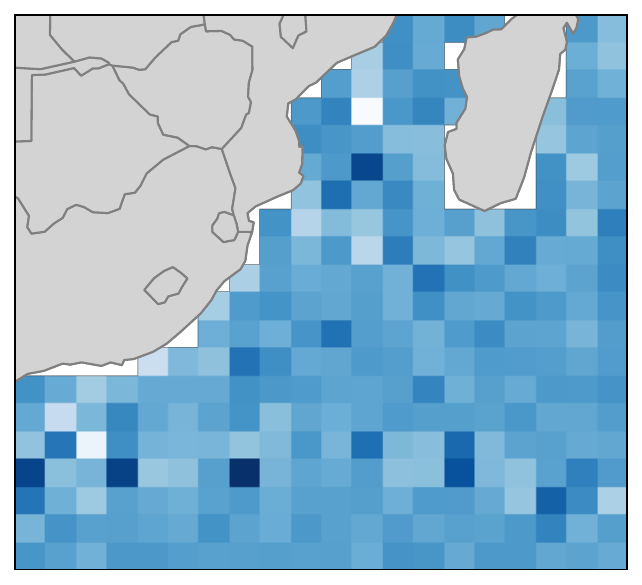}
        \caption[]%
        {{$y$-component (coarsened)}}    
        \label{fig:geostrophic_field_y}
    \end{subfigure}
    \caption
    {The geostrophic current around the southern tip of Africa. (Left) Quiver plot of the geostrophic current, (Middle) The $x$-component of the geostrophic current coarsened to a $20 \times 20$ grid, (Right) The $y$-component of the geostrophic current coarsened to a $20 \times 20$ grid.}
    \label{fig:geostrophic_field}
\end{figure}

The next step in the pre-processing is to transform this data into edge signals of a cubical 1-complex. We adopt the method in \cite{desbrun2006discrete} to generate these signals. To do so, a cubical mesh of the same resolution as the data ($20 \times 20$) is first generated, where each edge $e$ in the mesh is assigned an orientation. Here, the orientation is represented by a unit vector $\hat{\vec{t}}_e$ pointing from one endpoint to the other.
Then for each edge $e$, we compute how much of the geostrophic current flows along $e$ in the direction specified by its orientation.

 \begin{figure}[ht]
    \centering
    \begin{subfigure}[t]{.3\textwidth}
        \centering  \includegraphics[width=.8\linewidth]{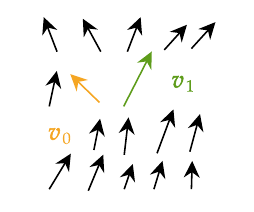}
        \caption[Network2]%
        {Two vectors of a vector field.}    
    \label{fig:two_scalar_fields}
    \end{subfigure}%
    \begin{subfigure}[t]{.4\textwidth} 
        \centering 
\includegraphics[width=.9\linewidth]{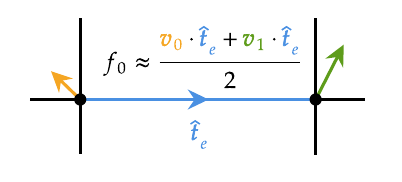}
        \caption[]%
        {{Directed edge signals}}    
        \label{fig:oriented_edge_signals}
    \end{subfigure}
    \caption
    {The construction of the edge signals.}
    \label{fig:edge_signals_construction}
\end{figure}
More precisely, the value $f_0$ on the edge $e$ (illustrated in Figure \ref{fig:edge_signals_construction}) is computed according to
\begin{equation}
    f_0 = \int_e \vec{v}(s) \cdot \hat{\vec{t}}_e \,\mathrm{d}s \approx \frac{\vec{v}_0 \cdot \hat{\vec{t}}_e + \vec{v}_1 \cdot \hat{\vec{t}}_e}{2},
\end{equation}
where $s : [0,1] \rightarrow e$ is a parameterisation of the edge $e$ and $\vec{v}$ is the geostrophic current.
 This yields directed edge signals on a cubical mesh, where we use the usual rule of setting the direction to be aligned with the orientation of $e$ if $f_0$ is positive and opposite to it if $f_0$ is negative. 
 The training data is obtained by randomly selecting 30\% of the generated edge signals and adding i.i.d. noise from a Gaussian $\mathcal{N}(0, 10^{-4})$.

For the training of the graph Matérn GP, the smoothness hyperparameter $\nu$ is fixed at $2$. 
The amplitude and lengthscale hyperparameters $\sigma^2$, $\ell$ are both initialised at $1.0$ and optimised for $1000$ iterations using Adam. The training took less than 30 seconds on a MacBook Pro with M1 chip. In a similar way, when training CC-Matérn GP on edges, the smoothness hyperparameter $\nu$ is set to $2$, and the amplitude and lengthscale hyperparameters $\sigma^2$, $\ell$ are initialised at $1.0$, before optimising them for $1000$ iterations using Adam. The training for this model also takes less than 30 seconds. 

\subsection{Signal Mixing}\label{app:signal-mixing-experiment}

This experiment compares the performance of the RD-GP and the Mat\'ern CC-GP in the task of predicting signals on the vertices, edges and triangles of a 2D simplicial mesh. The mesh is constructed by first defining a $10 \times 10$ grid, then subdividing this grid into triangles to transform it into a 2D simplicial mesh. The resulting complex is composed of 523 simplices: 100 vertices, 261 edges and 162 triangles.

The signals on the edges are created by taking inspiration from the Karhunen-Loève theorem, which states that a stochastic process can be expressed as a linear combination of $L^2$-orthogonal basis functions with random coefficients (one may view this as a stochastic analogue of the Fourier expansion). Here, the orthogonal basis functions are the set of eigenfunctions $\{u_i\}_i$ of the Hodge Laplacian $\Delta_1$. The orthogonality of the eigenfunctions is ensured by the symmetry of the operator $\Delta_1$. This forms a basis for edge signals (i.e. $1$-cochains) that encodes the topology of the mesh through the information contained in $\Delta_1$. For the coefficients in the basis expansion, we use i.i.d. Gaussians $\xi_i \sim \mathcal{N}(0, \lambda_i^{-1})$, where $\lambda_i$ is the eigenvalue of $\Delta_1$ corresponding to $u_i$. This expansion is truncated to lie between $0 < k < K$, which represent the minimal and maximal wavenumbers controlling the smoothness of the edge field. Putting this together yields the random $1$-cochain
\begin{equation}
    f = \sum_{i=k}^K \xi_i u_i, \quad \xi_i \sim \mathcal{N}(0, \lambda_i^{-1}).
\end{equation}
Once the signals on the edges are obtained, the signals on the vertices and the triangles are computed by applying the coboundary operator $d_1$ and its adjoint $d^*_1$ to $f$, respectively. Using the numerical representation of cochains, $\{\vec{u}_i\}_i$ becomes the set of eigenvectors of the Hodge Laplacian matrix $\boldsymbol{\Delta}_1$, the coboundary operator becomes the matrix $\mat{B}_1$ and its adjoint becomes $\mat{B}_2^\top$ (see Appendix \ref{app:numerical-representation}). The vertex signals and the triangle signals are obtained by computing $\mat{B}_1 \vec{f} $ and $\mat{B}_2^\top \vec{f}$, respectively. An example signal for $k=20$ and $K=100$ is displayed in Figure \ref{fig:synthetic_signals}, which we use as the ground truth in our experiment. 

\begin{figure}[ht]
    \centering
    \begin{subfigure}[t]{.3\textwidth}
        \centering  \includegraphics[width=.9\linewidth]{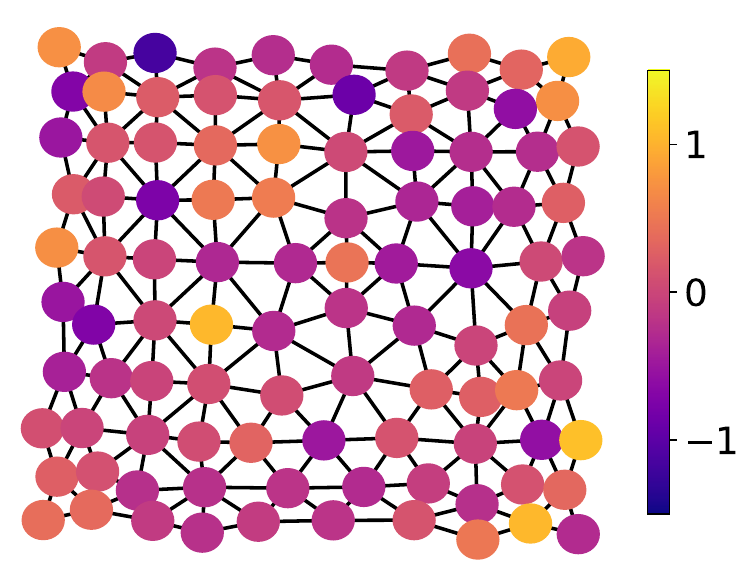}
        \caption[Network2]%
        {{Vertex signals}}    
    \label{fig:vertex_signals}
    \end{subfigure}%
    \begin{subfigure}[t]{.3\textwidth} 
        \centering 
\includegraphics[width=.9\linewidth]{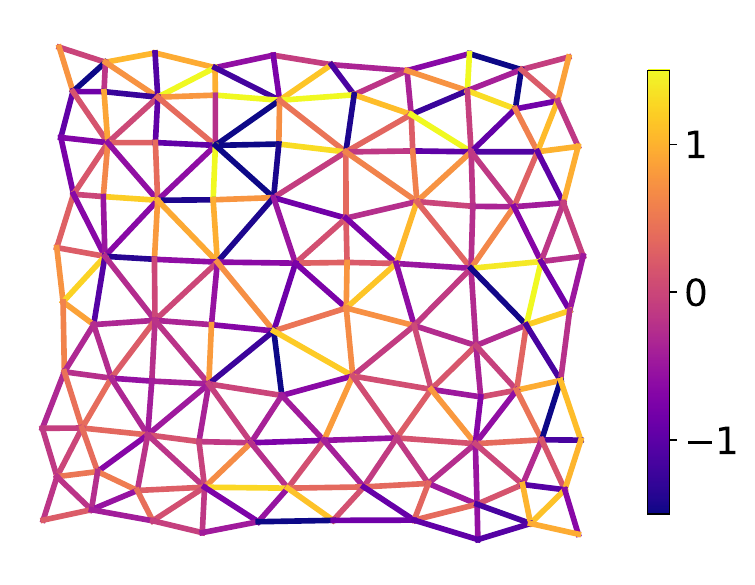}
        \caption[]%
        {{Edge signals}}    
        \label{fig:synthetic_edge}
    \end{subfigure}
        \begin{subfigure}[t]{.3\textwidth} 
        \centering 
\includegraphics[width=.9\linewidth]{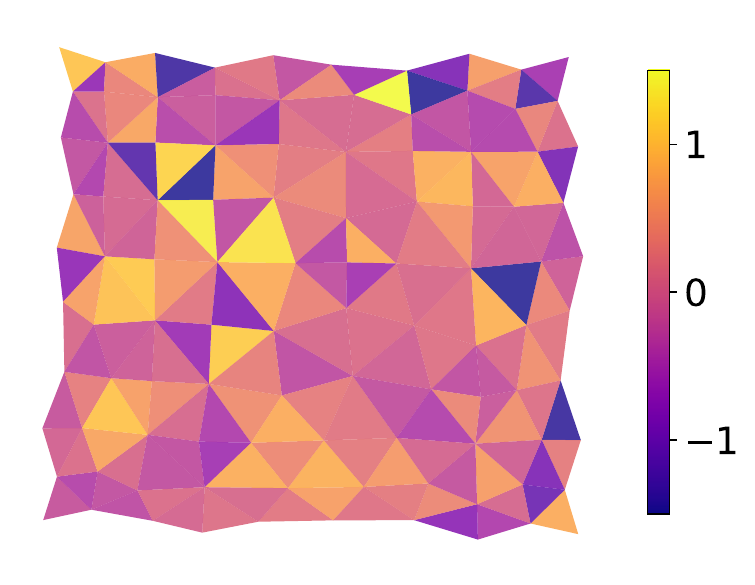}
        \caption[]%
        {{Triangle signals }}    
        \label{fig:synthetic_triangle}
    \end{subfigure}
    \caption
    {An example synthetic signal on the vertices, edges and triangles.}
    \label{fig:synthetic_signals}
\end{figure}

The training data is generated by randomly selecting a third of the vertices, a third of the edges and a third of the triangles from this ground truth field, and corrupting them by i.i.d. noise from a Gaussian $\mathcal{N}(0, 10^{-2})$.

For training the CC-Matérn GP, the smoothness hyperparameter $\nu$ is fixed at $2$, and the amplitude and lengthscale hyperparameters $\sigma^2$, $\ell$ are both initialised at $1.5$, before optimising them for 1000 iterations using Adam.  The training takes less than a minute on a MacBook Pro equipped with a M1 Pro chip. The training of RD-GP is similar: The smoothness hyperparameter $\nu$ is fixed at $2$, and the amplitude hyperparameter $\sigma^2$, the reaction coefficient $r$, the diffusion coefficient $d$, and the cross-diffusion coefficient $c$ are all initialised at $1.5$. They are then optimised for 1000 iterations using Adam, again taking less than a minute to run.

\subsection{Modelling Electromagnetism}\label{app:electromagnetic-experiment}
In this experiment, we compare the performance of the RD-GP and the Mat\'ern GP on imputing signals on the vertices, edges and faces of a $20 
\times 20$ square lattice. The signals come from simulations of electromagnetic fields. In particular, we used the Python package \texttt{PyCharge} to generate 2D electromagnetic fields on a square domain, generated by $10$ oscillating point charges at randomly generated locations. The fields that were computed were the scalar potential ($V$), electric field ($E$) and the magnetic field ($B$).
Physically, these are a scalar field, a vector field and a two-form / pseudovector field (i.e., a field of vectors whose sign depends on the orientation of the manifold), respectively. An example of such fields is displayed in Figure \ref{fig:electromagnetic-fields}.

\begin{figure}[ht]
    \centering
    \begin{subfigure}[t]{.3\textwidth}
        \centering  \includegraphics[width=.7\linewidth]{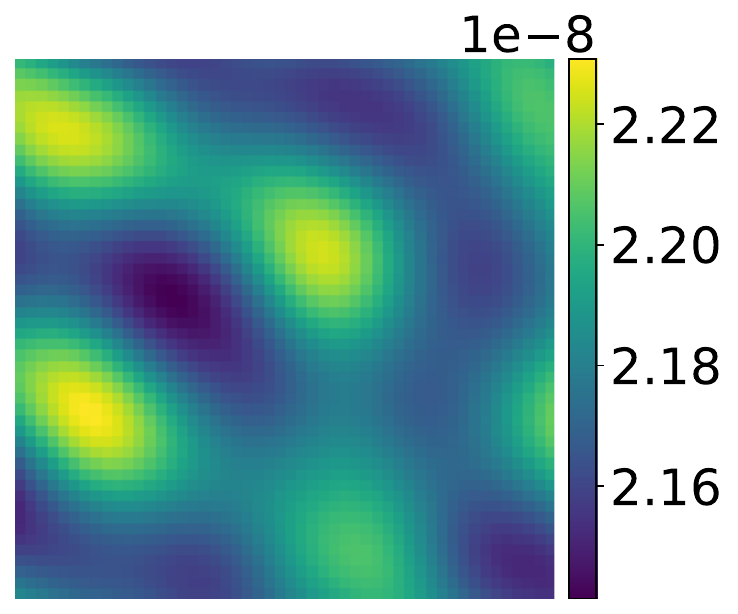}
        \caption[Network2]%
        {{Scalar potential}}    
    \end{subfigure}%
    \begin{subfigure}[t]{.3\textwidth} 
        \centering 
\includegraphics[width=.7\linewidth]{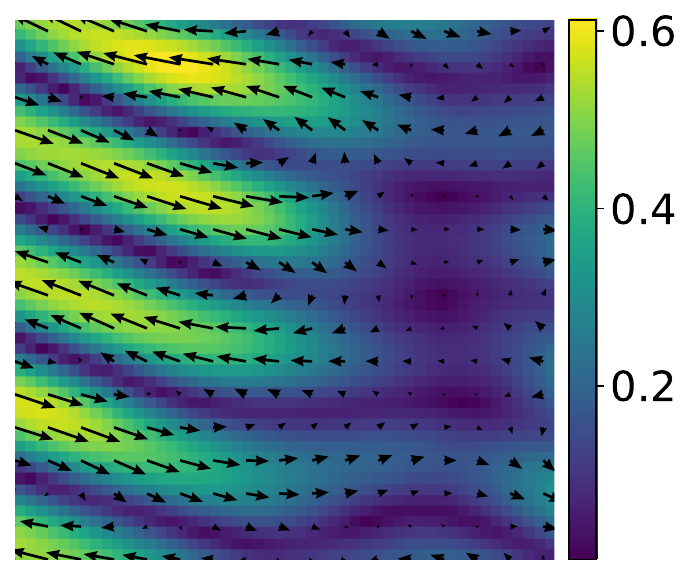}
        \caption[]%
        {{Electric field}}    
    \end{subfigure}
        \begin{subfigure}[t]{.3\textwidth} 
        \centering 
\includegraphics[width=.7\linewidth]{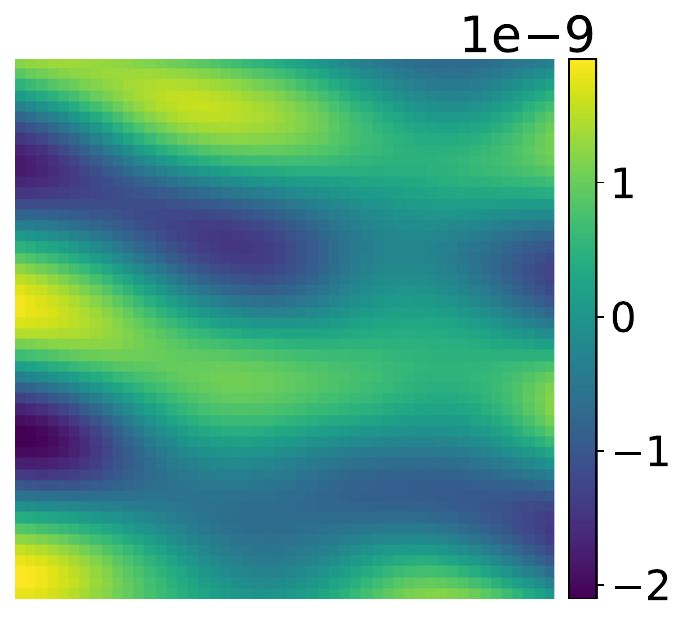}
        \caption[]%
        {{Magnetic flux into page}}    
    \end{subfigure}
    \caption{We plot an example scalar potential, electric field and magnetic field generated from ten randomly sampled oscillating point charges. For the electric field, we display the amplitudes of the vectors in colour in the background. For the magnetic field, we plot the magnetic flux going into the page, which becomes a scalar field.}
    \label{fig:electromagnetic-fields}
\end{figure}

The next step involves projecting these fields onto a cellular complex of dimension two, given by a $20 \times 20$ square lattice. Projecting the scalar potential on the vertices of a square lattice involves just extracting the point values of the field at the vertex locations.
To project the electric field onto the edges of the lattice, the procedure is similar to that described in Appendix \ref{app:edge-prediction-experiment}. Finally, projecting the magnetic field onto the square faces of the lattice involves averaging  the magnetic flux (i.e., $B \cdot \hat{n}$) over the square cells, where the unit normal $\hat{n}$ is given by the normal vector determining the orientation of the cell (see Appendix \ref{app:orientation}). We also normalise the projected values, due to the large discrepancies of magnitudes between the different fields.
The final projections of the fields in Figure \ref{fig:electromagnetic-fields} onto the cells of a square lattice are displayed in Figure \ref{fig:electromagnetic-fields-discretised}.

\begin{figure}[ht]
    \centering
    \begin{subfigure}[t]{.3\textwidth}
        \centering  \includegraphics[width=.7\linewidth]{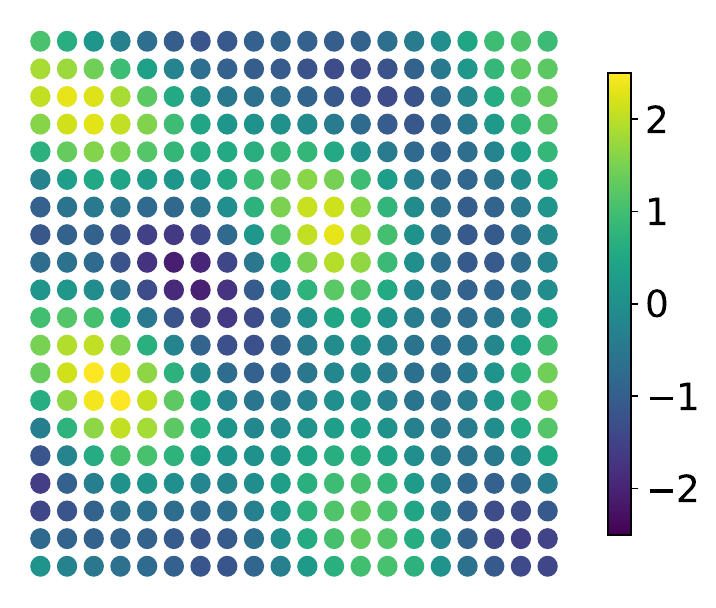}
        \caption[Network2]%
        {{Scalar potential}}    
    \end{subfigure}%
    \begin{subfigure}[t]{.3\textwidth} 
        \centering 
\includegraphics[width=.7\linewidth]{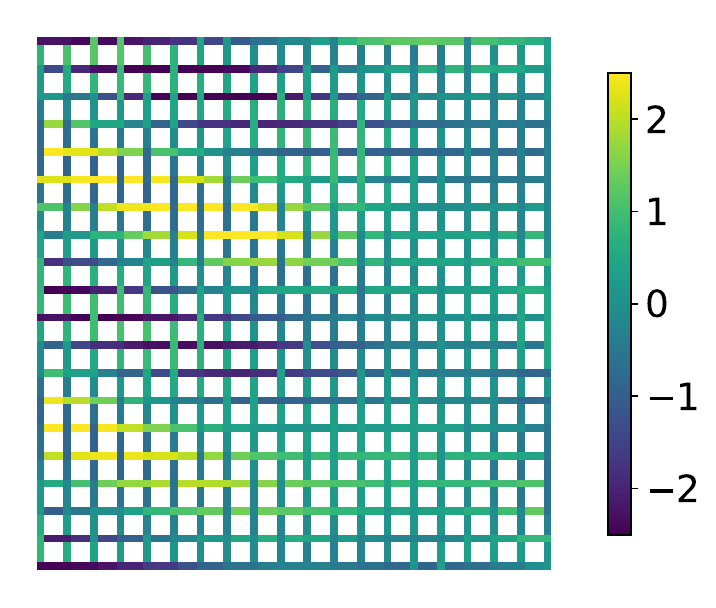}
        \caption[]%
        {{Electric field}}    
    \end{subfigure}
        \begin{subfigure}[t]{.3\textwidth} 
        \centering 
\includegraphics[width=.7\linewidth]{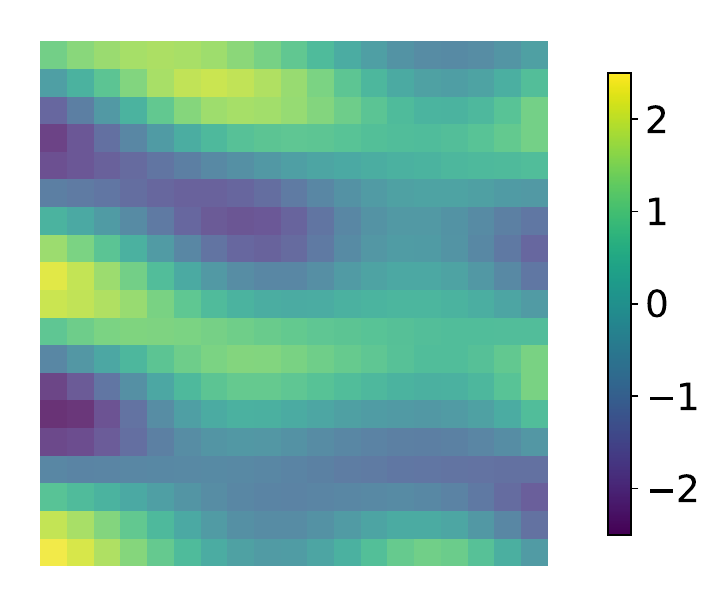}
        \caption[]%
        {{Magnetic field}}    
    \end{subfigure}
    \caption{Discrete representations of the scalar potential, electric field and magnetic field as $0$, $1$ and $2$-cochains of a square lattice respectively. Each square cell is assigned clockwise orientation.}
    \label{fig:electromagnetic-fields-discretised}
\end{figure}

The training data is generated by randomly selecting a sixth of the vertices, edges and square faces from the projected fields and adding i.i.d Gaussian noise with standard deviation of $10^{-2}$.

For training the CC-Matérn GP, the smoothness hyperparameter $\nu$ is fixed at $2$, and the amplitude and lengthscale hyperparameters $\sigma^2$, $\ell$ are both initialised at $1.5$, before optimising them for 1000 iterations using Adam.  The training takes less than a minute on a MacBook Pro equipped with a M1 Pro chip. The training of RD-GP is similar: The smoothness hyperparameter $\nu$ is fixed at $2$, the amplitude hyperparameter $\sigma^2$, the reaction coefficient $r$ and the diffusion coefficient $d$ are initialised at $1.5$. The cross-diffusion coefficient $c$ is initialised at $2.5$. They are then optimised for 1000 iterations using Adam, again taking less than a minute to run.

The predictions made by the RD-GP is displayed in Figure \ref{fig:electromagnetism-rd-preds} and those made by the CC-Mat\'ern GP is displayed in Figure \ref{fig:electromagnetism-matern-preds}. We see that both GPs recover the ground truth field (Figure \ref{fig:electromagnetic-fields-discretised}) fairly accurately from the observations. While the metrics indicate that the RD-GP output is slightly better than those of CC-Mat\'ern (Table \ref{fig:electromagnetic_example_results_table}), perceptually, the differences are too small to see.

\begin{figure}[ht]
    \centering
    \begin{subfigure}[t]{.3\textwidth}
        \centering  \includegraphics[width=.7\linewidth]{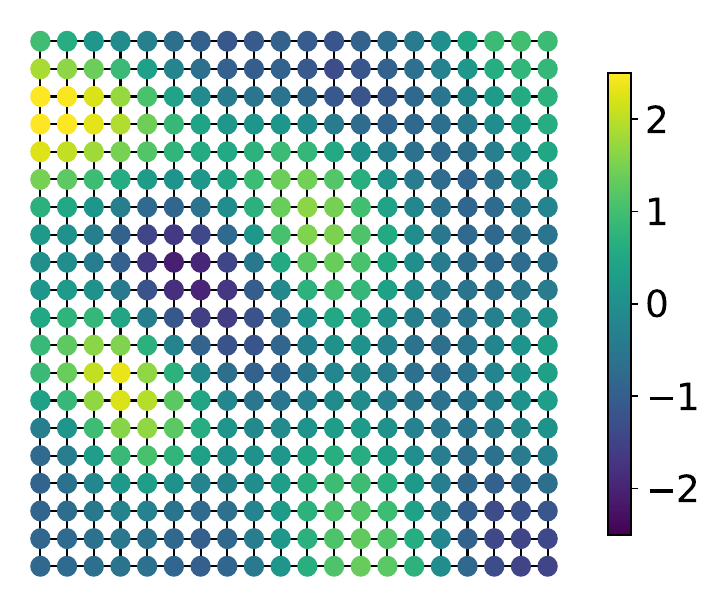}   
    \end{subfigure}%
    \begin{subfigure}[t]{.3\textwidth} 
        \centering 
\includegraphics[width=.7\linewidth]{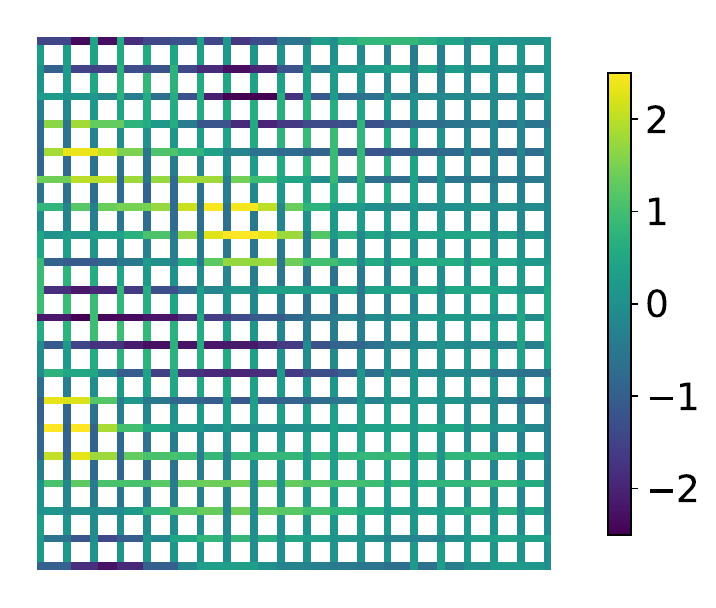}   
    \end{subfigure}
        \begin{subfigure}[t]{.3\textwidth} 
        \centering 
\includegraphics[width=.7\linewidth]{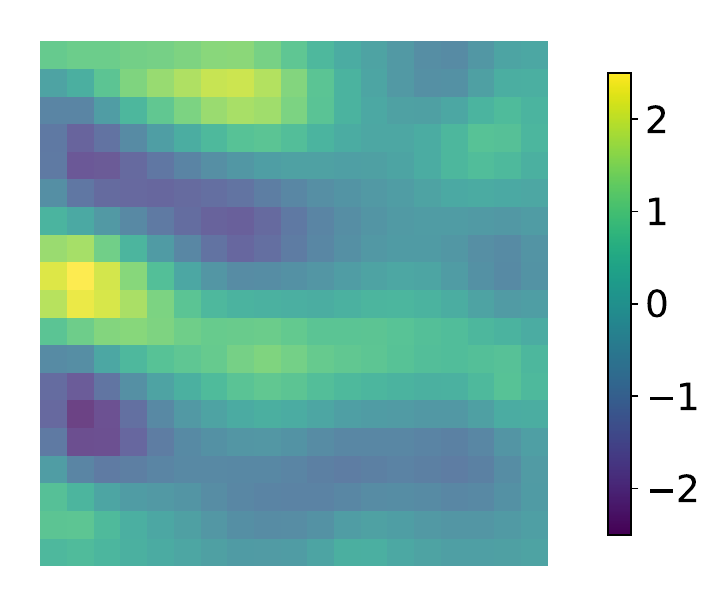}  
    \end{subfigure}
    \begin{subfigure}[t]{.3\textwidth}
        \centering  \includegraphics[width=.7\linewidth]{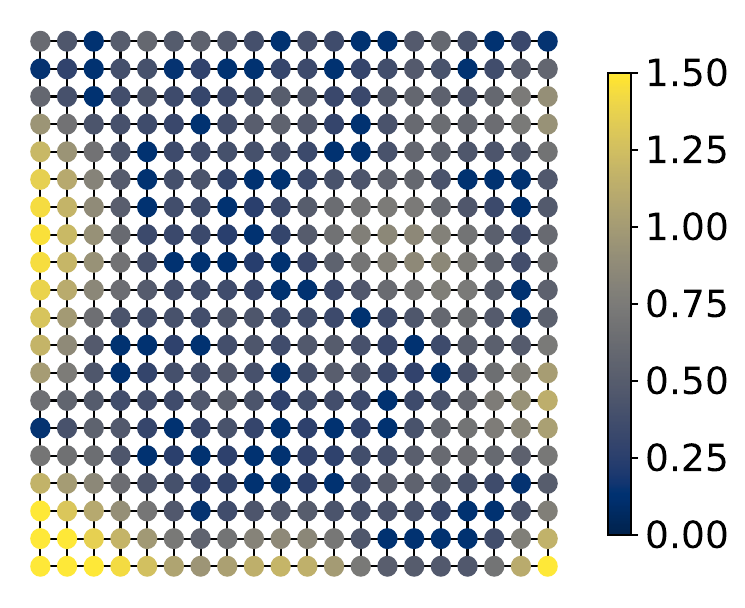}
        \caption[Network2]%
        {{Scalar potential}}    
    \end{subfigure}%
    \begin{subfigure}[t]{.3\textwidth} 
        \centering 
\includegraphics[width=.7\linewidth]{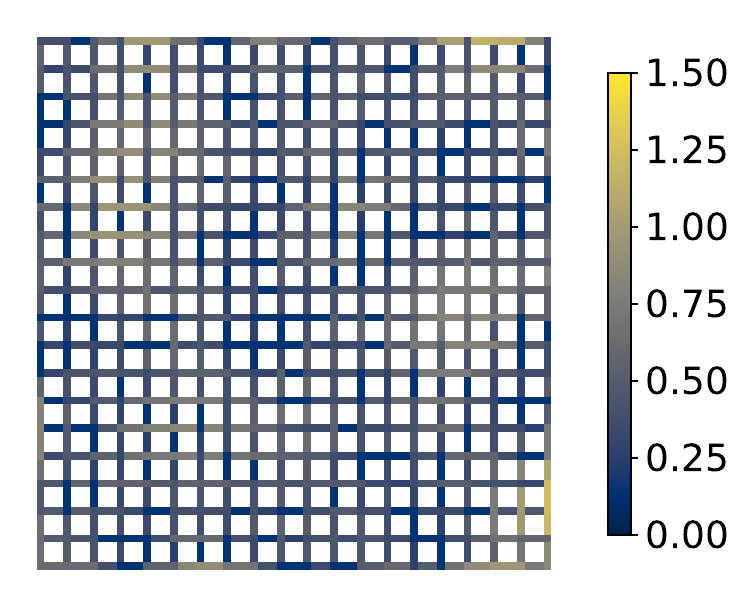}
        \caption[]%
        {{Electric field}}    
    \end{subfigure}
        \begin{subfigure}[t]{.3\textwidth} 
        \centering 
\includegraphics[width=.7\linewidth]{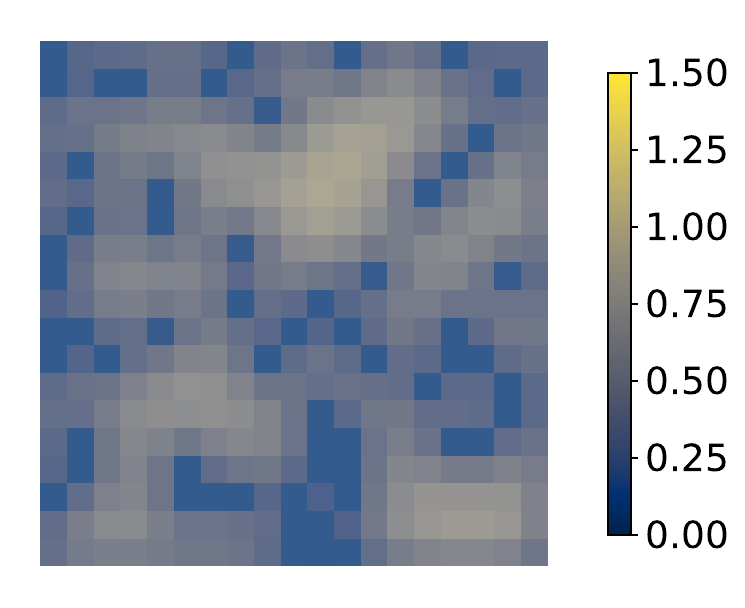}
        \caption[]%
        {{Magnetic field}}    
    \end{subfigure}
    \caption{Predictions of the scalar potential, electric field and magnetic field made from the reaction-diffusion GP. The top row displays the predictive mean and the bottom row displays the standard deviations.}
    \label{fig:electromagnetism-rd-preds}
\end{figure}

\begin{figure}[ht]
    \centering
    \begin{subfigure}[t]{.3\textwidth}
        \centering  \includegraphics[width=.7\linewidth]{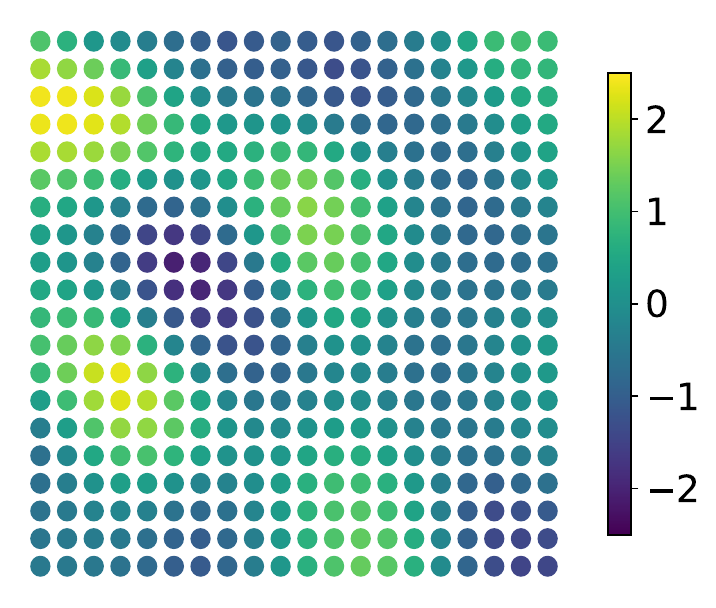} 
    \end{subfigure}%
    \begin{subfigure}[t]{.3\textwidth} 
        \centering 
\includegraphics[width=.7\linewidth]{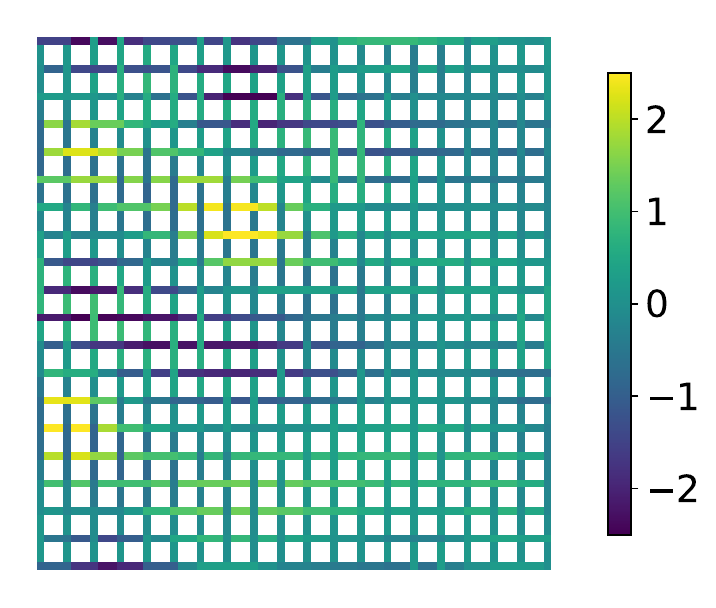}
    \end{subfigure}
        \begin{subfigure}[t]{.3\textwidth} 
        \centering 
\includegraphics[width=.7\linewidth]{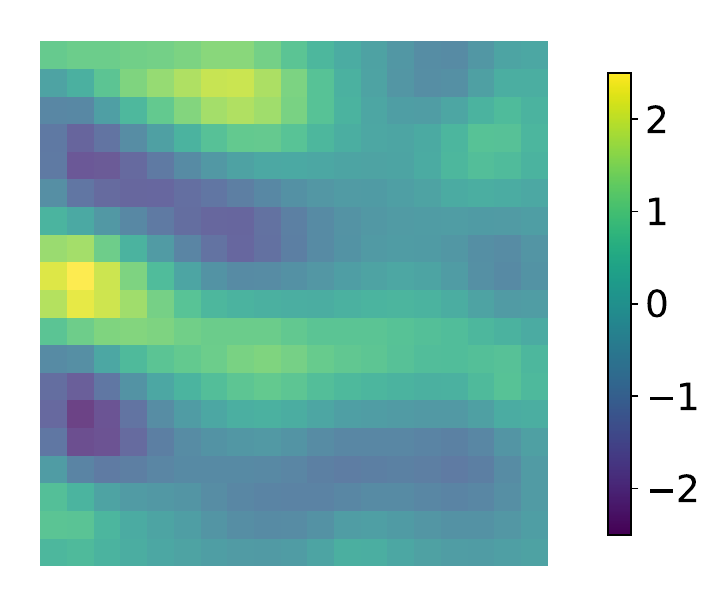}  
    \end{subfigure}
    \begin{subfigure}[t]{.3\textwidth}
        \centering  \includegraphics[width=.7\linewidth]{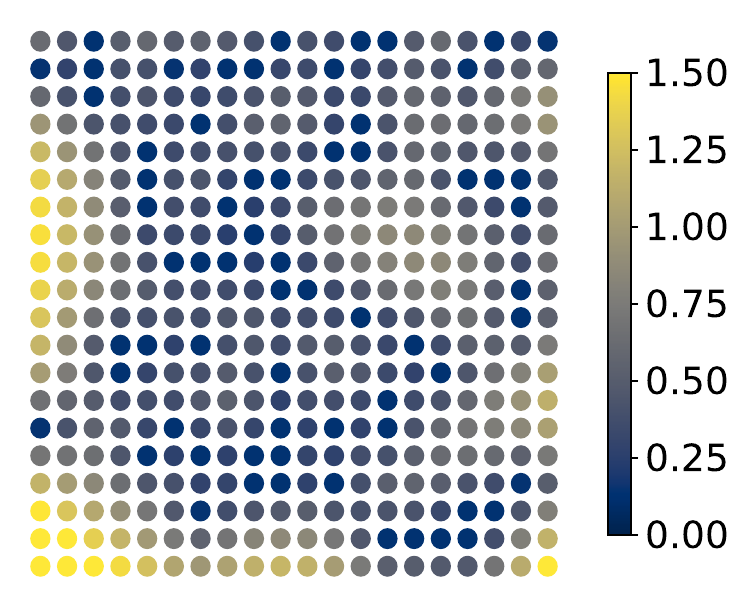}
        \caption[Network2]%
        {{Scalar potential}}    
    \end{subfigure}%
    \begin{subfigure}[t]{.3\textwidth} 
        \centering 
\includegraphics[width=.7\linewidth]{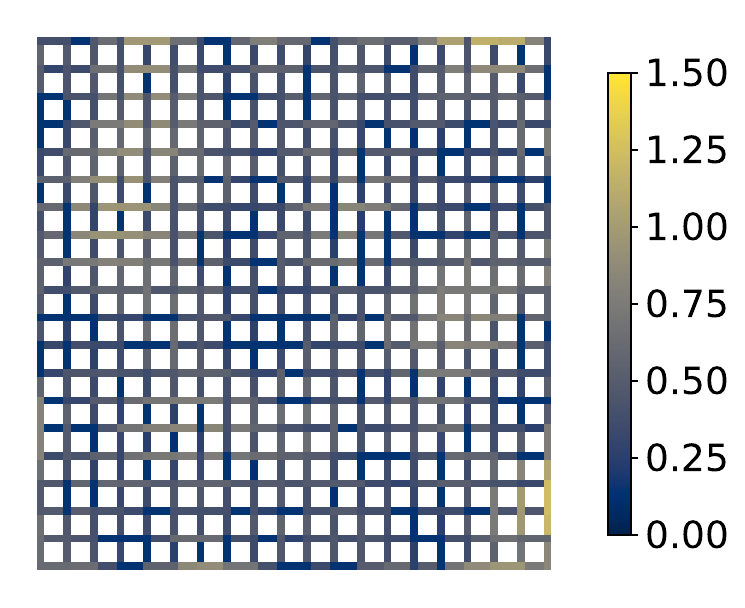}
        \caption[]%
        {{Electric field}}    
    \end{subfigure}
        \begin{subfigure}[t]{.3\textwidth} 
        \centering 
\includegraphics[width=.7\linewidth]{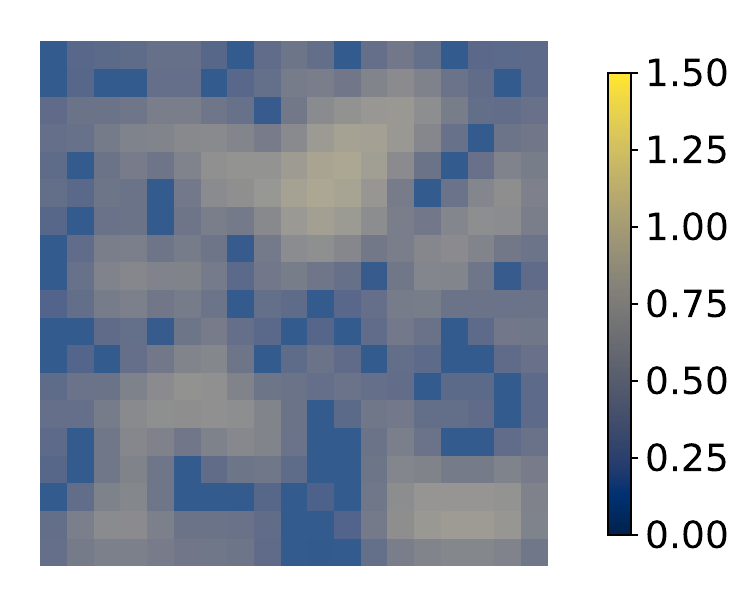}
        \caption[]%
        {{Magnetic field}}    
    \end{subfigure}
    \caption{Predictions of the scalar potential, electric field and magnetic field made from the CC-Mat\'ern GP. The top row displays the predictive mean and the bottom row displays the standard deviations.}
    \label{fig:electromagnetism-matern-preds}
\end{figure}

\begin{table}[t]
    \centering
    \begin{tabular}{l r r} 
       {MSE ($\downarrow$)}  & CC-Mat\'ern & Reaction-diffusion\\
       \midrule
       Scalar potential & 0.113 & {\bf 0.110} \\ 
     Electric field & 0.125 & {\bf 0.108} \\
     Magnetic field & 0.151 & {\bf 0.137} \\
     \hline
     \rule{0pt}{3ex} NLL ($\downarrow$)
        \\
     \midrule
     Scalar potential &  70.8 & {\bf 68.5} \\ 
     Electric field & 110.3 & {\bf 92.3} \\
     Magnetic field & 118.4 & {\bf 110.1} \\
    \end{tabular}
    \caption{Mean square error (MSE) and negative log-likelihood (NLL) of predictions of the electromagnetic fields in Figure \ref{fig:electromagnetism-rd-preds} (RD-GP) and Figure \ref{fig:electromagnetism-matern-preds} (CC-Mat\'ern GP). The performance of the reaction-diffusion GP is slightly better than the Matérn GP on the cellular complex, suggesting that mixing on this example has some positive impact on the predictions.}
    \label{fig:electromagnetic_example_results_table}
\end{table}

\end{document}


%

%

\onecolumn
\aistatstitle{Instructions for Paper Submissions to AISTATS 2024: \\
Supplementary Materials}

\section{Cell Orientations and Boundaries}

\section{Numerical Representation of Cellular Complexes}
To make computations explicit, we wish to represent the above concepts using matrices and vectors. Fortunately, this is not difficult as the space of chains / cochains forms a free Abelian group / vector space, which is isomorphic to $\mathbb{Z}^n$ / $\mathbb{R}^n$.

To this end, we fix a labelling $\alpha \mapsto e^k_\alpha$ of the $k$-cells comprising a cellular complex $X$, which forms an ordered basis $(e^k_1, \ldots, e^k_{N_k})$. Then, an arbitrary $k$-chain $c = \sum_{\alpha=1}^{N_k} n_\alpha e^k_\alpha \in C_k(X)$ may be represented by a vector $\boldsymbol{c} = (n_1, \ldots, n_{N_k})^\top$ in $\mathbb{Z}^{N_k}$. Similarly, a $k$-cochain $f = \sum_{\alpha=1}^{N_k} f_\alpha (e^k_\alpha)^* \in C^k(X)$ can be represented by a vector $\boldsymbol{f} = (f_1, \ldots, f_{N_k})^\top$ in $\mathbb{R}^{N_k}$. Under this representation, cochain evaluation \eqref{eq:cochain-def} can simply be expressed as a dot product $f(c) = \boldsymbol{f}^\top \boldsymbol{c} \in \mathbb{R}$.

Next, we consider the boundary and coboundary operators. The boundary operator can be expressed as a signed incidence matrix $\mathbf{B}_k : \mathbb{Z}^{N_k} \rightarrow \mathbb{Z}^{N_{k-1}}$, whose $j$-th column corresponds to the vector representation of the cell boundary $\partial e^k_j$ (see Figure \ref{fig:chains}).
Similarly, the coboundary operator can be represented by a matrix $\mathbf{D}_k : \mathbb{R}^{N_k} \rightarrow \mathbb{R}^{N_{k+1}}$. Using \eqref{eq:coboundary-def}, we have
\begin{align}
    \boldsymbol{f}^\top \mathbf{D}_k^\top \boldsymbol{c} = \boldsymbol{f}^\top \mathbf{B}_{k+1} \boldsymbol{c}, \quad \Leftrightarrow \quad \mathbf{D}_k = \mathbf{B}_{k+1}^\top.
\end{align}
Thus, the coboundary operator is identified with the transpose of the incidence matrix. Finally, let $\mathbf{W}_k = \mathtt{diag}(w_1^k, \ldots, w_{N_k}^k)$ be the weight matrix defining the $L^2$-inner product \eqref{eq:k-cochain-inner-product}, i.e.,
\begin{align}
    \left<f, g\right>_{L^2(\vec{w}^k)} = \boldsymbol{f}^\top\mathbf{W}_k \boldsymbol{g}.
\end{align}
Then, letting $\mathbf{D}^*_k : \mathbb{R}^{N_{k+1}} \rightarrow \mathbb{R}^{N_k}$ be the matrix representation of the adjoint of the coboundary,  \eqref{eq:codifferential-def} implies
\begin{align}
    \boldsymbol{f}^\top (\mathbf{D}_k^*)^\top \mathbf{W}_{k} \boldsymbol{g} = \boldsymbol{f}^\top \mathbf{W}_{k+1} \mathbf{B}_{k+1}^\top \boldsymbol{g} \\
    \quad \Leftrightarrow \quad \mathbf{D}_k^* = \mathbf{W}_{k}^{-1} \mathbf{B}_{k+1} \mathbf{W}_{k+1}.
\end{align}

Putting this together, we find the matrix expression $\mathbf{\Delta}_k : \mathbb{R}^{N_k} \rightarrow \mathbb{R}^{N_k}$ for the Hodge-Laplacian operator:
\begin{align}
    &\mathbf{\Delta}_k = \mathbf{D}_{k} \mathbf{D}^*_{k} + \mathbf{D}^*_{k+1} \mathbf{D}_{k+1} \\
    &= \mathbf{B}_{k}^\top (\mathbf{W}_{k-1}^{-1} \mathbf{B}_{k} \mathbf{W}_{k}) + (\mathbf{W}_{k}^{-1} \mathbf{B}_{k+1} \mathbf{W}_{k+1}) \mathbf{B}_{k+1}^\top. \label{eq:hodge-laplace-matrix-general}
\end{align}
We observe that the matrix \eqref{eq:hodge-laplace-matrix-general} is symmetric if and only if $\mat{W}_k = \mat{I}$. We also observe that this recovers the familiar (weighted) graph Laplacian matrix.

\section{Characterisation of GPs on Cellular Complexes}
\subsection{Proof of Theorem \ref{eq:GRC-characterisation}}

\subsection{Proof of Theorem \ref{eq:CCGP-characterisation}}

\section{Rigorous treatment}

\subsection{Cellular complexes}
A regular cellular complex $X$ of finite dimension $n < \infty$ is a topological space obtained by iterating the following steps:

\begin{enumerate}
\setcounter{enumi}{-1}
    \item Start with a discrete set $X^0 = \{e^0_\alpha\}_{\alpha=1}^{N_0}$ of $0$-dimensional points (or {\em $0$-cells}), called a {\em $0$-skeleton}.
    \item Take a discrete set $\tilde{X}^1 = \{e^1_\alpha\}_{\alpha=1}^{N_1}$ of line segments (or {\em $1$-cells}). For each line segment $e^1_\alpha$ in the set, attach an endpoint of $e^1_\alpha$ to a point in $X^0$ (mathematically, this is described by the {\em attaching maps} $\phi_\alpha^1 : \partial e^1_\alpha \rightarrow X^0$). The resulting topological space $X^1 = X^0 \cup \tilde{X}^1 / \sim$, where $x \sim y \Leftrightarrow y = \phi_\alpha^1(x)$ for $x \in \partial e^1_\alpha$ is called a $1$-skeleton.
    \item Take a discrete set $\tilde{X}_2 = \{e^2_\alpha\}_{\alpha=1}^{N_2}$ of $2$d disks (or {\em $2$-cells}) and attach the boundary of each disk $e^2_\alpha$ to $X^1$ by the attaching maps $\phi_\alpha^2 : \partial e^2_\alpha \rightarrow X^1$. As before, the resulting topological space $X^2 = X^1 \cup \tilde{X}^2 / \sim$, where $x \sim y \Leftrightarrow y = \phi_\alpha^2(x)$ for $x \in \partial e_2^\alpha$ is called a $2$-skeleton.
    \item Likewise, for $k = 3, \ldots, n$, define a $k$-skeleton $X^k$ by considering a discrete set $\tilde{X}^k = \{e^k_\alpha\}_{\alpha=1}^{N_k}$ of $k$-dimensional disks ($k$-cells) and taking $X^k = X^{k-1} \cup \tilde{X}^{k} / \sim$, where the equivalence relation is defined by the attaching maps $\phi_\alpha^k : \partial e^k_\alpha \rightarrow X^{k-1}$ as before.
    \item Finally, the cellular complex is defined as a topological space $X = \bigcup_{k=1}^n X^k$, where $A \subset X$ is open if and only $A \cap X^k$ is open for all $k = 1, \ldots, n$.
\end{enumerate}

Note that the skeletons form a filtration $\emptyset \subset X^0 \subset X^1 \subset \cdots $ so in the finite setting, $X$ is homeomorphic to $X^n$.

\subsection{Chains and cochains}
Given a cellular complex $X$, a $k$-chain $C_k(X) = \{\sum_{\alpha=1}^{N_k} n_\alpha e^k_\alpha : n_\alpha \in \mathbb{Z}, \,\forall \alpha=1,\ldots,N_k\}$ on $X$ is a free Abelian group (can think of as a vector space but only admitting integer-valued scalars) whose generators are the $k$-cells in $X$. Intuitively, a $k$-chain abstracts the notion of a set of ``directed paths" on a graph to higher order cells (a set of ``directed path" in this case can be viewed as a $1$-chain).

There exists a canonical operation on $C_k(X)$ termed the {\em boundary operator} $\partial_k : C_k(X) \rightarrow C_{k-1}(X)$, $k=1, \ldots, n$. For $c_k = \sum_{\alpha} n_\alpha e^\alpha_k$, we have
\begin{align}
    \partial_k c_k = \sum_{\alpha=1}^{N_k} n_a \partial_k e_k^\alpha, \quad \text{where} \quad
    \partial_k e^\alpha_k = \sum_{\beta=1}^{N_{k-1}} \mathrm{deg} \left(\chi_k^{\alpha\beta}\right) e_{k-1}^\beta,
\end{align}
where $\text{deg}$ is the Brouwer degree of the mapping $\chi_k^{\alpha\beta} : \mathbb{S}^{k-1} \stackrel{\sim}{\rightarrow} \partial e^k_{\alpha} \stackrel{\phi_\alpha^k}{\rightarrow} X^{k-1} \stackrel{q}{\rightarrow} X^{k-1} / (X^{k-1} - e^{k-1}_\beta) \stackrel{\sim}{\rightarrow} \mathbb{S}^{k-1}$, which can be interpreted as the winding number of the boundary of $e^k_{\alpha}$ around the cell $e^{k-1}_\beta$ upon attachment.

\begin{proposition}
    The boundary operator satisfies the following relation
    \begin{align}\label{prop:boundary-of-boundary}
        \partial_{k-1} \circ \partial_k = 0.
    \end{align}
\end{proposition}

Next, we introduce the notion of a cochain, which will be essential to define our GP.
\begin{definition}[Cochain]
A {\em cochain} $f \in C^k(X)$ on $X$ is a ``linear"\footnote{The term ``linear" is not quite accurate as the underlying space is not a vector space. This should in fact be viewed as a group homomorphism $C_k(x) \rightarrow \mathbb{R}$ to be more precise.} function assigning a number to each $k$-chain. That is,
\begin{align}
    f : C_k(X) \rightarrow \mathbb{R},  \text{ such that } f\Big(\sum_{\alpha=1}^{N_k} n_\alpha e^k_\alpha\Big) = \sum_{\alpha=1}^{N_k} n_\alpha f(e_\alpha^k),
\end{align}
where the choice of $f(e_\alpha^k) \in \mathbb{R}$ for all $\alpha=1, \ldots, N_k$ defines the cochain. This itself defines a free Abelian group with generators $(e_\alpha^k)^*$ such that $(e_\alpha^k)^*e_\beta^k = \delta_{\alpha\beta}$. Hence we can represent $f \in C^k(X)$ as $f = \sum_{\alpha=1}^{N_k} f_\alpha (e_\alpha^k)^*$, where $f_\alpha := f(e_\alpha^k) \in \mathbb{R}$.
\end{definition}

\begin{remark}
In general, the co-domain of $C^k(X)$ can be taken to be any Abelian group $G$ \cite{Hatcher2001}, although we don't consider this general case here. In our case, when $G = \mathbb{R}$, the cochains also admit a vector space structure.
\end{remark}

The boundary operator on $k$-chains induces an analogous operator on the cochains, referred to as the co-boundary operator.
\begin{definition}[Coboundary operator]
    The co-boundary operator $d_k : C^k(X) \rightarrow C^{k+1}(X)$ for $k=0, \ldots, n$ is defined by the relation
    \begin{align} \label{eq:coboundary-def}
        d_k f(c) = f(\partial_{k+1} c), \text{ for all } f \in C^k(X) \text{ and } c \in C_{k+1}(X),
    \end{align}
    for $k < n$ and $d_k f \equiv 0$ for $k = n$.
\end{definition}

Now, given some positive ``weights" $w_\alpha^k : e^k_\alpha \rightarrow \mathbb{R}_{>0}$ on each $k$-cell, one can define an inner-product on the cochains. For $f, g \in C^k(X)$, we define the inner-product
\begin{align} \label{eq:k-cochain-inner-product}
    \left<f, g\right>_{L^2(w^k)} := \sum_\alpha w_\alpha^k \,f(e^k_\alpha) \,g(e^k_\alpha).
\end{align}
This allows us to define an operator $d_k^* : C^{k+1}(X) \rightarrow C^{k}(X)$ as follows.

\begin{definition}[Codifferential operator]
    Given a family of inner-products $\left<\cdot, \cdot \right>_{L^2(w^k)}$ on the space of $k$ cochains for all $k = 0, \ldots, n$, we define the co-differential operator $d_k^* : C^{k+1}(X) \rightarrow C^{k}(X)$ by
    \begin{align} \label{eq:codifferential-def}
        \left<d_k^* f, g\right>_{L^2(w^{k})} = \left<f, d_{k}g\right>_{L^2(w^{k+1})},
    \end{align}
    for any $f \in C^{k+1}(X)$ and $g \in C^{k}(X)$.
\end{definition}

The coboundary and codifferential opertors satisfy the following key property.
\begin{proposition}
    The co-boundary and co-differential operators satisfy the following relation.
    \begin{align}\label{prop:coboundary-of-coboundary}
        d_{k+1} \circ d_k = 0, \qquad d_{k}^* \circ d_{k+1}^* = 0.
    \end{align}
\end{proposition}

Finally, we define the Hodge-Laplacian operator on the cochains as follows.
\begin{definition}[Hodge Laplacian]\label{def:hodge-laplacian}
    The Hodge-Laplacian $\Delta_k : C^k(X) \rightarrow C^k(X)$ is defined as
    \begin{align}\label{eq:hodge-laplacian}
        \Delta_k = d_{k-1} \circ d_{k-1}^* + d_{k}^* \circ d_{k}.
    \end{align}
\end{definition}

\subsection{Gaussian processes on cellular complexes}
Here we define our notion of Gaussian processes on a cellular complex, which may be viewed as a direct sum of random cochains satisfying some notion of Gaussianity. We make this notion precise by first discussing what we mean by a Gaussian random cochain.

\begin{definition}[Gaussian random cochain]
    Let $(\Omega, \mathcal{F}, \mathbb{P})$ be a probability space. A random variable $f_\bullet : \Omega \rightarrow C^k(X)$ is called a {\em Gaussian random cochain} if for any chain $c \in C_k(X)$, the random variable $f_\bullet (c) : \Omega \rightarrow \mathbb{R}$ is a univariate Gaussian random variable.
\end{definition}

To facilitate computations, we wish to characterise this object using some notion of a mean and a kernel. To define a mean is straightforward -- this is just going to be a cochain. The appropriate notion of a kernel is less trivial and is given as follows.

\begin{definition}[Kernels on cochains]
    A {\em kernel} on a $k$-cochain $C^k(X)$ is defined as a group bi-homomorphism $k : C_k(X) \times C_k(X) \rightarrow \mathbb{R}$ such that for any set of $k$-chains $c_1, \ldots, c_m \in C_k(X)$, we have
    \begin{align}
        \sum_{i,j = 1}^m k(c_i, c_j) \geq 0.
    \end{align}
\end{definition}

We now prove that a Gaussian random cochain can indeed be characterised by a mean cochain and a kernel.
\begin{theorem}
    A Gaussian random cochain $f_\bullet : \Omega \rightarrow C^k(X)$ is fully characterised by a mean $\mu \in C^k(X)$ and a kernel $k : C_k(X) \times C_k(X) \rightarrow \mathbb{R}$.
\end{theorem}
\begin{proof}
    TODO. Follow the strategy of proof of Theorem 21, \cite{hutchinson2021vector}.
\end{proof}

To represent the mean and the kernel numerically, we fix a labelling $\alpha \mapsto e^k_\alpha$ of the $k$-chains so that we can represent chains and cochains using vectors (see Section \ref{sec:numerical-representation}). Giving this labelling, we can represent a mean $\mu : C_k(X) \rightarrow \mathbb{R}$ and a kernel $k : C_k(X) \times C_k(X) \rightarrow \mathbb{R}$ as 
\begin{align}
    \boldsymbol{\mu} =
    \begin{bmatrix}
    \mu(e^k_1) \\
    \vdots \\
    \mu(e^k_{N_k})
    \end{bmatrix} \in \mathbb{R}^{N_k},
    \qquad
    \mathbf{K} =
    \begin{bmatrix}
    k(e^k_1, e^k_1) & \cdots & k(e^k_1, e^k_{N_k}) \\
    \vdots & \ddots & \vdots \\
    k(e^k_{N_k}, e^k_1) & \cdots & k(e^k_{N_k}, e^k_{N_k})
    \end{bmatrix} \in \mathbb{R}^{N_k \times N_k}.
\end{align}
Thus, as a representation in $\mathbb{R}^{N_k}$, the Gaussian random cochain is simply a multivariate Gaussian $\mathcal{N}(\boldsymbol{\mu}, \mathbb{K})$, with the additional property that it transforms equivariantly under re-labelling.
Below, we show how the kernel matrix transforms given another labelling of the $k$-cells.

\begin{proposition}
Consider two different labellings of the $k$-cells $\alpha \mapsto e^k_\alpha$ and $\beta \mapsto f^k_\beta$ such that $f^k_\beta = e^k_{\sigma(\alpha)}$ for some permutation $\sigma \in S_{N_k}$ of the set $\{1, \ldots, N_k\}$. Let  $\boldsymbol{\mu}_\alpha, \boldsymbol{\mu}_\beta$ be the vector representations of the mean and $\mathbf{K}_\alpha, \mathbf{K}_\beta$ be the matrix representations of the kernel with respect to the two labellings. Representing the permutation as a matrix $\mathbf{\Sigma}_{\alpha \beta}$, we have the following relations
\begin{align}
    \boldsymbol{\mu}_\beta = \mathbf{\Sigma}_{\alpha\beta}^\top \boldsymbol{\mu}_\alpha, \qquad \mathbf{K}_\beta = \mathbf{\Sigma}_{\alpha\beta}^\top \mathbf{K}_\alpha \mathbf{\Sigma}_{\alpha \beta}.
\end{align}
\end{proposition}
\begin{proof}
    TODO.
\end{proof}

In the above, we only considered a Gaussian random cochain of a single order $k$. We can extend this notion in a rather straightforward manner to Gaussian processes on direct sums of cochains of different orders. We take this as our definition of a Gaussian process on a cellular complex.

\begin{definition}[Gaussian processes on cellular complexes] \label{def:gp-on-cell-complex}
    Let $X$ be an $n$-dimensional cellular complex. We define a Gaussian process on $X$ as a random variable $f_\bullet : \Omega \rightarrow \bigoplus_{k=0}^n C^k(X)$ such that for $c = (c_0, \ldots, c_n) \in \bigoplus_{k=0}^n C_k(X)$, the random variable $f_\bullet (c) : \Omega \rightarrow \mathbb{R}$ is univariate Gaussian.
\end{definition}

As before, we have an appropriate notion of a kernel on this space as a group bi-homomorphism
\begin{align}
k : \bigoplus_{k=0}^n C_k(X) \times \bigoplus_{k=0}^n C_k(X) \rightarrow \mathbb{R},
\end{align}
satisfying $\sum_{i,j} k(c_i, c_j) \geq 0$ for $c_i, c_j \in \bigoplus_{k=0}^n C_k(X)$. We thus have the following result characterising GPs on cellular complexes via a mean and a kernel.

\begin{theorem}
    A Gaussian process on a cellular complex $X$ is fully characterised by a mean $\mu \in \bigoplus_{k=0}^n C^k(X)$ and a kernel $k : \bigoplus_{k=0}^n C_k(X) \times \bigoplus_{k=0}^n C_k(X) \rightarrow \mathbb{R}$.
\end{theorem}
\begin{proof}
    TODO.
\end{proof}

\begin{remark}
    To avoid confusion, it is important to note that our notion of a Gaussian process on a cellular complex $X$ is {\em not} defined as a Gaussian process $X \rightarrow \mathbb{R}$ (i.e. a GP on the topological space $X$), but rather as a direct sum of Gaussian random cochains.
\end{remark}

\subsection{Learning and inference}
Here, we describe how to perform model learning and inference in the current setting. Let us consider a dataset $\{(c_i, y_i)\}_{i=1}^N$ consisiting of $y_i \in \mathbb{R}$ and $c_i \in C_k(X)$, modelled as noisy signals of a Gaussian random cochain $f$:
\begin{align}
    y_i = f(c_i) + \epsilon_i.
\end{align}
Here, the $\epsilon_i$'s are i.i.d. mean-zero Gaussian noise with variance $\sigma^2$. The marginal likelihood of observing this data is given by
\begin{align}
    p(\vec{y} | \vec{c}, \vec{\theta}) &= \int p(\vec{y} | f(\vec{c}), \vec{\theta}) p(f(\vec{c}) | \vec{\theta}) \,\mathrm{d} f(\vec{c}) \\
    &= \int \mathcal{N}(\vec{y} | f(\vec{c}), \sigma^2) \,\mathcal{N}(f(\vec{c}) | \vec{\mu}_{\mat{f}}(\vec{\theta}), \mat{K}_{\mat{f}\mat{f}}(\vec{\theta})) \,\mathrm{d} f(\vec{c}) \\
    &= \mathcal{N}(\vec{y} | \vec{\mu}_{\mat{f}}(\vec{\theta}), \mat{K}_{\mat{f}\mat{f}}(\vec{\theta}) + \sigma^2 \mat{I}),
\end{align}
where
\begin{align}
    [\vec{\mu}_{\mat{f}}]_i = \mu(c_i), \quad [\mat{K}_{\mat{f}\mat{f}}]_{ij} = k(c_i, c_j).
\end{align}
Thus, we can learn the model hyperparameters $\vec{\theta}$ by minimising the negative log-likelihood loss
\begin{align}
    \mathcal{L}(\vec{\theta}) &= - \log p(\vec{y} | \vec{c}, \vec{\theta}) \\
    &= \frac12 (\vec{y} - \vec{\mu}_{\mat{f}}(\vec{\theta}))^\top (\mat{K}_{\mat{f}\mat{f}}(\vec{\theta}) + \sigma^2 \mat{I})^{-1} (\vec{y} - \vec{\mu}_{\mat{f}}(\vec{\theta})) + \frac12 \log |\mat{K}_{\mat{f}\mat{f}}(\vec{\theta}) + \sigma^2 \mat{I}|.
\end{align}

Now, given some test points $\vec{c}^*$, we can make inference on this new point by taking
\begin{align}
    p(f(\vec{c}^*) | \vec{y}, \vec{c}) &= \int p(\vec{f}^*| \vec{f}) p(\vec{f} | \vec{y}) \mathrm{d} \vec{f} \\
    &\propto \int p(\vec{y} | \vec{f}) \,p(\vec{f}^*, \vec{f}) \mathrm{d} \vec{f} \\
    &= \mathcal{N}(\vec{f}^* | \vec{\mu}_{\mat{f}^*|\mat{y}}, \mat{\Sigma}_{\mat{f}^*|\mat{y}}),
\end{align}
where
\begin{align}
    \vec{\mu}_{\mat{f}^*|\mat{y}} = \vec{\mu}_{\mat{f}^*} + \mat{K}_{\mat{f}^*\mat{f}}(\mat{K}_{\mat{f}\mat{f}} + \sigma^2 \mat{I})^{-1}(\vec{y} - \vec{\mu}_{\mat{f}}), \quad \mat{\Sigma}_{\mat{f}^*|\mat{y}} = \mat{K}_{\mat{f}^*\mat{f}^*} - \mat{K}_{\mat{f}^*\mat{f}}(\mat{K}_{\mat{f}\mat{f}} + \sigma^2 \mat{I})^{-1}\mat{K}_{\mat{f}\mat{f}^*}.
\end{align}

We can also consider learning with inducing points... (discuss inducing points).

In our examples, the training and test points $\vec{c}, \vec{c}^*$ will be given by a single cell and rarely do we consider inputs that are more general chains. The general chain case may be viewed as interdomain data points since evaluation on a general chain is analogous to taking integrals in the continuous setting.

\subsection{Matérn Gaussian random cochain}
Let $C^k(X)$ be the space of cochains on $X$ equipped with the inner product $\left<\cdot, \cdot\right>_{L^2(w^k)} : C^k(X) \times C^k(X) \rightarrow \mathbb{R}$. Following the construction in \cite{Borovitskiy2021}, we define a Gaussian random $k$-cochain of the Matérn type as a solution to the stochastic system (which we shall make more precise next)
\begin{align}\label{eq:matern-gp-def}
    \left(\frac{2\nu}{\kappa^2} + \Delta_k\right)^{\nu/2} f = \mathcal{W},
\end{align}
where $f \in C^k(X)$ and $\mathcal{W} : \Omega \rightarrow C^k(X)$ is a Gaussian random cochain satisfying $\mathbb{E}[\mathcal{W}(c_0)] = 0$ and $\mathbb{E}[\mathcal{W}(c_1) \mathcal{W}(c_2)] = \left<\tilde{c}_1, \tilde{c}_2\right>_{L^2(w^k)}$ for any $c_0, c_1, c_2 \in C_k(X)$. Recall that $\Delta_k : C^k(X) \rightarrow C^k(X)$ is the Hodge-Laplacian operator on $(C^k(X), \left<\cdot, \cdot\right>_{L^2(w^k)})$ (see Definition \ref{def:hodge-laplacian}). 

The linear operator $\left(\frac{2\nu}{\kappa^2} + \Delta_k\right)^{\nu/2}$ is to be understood as an operation in frequency space, by the following construction.
Let $\{(\lambda_i, u_i)\}_{i=1}^{N_k}$, be solutions to the eigenproblem $\Delta_k u_i = \lambda_i u_i$ such that the ``eigencochains" $\{u_i\}_{i=1}^{N_k}$ are orthonormal in $L^2(w^k)$ (can be checked). Representing $f$ as $f = \sum_i \left<f, u_i\right>_{L^2(w^k)} u_i$, we define
\begin{align} \label{eq:matern-operator}
    \left(\frac{2\nu}{\kappa^2} + \Delta_k\right)^{\nu/2} f := \sum_{i=1}^{N_k} \left(\frac{2\nu}{\kappa^2} + \lambda_i \right)^{\nu/2} \left<f, u_i\right>_{L^2(w^k)} u_i.
\end{align}

Using the numerical representation in Section \ref{sec:numerical-representation}, let $\mathbf{\Lambda} = \mathtt{diag}(\lambda_1, \ldots, \lambda_{N_k})$ be the diagonal matrix of eigenvalues, $\boldsymbol{u}_i$ the vector representation of the eigencochain $u_i$, and $\mathbf{U} = (\boldsymbol{u}_1, \ldots, \boldsymbol{u}_{N_k})$. Then \eqref{eq:matern-operator} may be represented numerically as
\begin{align}
    \mathbf{L} := \mathbf{U} \left(\frac{2\nu}{\kappa^2}\mathbf{I} + \mathbf{\Lambda}\right)^{\nu/2} \mathbf{U}^\top
\end{align}
and \eqref{eq:matern-gp-def} implies
\begin{align}
    &\mathbf{L} \boldsymbol{f} = \boldsymbol{W}, \quad \boldsymbol{W} \sim \mathcal{N}(0, \mathtt{diag}(
    \boldsymbol{w}^k)^{-1}) \\
    &\Leftrightarrow \boldsymbol{f} = \mathbf{L}^{-1} \boldsymbol{W} \sim \mathcal{N}(0, \mathbf{L}^{-1}\mathtt{diag}(
    \boldsymbol{w}^k)^{-1}\mathbf{L}^{-\top}).
\end{align}

This gives us a matrix representation of the kernel as follows
\begin{align}
    \mathbf{K} &= \mathbf{L}^{-1}\mathtt{diag}(
    \boldsymbol{w}^k)^{-1}\mathbf{L}^{-\top} \\
    &= \mathbf{U} \left(\frac{2\nu}{\kappa^2}\mathbf{I} + \mathbf{\Lambda}\right)^{-\nu/2} \underbrace{\mathbf{U}^\top \, \mathtt{diag}(
    \boldsymbol{w}^k)^{-1} \mathbf{U}}_{=\mat{I}} \left(\frac{2\nu}{\kappa^2}\mathbf{I} + \mathbf{\Lambda}\right)^{-\nu/2} \mathbf{U}^\top \\
    &= \mathbf{U} \left(\frac{2\nu}{\kappa^2}\mathbf{I} + \mathbf{\Lambda}\right)^{-\nu} \mathbf{U}^\top. \label{eq:matern-kernel-representation}
\end{align}
We see that the inverse in \eqref{eq:matern-kernel-representation} is applied to a diagonal matrix so most of the computational cost is dominated by the computation of the eigenvalues and vectors $\mathbf{\Delta}_k \boldsymbol{u}_i = \lambda_i \boldsymbol{u}_i$.

\subsection{Reaction-diffusion kernel}
We introduce a new kernel defining a Gaussian process on a cellular complex $X$ (in the sense of Definition \ref{def:gp-on-cell-complex}), which we coin the {\em reaction-diffusion kernel} due to its similarity with the reaction-diffusion PDE. Hereafter, we will operate with the numerical representation of chains and cochains by fixing a labelling of the cells $\{e_\alpha^k\}_{\alpha=1}^{N_k}$.

First, we define the {\em Dirac matrix} as a square matrix of dimension $n = N_1 + N_2 + \ldots + N_d$ defined as
\begin{equation*}
\vec{\mathcal{D}} = 
    \begin{pmatrix}
        \mathbf{0} & \mathbf{B}_1 &  \cdots & \mathbf{0} \\
        \mathbf{B}_1^\top & \ddots & \ddots &  \vdots \\
        \vdots &  \ddots & \ddots & \mathbf{B}_d \\
        \mathbf{0} & \dots & \mathbf{B}_d^\top & \mathbf{0}
    \end{pmatrix}.
\end{equation*}
The Dirac matrix is connected to the Hodge Laplacian matrices under the observation that  
\begin{equation*}
\vec{\mathcal{D}}^2 = 
\vec{\mathcal{L}} = 
\begin{pmatrix}
    \mathbf{\Delta}_1 & \dots & \mathbf{0} \\
    \vdots & \ddots & \vdots \\
    \mathbf{0} & \dots & \mathbf{\Delta}_d
\end{pmatrix},
\end{equation*}
where we used the boundary property \eqref{eq:boundary-of-boundary-matrix}. 
Letting $\vec{\mathcal{L}} = \mat{U} \mat{\Lambda} \mat{U}^\top$ be the eigendecomposition of the symmetric matrix $\vec{\mathcal{L}}$, we have $\vec{\mathcal{D}} = \mat{U} \mat{\Lambda}^{\frac12} \mat{U}^\top$. In particular, the eigenvectors $\mat{U}$ of $\vec{\mathcal{D}}$ are orthogonal even if $\vec{\mathcal{D}}$ is not symmetric.

Now let us consider the stochastic system
\begin{align}
    \left(\mathbf{R} - \mathbf{C} \boldsymbol{\mathcal{D}} +  \mathbf{D} \boldsymbol{\mathcal{L}} \right)^\alpha \vec{f} = \vec{W},
\end{align}
where $\vec{W}$ is a standard Gaussian (TODO: also consider the case of general weights), which in general yields the kernel, expressed formally as
\begin{align} \label{eq:reaction-diffusion-kernel}
    \mathbf{K} = (\mathbf{R} - \mathbf{C} \boldsymbol{\mathcal{D}} +  \mathbf{D} \boldsymbol{\mathcal{L}})^{-\alpha} (\mathbf{R} - \mathbf{C} \boldsymbol{\mathcal{D}}^\top +  \mathbf{D} \boldsymbol{\mathcal{L}})^{-\alpha}.
\end{align}

We refer to this as the {\em reaction-diffusion kernel} as the linear operator $\mathbf{R} - \mathbf{C} \boldsymbol{\mathcal{D}} +  \mathbf{D} \boldsymbol{\mathcal{L}}$ defines the RHS of the reaction-diffusion equation
\begin{equation}
    \frac{\mathrm{d}\vec{f}}{\mathrm{d}t} = \mathbf{R} \vec{f} - \mathbf{C} \vec{\mathcal{D}} \vec{f} + \mathbf{D} \vec{\mathcal{L}} \vec{f}.
\end{equation}
Here, the first term models the reaction, representing the creation of destruction of a quantity, the third term models the diffusion of quantity on a single level, and finally the second term models the cross-diffusion of quantities between different levels. The cross-diffusion term is crucial here for introducing coupling between cochains of different dimensions.

We consider two special cases. In the case $\mat{R} = 2\nu / \kappa^2 \mat{I}$, $\mat{C} = \mat{0}$, $\mat{D} = \mat{I}$ and $\alpha = \nu/2$, we obtain
\begin{align}
    \left(\frac{2\nu}{\kappa^2} \mat{I} + \boldsymbol{\mathcal{L}} \right)^{\nu/2} \vec{f} = \vec{W},
\end{align}
which is analogous to the expression \eqref{eq:matern-gp-def} for a Matérn Gaussian random cochain. However, since the matrix $\frac{2\nu}{\kappa^2} \mat{I} + \boldsymbol{\mathcal{L}}$ is block-diagonal, this does not introduce any coupling between the cochains of different dimensions. Thus, this may be thought of as a system of independent Matérn Gaussian random cochains.

In the second case, we let $\mat{R} = m^2 \mat{I}$, $\mat{C} = \mat{I}$, $\mat{D} = \mat{0}$ and $\alpha = 1$, giving us
\begin{align}
    \left(m \mat{I} - \vec{\mathcal{D}} \right) \vec{f} = \vec{W}.
\end{align}
This has the kernel
\begin{align}
    \mat{K} &= (m \mat{I} - \vec{\mathcal{D}})^{-1} (m \mat{I} - \vec{\mathcal{D}})^{-T} \\
    &= \mat{U}(m \mat{I} - \mat{\Lambda}^\frac12)^{-1} \underbrace{\mat{U}^\top\mat{U}}_{= \mat{I}} (m \mat{I} - \mat{\Lambda}^\frac12)^{-1}\mat{U}^\top \\
    &= \mat{U}(m \mat{I} - \mat{\Lambda}^\frac12)^{-2}\mat{U}^\top \\
    &= (m \mat{I} - \vec{\mathcal{D}})^{-2}.
\end{align}
A similar prior (in the form of a regularizer) is considered in the work \cite{calmon2023dirac} for retrieving topological signals supported on $k$-cells for $k \leq 2$.

In general, when we have $\mat{R} = r \mat{I}$, $\mat{C} = c \mat{I}$ and $\mat{D} = d \mat{I}$ for some constants $r, c, d > 0$, upon considering the eigendecompositions of $\mat{I}, \vec{\mathcal{D}}, \vec{\mathcal{L}}$ and noting that they have a common eigenbasis $\mat{U}$, \eqref{eq:reaction-diffusion-kernel} can be re-expressed as
\begin{align}
    \mathbf{K} &= (\mathbf{R} - \mathbf{C} \boldsymbol{\mathcal{D}} +  \mathbf{D} \boldsymbol{\mathcal{L}})^{-\alpha} (\mathbf{R} - \mathbf{C} \boldsymbol{\mathcal{D}}^\top +  \mathbf{D} \boldsymbol{\mathcal{L}})^{-\alpha} \\
    &= \mat{U} (r \mat{I} - c \mat{\Lambda}^\frac12 + d \mat{\Lambda})^{-2\alpha} \mat{U}^\top,
\end{align}
which can be defined for arbitrary $\alpha \in \mathbb{R}_+$. The main computational burden is in computing the eigendecomposition, which may be prohibitive for large $n$. Thus, a downside of this approach is that it is not applicable for very large complexes (TODO: in the experiments, see if reducing the order of SVD yield results that are any good).

\begin{remark}
Noting that $\vec{\mathcal{L}} = \vec{\mathcal{D}}^2$, we see that the reaction diffusion operator $\mathbf{R} - \mathbf{C} \boldsymbol{\mathcal{D}} +  \mathbf{D} \boldsymbol{\mathcal{L}}$ is a second order polynomial in $\mathcal{D}$. One can further generalise this to consider higher order operators of the form $p_r(\vec{\mathcal{D}})$ where $p_r$ is an arbitrary polynomial of degree $r$.
\end{remark}

\section{Experimental details}

\section{Experimental details}
\section{Experimental details}

\section{FORMATTING INSTRUCTIONS}

To prepare a supplementary pdf file, we ask the authors to use \texttt{aistats2024.sty} as a style file and to follow the same formatting instructions as in the main paper.
The only difference is that the supplementary material must be in a \emph{single-column} format.
You can use \texttt{supplement.tex} in our starter pack as a starting point, or append the supplementary content to the main paper and split the final PDF into two separate files.

Note that reviewers are under no obligation to examine your supplementary material.

\section{MISSING PROOFS}

The supplementary materials may contain detailed proofs of the results that are missing in the main paper.

\subsection{Proof of Lemma 3}

\textit{In this section, we present the detailed proof of Lemma 3 and then [ ... ]}

\section{ADDITIONAL EXPERIMENTS}

If you have additional experimental results, you may include them in the supplementary materials.

\subsection{The Effect of Regularization Parameter}

\textit{Our algorithm depends on the regularization parameter $\lambda$. Figure 1 below illustrates the effect of this parameter on the performance of our algorithm. As we can see, [ ... ]}

\vfill